\newtheorem{example}{Example}
\newtheorem{theorem}{Theorem}
\newtheorem{lemma}{Lemma}
\newtheorem{observation}{Observation}
\newtheorem{property}{Property}
\newtheorem{notation}{Notation}
\newtheorem{proposition}{Proposition}
\newtheorem{definition}{Definition}
\newcommand{\agg}{\ensuremath{\operatorname{agg}}}
\newcommand{\infl}{\ensuremath{\operatorname{infl}}}
\newcommand{\core}{\ensuremath{\mathit{Core}}}
\title{Contestability in Quantitative Argumentation}
\author{%
Xiang Yin$^1$\and
Nico Potyka$^2$\and
Antonio Rago$^1$\and
Timotheus Kampik$^3$\and
Francesca Toni$^1$\\
\affiliations
$^1$Department of Computing, Imperial College London, UK\\
$^2$School of Computer Science and Informatics, Cardiff University, UK\\
$^3$Department of Computing Science, Umeå University, Sweden\\
\emails
\{xy620, ft, a.rago\}@imperial.ac.uk,
potykan@cardiff.ac.uk,
timotheus.kampik@umu.se
}
\begin{document}

\maketitle

\begin{abstract}
Contestable AI requires that AI-driven decisions align with human preferences.
While various forms of argumentation have been shown to support contestability, Edge-Weighted Quantitative Bipolar Argumentation Frameworks (EW-QBAFs) have received little attention.
In this work, we show how
EW-QBAFs can be deployed for this purpose. 
Specifically, we introduce the \emph{contestability problem} for EW-QBAFs, which asks how to modify \emph{edge weights} (e.g., preferences)
to achieve a desired \emph{strength} for a specific argument of interest (i.e., a \emph{topic argument}).
To address this 
problem, we propose  \emph{gradient-based relation attribution explanations (G-RAEs)}, which quantify the sensitivity of the topic argument's 
strength to changes in individual edge weights, thus providing interpretable guidance for weight adjustments towards contestability.
Building on G-RAEs, we develop an iterative algorithm that progressively adjusts the edge weights to attain the desired 
strength.
We evaluate our approach experimentally 
on synthetic EW-QBAFs that simulate the structural characteristics of personalised recommender systems and multi-layer perceptrons,
and demonstrate that it can solve the problem effectively.
\end{abstract}

\section{Introduction}
Contestable AI~\cite{lyons2021conceptualising,alfrink2023contestable,constable_AI_need_AFs} 
studies AI systems that allow users to challenge their outputs. This is particularly relevant when these outputs deviate from 
expected or desirable results. 
Such deviations may arise from model errors or misalignments with 
human preferences.
To enable contestable AI, recent work~\cite{constable_AI_need_AFs} advocates  computational argumentation~(see \cite{
argumentation-survey} for an overview) as a promising paradigm, 
due to its inherent 
abilities to support 
conflict resolution, explainability and interactivity (e.g., as in \cite{cocarascu2019extracting,freedman2025argumentative,Rago_23,russo2023causal}), 
given that these properties have been acknowledged as 
important for contestable AI~\cite{lyons2021conceptualising,alfrink2023contestable,almada2019human}.

Among various forms of computational argumentation, Edge-Weighted Quantitative Bipolar Argumentation Frameworks (EW-QBAFs)~\cite{mossakowski2018modular} 
allow to naturally model reasoning over conflicting and supporting information in a quantitative way. An EW-QBAF typically consists of four components: a set of \emph{arguments}, relations (of \emph{attack} and \emph{support}), \emph{base scores} of arguments, and \emph{edge weights} of relations. The \emph{strengths} of arguments are evaluated using EW-QBAF semantics, depending on the base scores, edge weights and the strengths of their attackers and supporters. EW-QBAFs can be applied in various domains. For example, some neural networks can be understood as EW-QBAFs
, with neurons as arguments, biases as base scores, connections as relations, and connection weights as edge weights~\cite{potyka2021interpreting}. 
(Restrictions of) EW-QBAFs
can also naturally model personalised recommender systems (PRSs) \cite{cocarascu2019extracting,battaglia2024integrating,Rago_25}. For example, Figure~\ref{fig_qbaf} shows a hierarchical movie recommender system based on EW-QBAFs: the top-level argument reflects the overall evaluation of a movie, influenced by criteria arguments such as acting and writing, which are further influenced by sub-criteria arguments like specific actors. Base scores may be extracted from movie reviews, while edge weights may capture individual user preferences.

\begin{figure}[t]
    \centering
    \includegraphics[width=1.0\linewidth]{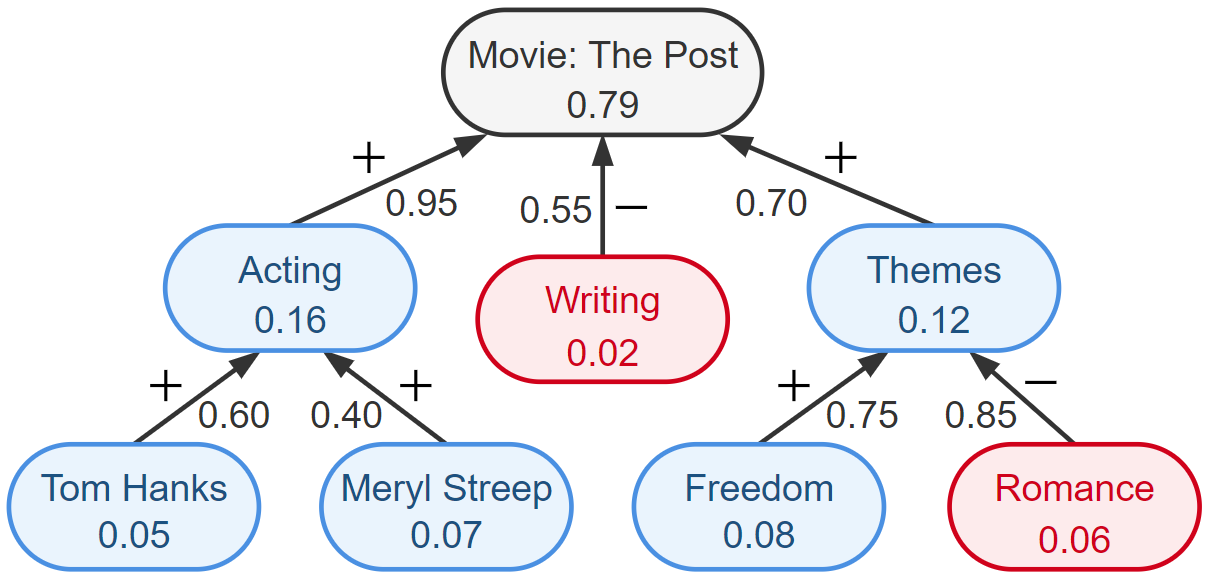}
    \caption{Example of EW-QBAF-based movie recommendation (adapted from \protect\cite{cocarascu2019extracting}). Blue and red nodes represent supporters and attackers, respectively. Edges labelled + and - indicate support and attack relations, respectively. Numeric values on the nodes and edges indicate their base scores and weights, respectively.}
    \label{fig_qbaf}
\end{figure}

Despite the inherent interpretability and broad applicability of EW-QBAFs, little attention has been given to their contestability, 
specifically systematically adjusting the edge weights to achieve a desired strength for a particular argument of interest (the \emph{topic argument}). This problem is crucial for contesting, refining, and personalising the outcome of EW-QBAFs. For example, 
in EW-QBAF-based PRSs, edge weights can be adapted to better reflect individual human preferences.

We study the following \emph{contestability problem}: \textbf{given a topic argument $\alpha$ and a desired strength $s$, how can the edge weights be modified such that $\alpha$'s strength is $s$?} 
To facilitate its solution, we introduce a notion of \emph{gradient-based relation attribution explanations (G-RAEs)} for EW-QBAFs. Intuitively, G-RAEs quantify the sensitivity of the topic argument's strength to changes in individual edge weights. They thus can serve as interpretable indicators, suggesting how much an edge weight should be increased or decreased. Based on G-RAEs, we design an iterative algorithm that progressively adjusts the edge weights to reach the desired strength.
Experimental results on synthetic EW-QBAFs, designed to simulate multi-layer perceptron (MLP) and PRS structures, demonstrate that our heuristic algorithm is \emph{effective} in attaining the desired strength, 
runs in polynomial time,
and 
scales well
to large and dense EW-QBAFs.

In summary, our main contributions are as follows:
\begin{itemize}
\item We introduce new properties for EW-QBAFs that explain the effect of edge weight changes (Section~\ref{sec_EWQBAF_properties}). 
\item We formally define the contestability problem for EW-QBAFs (Section~\ref{sec_contestability}).
\item We formally define G-RAEs and study their satisfaction of existing properties (Section~\ref{sec_grae}).
\item We develop an algorithm based on G-RAEs to solve the contestability problem (Section~\ref{sec_algo}).
\item We empirically show the effectiveness and scalability of our algorithm (Section~\ref{sec_eval}).
\end{itemize}

All technical proofs and the implementation code are provided in the supplementary material (SM).

\section{Related Work}
\label{sec_related_work}

Various proposals for designing contestable AI systems have been made in
the literature \cite{hirsch2017designing,almada2019human,alfrink2023contestable,russo2023causal}.
A recent position paper \cite{constable_AI_need_AFs} classified contestable AI solutions along three dimensions: 
the contested entities, the contesting entities, and the contestation method.
In our work, the contested entities are the topic arguments' strengths; the contesting entities are humans with different outlooks (e.g., users of PRSs or domain experts using MLPs); and the contesting method is based on the novel notion of G-RAEs. 
Argumentation naturally supports contestability.
For example, \cite{Shao_20} proposed structured argumentation-based learning methods that leverage the expressive power of universal function approximators, such as neural networks, while preserving a broad range of tractable inference routines.
These methods can provide argumentative explanations for the decision-making process, which can in turn be contested to facilitate model revision, e.g., for a correct output based on incorrect reasoning.
Similar to our work, \cite{freedman2025argumentative} employed QBAFs to enhance explainability and contestability in the context of claim verification for large language models (LLMs). In this work, the strength of a topic argument can be contested either through base score modification, or by introducing additional arguments and relations.
Our contestability problem is also  related to
the counterfactual problem introduced in \cite{CE_QArg}.
The counterfactual problem studies how base scores can be changed to change a decision (e.g., revising a credit score in loan approval systems), 
while our contestability problem focuses on modifying edge weights. The former can be seen as adjusting the apriori belief in an 
argument, whereas the latter corresponds to adjusting the relevance of an argument for a discussion.

In a wider sense, our work is related to attribution explanations in machine learning \cite{baehrens2010explain,ribeiro2016should,lundberg2017unified},
as well as
Argument Attribution Explanations (AAEs)~\cite{vcyras2022dispute,AAE_ECAI,kampik2024contribution} and Relation Attribution Explanations (RAEs)~\cite{amgoud2017measuring,RAE_IJCAI}
in argumentation.
AAEs and RAEs aim to quantify the influence of individual arguments and relations, respectively, on a topic argument.
For example, gradient-based AAEs assess this influence by computing the gradient of the topic argument's strength with respect to the influencing argument's base score, thereby capturing the former's sensitivity to the latter.
In contrast, Shapley-based RAEs quantify the influence of a relation (i.e., an attack or support) on the topic argument by leveraging Shapley values from cooperative 
game theory~\cite{shapley1951notes}.
Our G-RAEs adopt the idea of gradient-based AAEs, but focus on assessing the importance of relations, as RAEs do.

\section{Preliminaries}
\label{sec_preliminaries}
We consider EW-QBAFs similar to~\cite{mossakowski2018modular,potyka2021interpreting}.
\begin{definition}[Edge-Weighted QBAF]
\label{def_QBAF}
An \emph{Edge-Weighted QBAF}
is a quintuple $\mathcal{Q}=\left\langle\mathcal{A}, \mathcal{R}^{-}, \mathcal{R}^{+}, \tau, w  \right\rangle$ where:
\begin{itemize}
    \item $\mathcal{A}$ is a finite set of \emph{arguments};
    \item $\mathcal{R}^{-} \subseteq \mathcal{A} \times \mathcal{A}$ is a binary \emph{attack} relation;
    \item $\mathcal{R}^{+} \subseteq \mathcal{A} \times \mathcal{A}$ is a binary \emph{support} relation;
    \item $\mathcal{R}^{-} \cap \mathcal{R}^{+} = \emptyset$;
    \item $\tau: \mathcal{A} \rightarrow  [0,1]$ is a \emph{base score function};
    \item $w: \mathcal{{R}^{-}\cup{R}^{+}} \rightarrow [0,1]$ is an \emph{edge weight function}.
\end{itemize}
\end{definition}

\begin{notation}
    Let $\mathcal{R} = \mathcal{R}^{-} \cup \mathcal{R}^{+}$.
\end{notation}
EW-QBAF semantics assign a strength to every argument.
\begin{definition}[Edge-Weighted Gradual Semantics]
\label{def_semantics}
An \emph{edge-weighted gradual semantics} is a function 
$\sigma: \mathcal{A} \rightarrow  [0,1] \cup \{\bot\}$. We call
$\sigma(\alpha)$ the \emph{strength} of $\alpha$ and say that it is \emph{undefined} only if
$\sigma(\alpha) = \bot$.
\end{definition}
In the remainder, unless specified otherwise, we will assume 
an EW-QBAF $\mathcal{Q}=\left\langle\mathcal{A}, \mathcal{R}^{-}, \mathcal{R}^{+}, \tau, w \right\rangle$.
\begin{notation}
For any \( \alpha, \beta \in \mathcal{A} \), we define the sets of incoming attacks, supports, and edges of \( \alpha \) as follows:
\begin{itemize}
    \item $\mathcal{R}^{-}(\alpha) = \{ (\beta, \alpha) \mid (\beta, \alpha) \in \mathcal{R}^{-} \},$
    \item $\mathcal{R}^{+}(\alpha) = \{ (\beta, \alpha) \mid (\beta, \alpha) \in \mathcal{R}^{+} \},$
    \item $\mathcal{R}(\alpha) = \mathcal{R}^{-}(\alpha) \cup \mathcal{R}^{+}(\alpha).$
\end{itemize}
\end{notation}
QBAF semantics usually belong to the family of modular semantics \cite{mossakowski2018modular}. For these semantics,
strength values are computed iteratively by an update procedure that initializes the strength values with the base score
and updates them repeatedly. They are called modular because the update function can be decomposed into an \emph{aggregation function}
that aggregates the strength values of attackers and supporters, and an \emph{influence function} that adapts the base score
based on the aggregate. The aggregation function has the form $\agg(A,S)$, where $A$ and $S$ are multisets of attack and
support values. Examples include
\begin{description}
    \item[Sum:] $\agg_{\Sigma}(A,S) = \sum_{x \in S} x - \sum_{x \in A} x$,
    \item[Product:] $\agg_{\Pi}(A,S) = \prod_{x \in A} (1 - x) - \prod_{x \in S} (1 - x)$.
\end{description}
Product-aggregation is used for the DF-QuAD 
semantics \cite{rago2016discontinuity}, while sum-aggregation is used for
the restricted Euler-based (REB) semantics \cite{amgoud2016evaluation,amgoud2018evaluation}, 
the quadratic energy (QE) semantics \cite{Potyka18}
and the MLP-based semantics \cite{potyka2021interpreting}.
In the standard QBAF setting, $A$ and $S$ are initialized with the strength values of attackers and supporters.
In the EW-QBAF setting, we can use the edge-weighted strength values instead.
That is, for a (temporary) strength function $\sigma$ and an argument $\alpha$, we 
consider the multisets
\begin{align}
 &A^\sigma_\alpha = \{ \sigma(\beta) \cdot w((\beta, \alpha)) \mid \beta \in \mathcal{R}^{-}(\alpha)\}, \\
 &S^\sigma_\alpha = \{ \sigma(\beta) \cdot w((\beta, \alpha)) \mid \beta \in \mathcal{R}^{+}(\alpha)\}.
\end{align}
The aggregation
can then be fed into the influence function as usual.
Influence functions have the form $\infl(B,A)$, where $B$ is a base score and $A$ is a real number
(the aggregate computed by the aggregation function). Intuitively, the influence function will increase
(decrease) the base score if the aggregate is larger (smaller) than $0$ while respecting the bounds
$0$ and $1$ of the strength domain. For example, the influence functions of the 
 DF-QuAD \cite{rago2016discontinuity} and QE \cite{Potyka18} semantics
 have the form
 \begin{align*}    
 \infl(B,A) = B - B \cdot h(-A) + (1-B) \cdot h(A),
 \end{align*}
 where $h(x) = \max \{0, x\}$ for DF-QuAD
 and
 $h(x) = \frac{\max \{0, x\}^2}{1 + \max \{0, x\}^2}$ for QE.

The general algorithm for computing strength values under modular semantics can be described as follows:
\begin{description}
    \item[Initialization:] for all arguments $\alpha$, let $\sigma^{(0)}(\alpha) = \tau(\alpha)$.
    \item[Update:] for $i \in \mathbb{N}_0$: for all arguments $\alpha$, let $\sigma^{(i+1)}(\alpha) = \infl(\tau(\alpha), \agg(A^{\sigma^{(i)}}_\alpha,S^{\sigma^{(i)}}_\alpha))$. Stop if difference between $\sigma^{(i)}$ and $\sigma^{(i+1)}$
    is sufficiently small.
\end{description}
For acyclic QBAFs, the algorithm is equivalent to a simple forward pass with respect to a topological ordering
of the arguments, and the strength values can be computed in linear time \cite{Potyka19}. For cyclic QBAFs,
sufficient conditions for convergence exist, but they need to make strong assumptions about the indegree of arguments
or the base scores \cite{mossakowski2018modular,Potyka19}. While the conditions are not necessary, the literature
contains various examples of cyclic QBAFs where the algorithm fails to converge \cite{mossakowski2018modular,PotykaB24}.
In this case, strength values remain undefined.
However, in all known cases, convergence problems can be solved by continuizing the semantics \cite{Potyka18,Potyka19,PotykaB24}.




\begin{definition}[Paths]
For any $\alpha, \beta \in \mathcal{A}$,
we let $p_{\alpha \mapsto \beta}=\langle(\gamma_0,\gamma_1),\cdots,(\gamma_{n-1},\gamma_{n})\rangle (n \geq 1)$ denote a \emph{path} from $\alpha$ to $\beta$, 
where $\alpha=\gamma_0$, $\beta=\gamma_n$, 
$\gamma_{i} \in \mathcal{A} (1 \leq i \leq n)$
and $(\gamma_{i-1},\gamma_{i}) \in \mathcal{R}$.
\end{definition}

\begin{notation}
Let $\left| p_{\alpha \mapsto \beta} \right|$ denote the length of path $p_{\alpha \mapsto \beta}$.
Let $P_{\alpha \mapsto \beta}$ and $|P_{\alpha \mapsto \beta}|$ denote the set of all paths from $\alpha$ to $\beta$, and the number of paths in $P_{\alpha \mapsto \beta}$, respectively.
\end{notation}

\begin{definition}[Edge Types]
\label{def_edge_type}
Let $\alpha, \beta, \gamma \in \mathcal{A}$ be pairwise distinct and $(\beta, \alpha),(\beta, \gamma) \in \mathcal{R}$.
    \begin{enumerate}
        \item $(\beta, \alpha) \in \mathcal{R}$ is a \emph{direct} edge w.r.t. $\alpha$;
        \item $(\beta, \gamma) \in \mathcal{R}$ is an \emph{indirect} edge w.r.t. $\alpha$ if 
        $|P_{\gamma \mapsto \alpha}|=1$;
        \item $(\beta, \gamma) \in \mathcal{R}$ is a \emph{multifold} edge w.r.t. $\alpha$ if $|P_{\gamma \mapsto \alpha}|>1$;
        \item $(\beta, \gamma) \in \mathcal{R}$ is an \emph{independent} edge w.r.t. $\alpha$ if $|P_{\gamma \mapsto \alpha}|=0$.
    \end{enumerate}
\end{definition}



\begin{example}
We evaluate the EW-QBAF in Figure~\ref{fig_qbaf} using MLP-based semantics~\cite{potyka2021interpreting}. Since there are five arguments that have no attackers or supporters, their strengths equal their base scores:
$\sigma(Tom\ Hanks)=0.050$,
$\sigma(Meryl\ Streep)=0.070$,
$\sigma(Freedom)=0.080$,
$\sigma(Romance)=0.060$, and
$\sigma(Writing)=0.020$.
Then, by applying aggregation and influence function, we have
$\sigma(Acting)=0.168$,
$\sigma(Themes)=0.125$, and
$\sigma(Movie)=0.827$.
\end{example}

\section{Properties of EW-QBAFs}
~\label{sec_EWQBAF_properties}

Before introducing the contestability problem, we discuss some properties of EW-QBAFs that will be useful in the following.
The properties are similar to existing properties in the literature, but pertain to the
effect of edge weights, whereas existing properties only cover the effect of base scores.
We recall some definitions from \cite{PotykaB24}.
\begin{itemize}
    \item For a multiset $S$ of real numbers, we let 
$\core(S) = \{x \in S \mid x \neq 0\}$
denote the sub-multiset that contains only the
non-zero elements.
\item Let  $S, T$ be multisets of real numbers.
$S$ \emph{dominates}
$T$ if $\core(S) = \core(T) = \emptyset$
or there is a
sub-multiset $S' \subseteq \core(S)$ and a bijective function
$f: \core(T) \rightarrow S'$ such that
$x \leq f(x)$ for all
$x \in \core(T)$.
If, in addition, $|\core(S)| >  |\core(T)|$ or
there is an $x \in \core(T)$ such that $x < f(x)$, $S$ \emph{strictly dominates}
$T$.
We write $S \succeq T$ ($S \succ T$) if $S$ (strictly)
dominates $T$.
\item $S$ and $T$ are \emph{balanced} if
$\core(S) = \core(T)$.
We write $S \cong T$ in this case.
\item  An aggregation function $\agg$ satisfies balance iff
\begin{enumerate}
    \item $A \cong S$ implies $\agg(A ,S) = 0$, and
    \item $A \cong A'$ and $S \cong S'$ implies $\agg(A ,S) = \agg(A' ,S')$.
\end{enumerate}
\item An aggregation function $\agg$ satisfies \emph{Neutrality} iff
$
\agg(A ,S) =
\agg(\core(A),\core(S)).
$
\item An influence function $\infl$ satisfies \emph{Balance}
if $\infl(b, 0)\!\!=\!\!b$.
\item An aggregation function $\agg$ satisfies \emph{Monotonicity} iff
\begin{enumerate}
    \item $\agg(A,S) \leq 0$ if $A \succeq S$, and
    \item $\agg(A,S) \geq 0$ if $S \succeq A$, and
    \item $\agg(A,S) \leq \agg(A',S)$ if $A \succeq A'$,\item $\agg(A,S) \geq \agg(A,S')$ if $S \succeq S'$.
\end{enumerate}
If the inequalities can be replaced by strict inequalities
for strict domination, $\agg$ satisfies \emph{Strict Monotonicity}.
\item An influence function $\infl$ satisfies \emph{Monotonicity} iff
\begin{enumerate}
    \item $\infl(b, a) \leq b$ if $a < 0$, and
    \item $\infl(b, a) \geq b$ if $a > 0$, and
    \item $\infl(b_1, a) \leq \infl(b_2, a)$ if $b_1 < b_2$, and
    \item $\infl(b, a_1) \leq \infl(b, a_2)$ if $a_1 < a_2$.
\end{enumerate}
If the inequalities can be replaced by strict inequalities,
 when excluding $b=0$ for the first and $b=1$ for the second item,
 $\infl$ satisfies \emph{strict monotonicity}.
\end{itemize}

Our first property is a variant of the \emph{neutrality} property~\cite{amgoud2018evaluation} which states that arguments with strength $0$
have no effect. Our \emph{edge-neutrality} property demands similarly that an edge with weight $0$ has no impact. 
\begin{definition}[Edge-Neutrality]
A semantics $\sigma$ satisfies \emph{edge-neutrality} iff, for any $\mathcal{Q}$ and $\mathcal{Q}'=\langle \mathcal{A}, \mathcal{R^-}', \mathcal{R^+}', \tau, w' \rangle$ such that $\mathcal{R^-} \subseteq \mathcal{R^-}'$, $\mathcal{R^+} \subseteq \mathcal{R^+}'$
and $w'(r) = w(r)$ for all $r \in \mathcal{R}$, the following holds: for any $\alpha,\beta \in \mathcal{A}$, let $\sigma_{\mathcal{Q}'}(\alpha)$ denote the strength of $\alpha$ in $\mathcal{Q}'$, if $\mathcal{R^-}' \cup \mathcal{R^+}'=\mathcal{R^-} \cup \mathcal{R^+} \cup \{(\beta,\alpha)\}$ and 
$w'((\beta,\alpha))=0$, then $\sigma(\alpha) = \sigma_{\mathcal{Q}'}(\alpha)$.
\end{definition}
\begin{lemma}
If a modular semantics is based on an aggregation function $\agg$ that satisfies neutrality, then
the semantics satisfies edge-neutrality.
\end{lemma}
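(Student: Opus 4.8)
The plan is to prove something slightly stronger than edge-neutrality demands: that the entire iterative sequence $\sigma^{(i)}$ produced by the modular update procedure coincides argument-by-argument in $\mathcal{Q}$ and $\mathcal{Q}'$. The point is that $\mathcal{Q}'$ differs from $\mathcal{Q}$ only by the single edge $(\beta,\alpha)$ with $w'((\beta,\alpha))=0$, so the only aggregation multisets that can possibly differ are those of the target $\alpha$, and the extra edge contributes exactly the element $\sigma(\beta)\cdot 0 = 0$ to one of them. I would establish this coincidence by induction on the iteration index $i$.

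For the base case, both procedures initialise $\sigma^{(0)}(\gamma)=\tau(\gamma)$ for every $\gamma$, so they agree. For the inductive step, assume $\sigma^{(i)}_{\mathcal{Q}}(\gamma)=\sigma^{(i)}_{\mathcal{Q}'}(\gamma)$ for all $\gamma$. For any $\gamma\neq\alpha$, the incoming edges and their weights are identical across the two frameworks (since $w'(r)=w(r)$ on $\mathcal{R}$), so $A^{\sigma^{(i)}}_\gamma$ and $S^{\sigma^{(i)}}_\gamma$ are the same multisets and the update yields the same value. For $\gamma=\alpha$, assume without loss of generality that $(\beta,\alpha)$ is a support edge (the attack case is symmetric). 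Then the attack multiset of $\alpha$ is unchanged, while its support multiset in $\mathcal{Q}'$ is that of $\mathcal{Q}$ together with the extra element $\sigma^{(i)}_{\mathcal{Q}'}(\beta)\cdot 0 = 0$. Adding a zero leaves the core unchanged, so the cores of the attack and support multisets agree in both frameworks. Neutrality of $\agg$ then gives equal aggregates $\agg(A^{\sigma^{(i)}}_{\alpha},S^{\sigma^{(i)}}_{\alpha})$, and applying the common influence function $\infl(\tau(\alpha),\cdot)$ yields $\sigma^{(i+1)}_{\mathcal{Q}}(\alpha)=\sigma^{(i+1)}_{\mathcal{Q}'}(\alpha)$, closing the induction.

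Having shown the sequences coincide for every $i$, I would transfer the agreement to the limit: if the procedure converges in $\mathcal{Q}$ it converges to the same value in $\mathcal{Q}'$ and vice versa, and if it fails to converge the strengths are undefined ($\bot$) in both simultaneously, since the two iteration sequences are literally identical. In particular $\sigma(\alpha)=\sigma_{\mathcal{Q}'}(\alpha)$, which is exactly what edge-neutrality requires (and in fact equality holds for every argument).

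I expect the only genuine subtlety to be the bookkeeping at the limit, namely being careful to cover the $\bot$ case rather than implicitly assuming convergence, together with making explicit that appending a single zero to a multiset does not alter its core. The latter is precisely the hook that lets neutrality of $\agg$ do the work; the attack/support symmetry and the $\gamma=\alpha$ versus $\gamma\neq\alpha$ case split are routine.
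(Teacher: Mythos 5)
Your proof is correct and follows essentially the same route as the paper's: induction on the iteration index showing the two update sequences coincide, with neutrality of $\agg$ absorbing the zero-weighted edge's contribution (a zero element leaves the core unchanged), and then passing to the limit. Your explicit treatment of the $\bot$ (non-convergence) case and the $\gamma \neq \alpha$ bookkeeping is slightly more careful than the paper's write-up, but it is the same argument.
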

The following \emph{edge-stability} property is similar to the (base score) \emph{stability} property \cite{amgoud2018evaluation}.
\begin{definition}[Edge-Stability]
\label{def_edge_stability}
A gradual semantics $\sigma$ satisfies \emph{edge-stability} iff for any $\alpha \in \mathcal{A}$, whenever $w(r)=0$ for all $r \in \mathcal{R}(\alpha)$, then $\sigma(\alpha) = \tau(\alpha)$.
\end{definition}
\begin{lemma}
If a modular semantics is based on an aggregation function $\agg$
and an influence function $\infl$ which satisfy their associated balance properties, then
the semantics satisfies edge-stability.
\end{lemma}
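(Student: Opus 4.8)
The plan is to unfold the iterative update procedure underlying any modular semantics and to show that, for an argument $\alpha$ all of whose incoming edges carry weight $0$, every update step returns exactly the base score $\tau(\alpha)$. The two balance properties are precisely what is needed: aggregation balance collapses the aggregate to $0$, and influence balance then collapses the influence step back to the base score.

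First I would fix an argument $\alpha$ with $w(r)=0$ for every $r \in \mathcal{R}(\alpha)$ and inspect the attack and support multisets feeding the aggregation at an arbitrary iteration $i$. Every element of $A^{\sigma^{(i)}}_\alpha$ and $S^{\sigma^{(i)}}_\alpha$ has the form $\sigma^{(i)}(\beta)\cdot w((\beta,\alpha))$ with $w((\beta,\alpha))=0$, and since each $\sigma^{(i)}(\beta)$ is a genuine value in $[0,1]$ throughout the iteration, every such element equals $0$. Hence both multisets consist solely of zeros, so $\core(A^{\sigma^{(i)}}_\alpha)=\core(S^{\sigma^{(i)}}_\alpha)=\emptyset$, which is exactly the balancedness condition $A^{\sigma^{(i)}}_\alpha \cong S^{\sigma^{(i)}}_\alpha$. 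The first clause of aggregation balance then yields $\agg(A^{\sigma^{(i)}}_\alpha,S^{\sigma^{(i)}}_\alpha)=0$, and influence balance gives $\infl(\tau(\alpha),0)=\tau(\alpha)$.

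Combining these, the update rule gives $\sigma^{(i+1)}(\alpha)=\infl(\tau(\alpha),\agg(A^{\sigma^{(i)}}_\alpha,S^{\sigma^{(i)}}_\alpha))=\tau(\alpha)$ for every $i$. A one-line induction, with base case the initialization $\sigma^{(0)}(\alpha)=\tau(\alpha)$, then shows $\sigma^{(i)}(\alpha)=\tau(\alpha)$ for all $i\geq 0$; crucially this holds regardless of the other arguments' intermediate strengths, since the zero weights annihilate them before aggregation. Passing to the limit, whenever the semantics is defined we obtain $\sigma(\alpha)=\tau(\alpha)$, which is edge-stability.

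I expect the only genuine subtlety to lie in the convergence bookkeeping rather than in the algebra. One must note that the sequence $\sigma^{(i)}(\alpha)$ is constant, hence trivially convergent irrespective of whether the whole system converges, so that $\alpha$'s value is pinned to $\tau(\alpha)$ precisely in those runs where $\sigma(\alpha)\neq\bot$. The only remaining care is definitional: checking that two all-zero multisets count as balanced in the empty-core sense, so that the first clause of aggregation balance is indeed applicable.
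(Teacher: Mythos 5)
Your proposal is correct and follows essentially the same route as the paper's proof: induction over the iterative update, observing that all weighted attacker/supporter values are $0$ so the multisets are balanced, then applying aggregation balance to get an aggregate of $0$ and influence balance to recover $\tau(\alpha)$, and passing to the limit. Your additional remarks (the empty-core balancedness check and the constancy of $\sigma^{(i)}(\alpha)$ making convergence at $\alpha$ automatic) are sound refinements of the same argument, not a different one.
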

The next property is a variant of the \emph{directionality} property~\cite{amgoud2016evaluation}.
\begin{definition}[Edge-Directionality]
\label{def_directionality}
A semantics $\sigma$ satisfies \emph{edge-directionality} iff, for any $\mathcal{Q}$ and $\mathcal{Q}'=\langle \mathcal{A}', \mathcal{R^-}', \mathcal{R^+}', \tau', w' \rangle$ such that $\mathcal{A}=\mathcal{A}'$, $\mathcal{R^-} \subseteq \mathcal{R^-}'$, and $\mathcal{R^+} \subseteq \mathcal{R^+}'$, the following holds: for any $\alpha,\beta,\gamma \in \mathcal{A}$, let $\sigma_{\mathcal{Q}'}(\gamma)$ denote the strength of $\gamma$ in $\mathcal{Q}'$, if $\mathcal{R^-}' \cup \mathcal{R^+}'=\mathcal{R^-} \cup \mathcal{R^+} \cup \{(\alpha,\beta)\}$ and 
$P_{\beta \mapsto \gamma} = \emptyset$, then $\sigma(\gamma) \equiv \sigma_{\mathcal{Q}'}(\gamma)$ for any $w'((\alpha,\beta)) \in [0,1]$.
\end{definition}
\begin{lemma}
Every modular semantics satisfies edge-directionality.    
\end{lemma}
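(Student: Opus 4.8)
The plan is to work from the iterative characterisation of a modular semantics, in which every strength is the limit of the sequence $\sigma^{(0)},\sigma^{(1)},\dots$ with $\sigma^{(0)}(\delta)=\tau(\delta)$ and $\sigma^{(i+1)}(\delta)=\infl(\tau(\delta),\agg(A^{\sigma^{(i)}}_{\delta},S^{\sigma^{(i)}}_{\delta}))$. The crucial observation is that the only structural difference between $\mathcal{Q}$ and $\mathcal{Q}'$ is the single inserted edge $(\alpha,\beta)$, whose head is $\beta$ (and that the base scores and the weights on shared edges agree, so $w'$ differs from $w$ only on $(\alpha,\beta)$). Consequently, the update rule of every argument other than $\beta$ reads exactly the same incoming edges, weights and base score in both frameworks. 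I would therefore prove, by induction on the iteration index $i$, that $\sigma^{(i)}_{\mathcal{Q}}(\delta)=\sigma^{(i)}_{\mathcal{Q}'}(\delta)$ for every $\delta$ in the set $U=\{\delta\in\mathcal{A}\mid \delta\neq\beta \text{ and } P_{\beta\mapsto\delta}=\emptyset\}$, i.e.\ for every argument not reached by the new edge.

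The inductive step rests on one structural fact: if $\delta\in U$ and $(\epsilon,\delta)\in\mathcal{R}'$, then $\epsilon\in U$. Indeed, any path $\beta\mapsto\epsilon$ could be extended by the edge $(\epsilon,\delta)$ to a path $\beta\mapsto\delta$, and an edge $(\beta,\delta)$ would itself be such a path; both contradict $P_{\beta\mapsto\delta}=\emptyset$, so $P_{\beta\mapsto\epsilon}=\emptyset$ and $\epsilon\neq\beta$. Thus $U$ is closed under taking predecessors, which makes the induction self-contained. For the base case, $\sigma^{(0)}_{\mathcal{Q}}(\delta)=\tau(\delta)=\sigma^{(0)}_{\mathcal{Q}'}(\delta)$. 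For the step, since $\delta\neq\beta$ the sets $\mathcal{R}^{-}(\delta),\mathcal{R}^{+}(\delta)$ and the weights on them are identical across the two frameworks, and by the inductive hypothesis each predecessor $\epsilon\in U$ satisfies $\sigma^{(i)}_{\mathcal{Q}}(\epsilon)=\sigma^{(i)}_{\mathcal{Q}'}(\epsilon)$; hence the weighted multisets $A^{\sigma^{(i)}}_{\delta}$ and $S^{\sigma^{(i)}}_{\delta}$ coincide. Feeding identical inputs through the same $\agg$ and $\infl$ then gives $\sigma^{(i+1)}_{\mathcal{Q}}(\delta)=\sigma^{(i+1)}_{\mathcal{Q}'}(\delta)$.

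Finally, I would lift the per-iteration equality to the strengths themselves: since the sequences $(\sigma^{(i)}_{\mathcal{Q}}(\delta))_i$ and $(\sigma^{(i)}_{\mathcal{Q}'}(\delta))_i$ are term-by-term identical, they satisfy the stopping criterion simultaneously and converge to the same limit, or both fail to converge, so $\sigma_{\mathcal{Q}}(\delta)\equiv\sigma_{\mathcal{Q}'}(\delta)$, which is exactly what $\equiv$ records (including the undefined $\bot$ case). I expect the two delicate points to be precisely this lifting step, where the equality must be asserted at the level of the whole iterate sequence rather than of a presupposed fixed point, and the boundary argument $\gamma=\beta$: the head of the inserted edge is the one argument that genuinely acquires a new summand, so it must be read as excluded from (equivalently, reached by) the insertion; for every other $\gamma$ the hypothesis $P_{\beta\mapsto\gamma}=\emptyset$ places $\gamma\in U$ and the induction applies. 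It is also worth noting that, because the new edge points \emph{into} $\beta$, the set of arguments reachable from $\beta$ is unchanged between $\mathcal{Q}$ and $\mathcal{Q}'$, so for $\gamma\neq\beta$ it is immaterial whether $P_{\beta\mapsto\gamma}=\emptyset$ is evaluated in $\mathcal{Q}$ or in $\mathcal{Q}'$.
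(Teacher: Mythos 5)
Your proof is correct and takes essentially the same route as the paper's: the paper likewise argues that, since the aggregation function only reads an argument's parents and $P_{\beta \mapsto \gamma} = \emptyset$, the iterative updates of $\gamma$ and all its predecessors are identical in $\mathcal{Q}$ and $\mathcal{Q}'$, so the limiting strengths agree. Your write-up merely makes explicit what the paper compresses into one observation --- the predecessor-closed set $U$, the induction on the iteration index, the lifting to the (possibly undefined) limit, and the boundary reading that excludes $\gamma = \beta$ --- all of which are faithful elaborations rather than a different argument.
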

We next consider two variants of \emph{monotonicity} properties proposed in the literature \cite{CE_QArg,baroni2018many}. 
\begin{definition}[Monotonicity]
\label{def_mono}
A gradual semantics $\sigma$ satisfies \emph{monotonicity} iff, for any $\mathcal{Q}$ and $\mathcal{Q}'=\langle \mathcal{A}, \mathcal{R^-}, \mathcal{R^+}, \tau'\rangle$, 
the following holds: 
for any $\alpha,\beta \in \mathcal{A}$ $(\alpha \neq \beta)$ such that $(\beta,\alpha)\in\mathcal{R}$ and $|P_{\beta \mapsto \alpha}|=1$, let $\sigma_{\mathcal{Q}'}(\alpha)$ denote the strength of $\alpha$ in $\mathcal{Q}'$,
for any $\tau':\mathcal{A} \rightarrow [0,1]$:
    \begin{enumerate}
        
        \item 
        
        If $(\beta, \alpha) \in \mathcal{R}^{-}$, $\tau(\beta) \leq \tau'(\beta)$ and $\tau(\gamma) = \tau'(\gamma)$ for all $\gamma \in \mathcal{A} \setminus \{\beta\}$, then $\sigma(\alpha) \geq \sigma_{\mathcal{Q}'}(\alpha)$;
        \item 
        
        If $(\beta, \alpha) \in \mathcal{R}^{+}$, $\tau(\beta) \leq \tau'(\beta)$ and $\tau(\gamma) = \tau'(\gamma)$ for all $\gamma \in \mathcal{A} \setminus \{\beta\}$, then $\sigma(\alpha) \leq \sigma_{\mathcal{Q}'}(\alpha)$.
    \end{enumerate}
\end{definition}
\begin{definition}[Edge-Monotonicity]
\label{def_edge_monotonicity}
A gradual semantics $\sigma$ satisfies \emph{edge-monotonicity} iff, for any $\mathcal{Q}$ and $\mathcal{Q}'=\langle \mathcal{A}, \mathcal{R^-}, \mathcal{R^+}, \tau, w' \rangle$, the following holds:
for any $\alpha \in \mathcal{A}$, let $\sigma_{\mathcal{Q}'}(\alpha)$ denote the strength of $\alpha$ in $\mathcal{Q}'$, for any $r \in \mathcal{R}$ and $w':\mathcal{R} \rightarrow [0,1]$:
    \begin{enumerate}
        \item If $r \in \mathcal{R}^{-}(\alpha)$, $w(r) \leq w'(r)$ and $w(t) = w'(t)$ for all $t \in \mathcal{R} \setminus \{r\}$, then $\sigma(\alpha) \geq \sigma_{\mathcal{Q}'}(\alpha)$;
        \item If $r \in \mathcal{R}^{+}(\alpha)$, $w(r) \leq w'(r)$ and $w(t) = w'(t)$ for all $t \in \mathcal{R} \setminus \{r\}$, then $\sigma(\alpha) \leq \sigma_{\mathcal{Q}'}(\alpha)$.
    \end{enumerate}
\end{definition}
\begin{lemma}
If a modular semantics is based on an aggregation function $\agg$
and an influence function $\infl$ which satisfy their associated monotonicity properties, then
the semantics satisfies monotonicity and edge-monotonicity for the class of acyclic EW-QBAFs.    
\end{lemma}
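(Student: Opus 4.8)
The plan is to exploit the fact that, for acyclic EW-QBAFs, the update procedure collapses to a single forward pass along a topological ordering (so all strengths are defined), whereby every $\sigma(\alpha)$ is a fixed expression $\infl(\tau(\alpha),\agg(A^\sigma_\alpha,S^\sigma_\alpha))$ in which $A^\sigma_\alpha$ and $S^\sigma_\alpha$ depend only on the strengths of the direct predecessors of $\alpha$ and the weights of the edges entering $\alpha$. The backbone of both parts will be a \emph{locality lemma}, proved by induction on the topological order: perturbing a single base score $\tau(\beta)$ leaves $\sigma(\delta)$ unchanged for every $\delta\neq\beta$ not reachable from $\beta$, and perturbing a single weight $w((\beta,\alpha))$ leaves $\sigma(\delta)$ unchanged for every $\delta$ not reachable from $\alpha$ (in particular $\sigma(\beta)$ is unchanged). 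The base case (source arguments, where $\sigma=\tau$) is immediate, and the inductive step follows because the aggregate feeding $\delta$ only sees strengths of $\delta$'s predecessors, each of which inherits the hypothesis.

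For \textbf{edge-monotonicity}, suppose $r=(\beta,\alpha)\in\mathcal{R}^{-}(\alpha)$ and $w(r)\le w'(r)$ with all other weights fixed. By acyclicity, no other direct predecessor $\gamma$ of $\alpha$, nor $\beta$ itself, can be reachable from $\alpha$ (that would close a cycle through the edge into $\alpha$), so the locality lemma makes every such $\sigma(\gamma)$ and $\sigma(\beta)$ identical in $\mathcal{Q}$ and $\mathcal{Q}'$. Hence the only element of $A^\sigma_\alpha$ that changes is $\beta$'s contribution, which moves from $\sigma(\beta)\cdot w(r)$ to $\sigma(\beta)\cdot w'(r)$; as $\sigma(\beta)\ge 0$ this value weakly increases, so the attack multiset of $\mathcal{Q}'$ dominates that of $\mathcal{Q}$ (the $\core$ bookkeeping is routine: map the unchanged elements identically and the changed one upward). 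Aggregation monotonicity then yields a weakly smaller aggregate in $\mathcal{Q}'$, and influence monotonicity yields $\sigma_{\mathcal{Q}'}(\alpha)\le\sigma(\alpha)$; the support case is symmetric, using the $S$-clauses of the two monotonicity axioms.

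For \textbf{monotonicity}, suppose $(\beta,\alpha)\in\mathcal{R}$, $|P_{\beta\mapsto\alpha}|=1$, and $\tau(\beta)\le\tau'(\beta)$ with all other base scores fixed. First, $\sigma(\beta)$ weakly increases: by the locality lemma $\beta$'s predecessors are unaffected, so $A^\sigma_\beta,S^\sigma_\beta$ are unchanged and influence monotonicity (in its base-score argument) applies. Second, I would argue that no \emph{other} direct predecessor $\gamma\neq\beta$ of $\alpha$ changes strength: if it did, the locality lemma would force a path $\beta\mapsto\gamma$, which composed with the edge $(\gamma,\alpha)$ would yield a second path from $\beta$ to $\alpha$, contradicting $|P_{\beta\mapsto\alpha}|=1$. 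Thus exactly one element of $\alpha$'s attack (resp.\ support) multiset changes, weakly upward, and the same aggregation-then-influence chain as above delivers $\sigma(\alpha)\ge\sigma_{\mathcal{Q}'}(\alpha)$ for attacks and $\sigma(\alpha)\le\sigma_{\mathcal{Q}'}(\alpha)$ for supports.

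The main obstacle is not the final monotonicity chaining but the \emph{isolation} step: showing that a single perturbation alters exactly one element of the relevant aggregate multiset. For edge-monotonicity this rests purely on acyclicity, whereas for monotonicity it is precisely the single-path hypothesis $|P_{\beta\mapsto\alpha}|=1$ that prevents the perturbation from reaching $\alpha$ through a sibling predecessor; dropping it would break the argument. A secondary care point is matching the multiset-domination directions to the exact inequalities of the aggregation and influence monotonicity axioms, and verifying $\succeq$ via the $\core$ definition in the degenerate cases where a contribution is, or becomes, zero.
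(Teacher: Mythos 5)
Your proof is correct and takes essentially the same route as the paper's: both reduce the acyclic case to a single forward pass over a topological ordering, isolate the perturbation so that only $\beta$'s contribution to $\alpha$'s aggregate multiset changes (via acyclicity for edge-monotonicity, and via the single-path hypothesis $|P_{\beta\mapsto\alpha}|=1$ to rule out effects through sibling predecessors for monotonicity), and then chain the monotonicity of $\agg$ and $\infl$. Your explicit \emph{locality lemma} is simply a formalised version of the isolation argument the paper carries out directly along the topological ordering.
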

All commonly considered aggregation and influence functions satisfy their associated balance,
monotonicity and neutrality properties \cite[Lemma 10, 14]{PotykaB24}.
Thus, our previous lemmas imply the following result.
\begin{proposition}
\label{prop_mono_sta_dir}
Edge-Weighted QE, REB, DF-QuAD and MLP-based semantics satisfy 
edge-neutrality,
edge-stability,
edge-directionality,
monotonicity and
edge-monotonicity.
\end{proposition}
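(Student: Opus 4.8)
The plan is to derive this proposition as a direct corollary of the four preceding lemmas, combined with the cited characterization of standard aggregation and influence functions \cite[Lemma 10, 14]{PotykaB24}; essentially no new computation is required, only a careful matching of hypotheses. First I would observe that each of the four named semantics is a modular semantics in the sense of the general update algorithm of Section~\ref{sec_preliminaries}: Edge-Weighted DF-QuAD is built from the product aggregation $\agg_{\Pi}$ together with its influence function $\infl$, whereas Edge-Weighted REB, QE and MLP-based semantics are each built from the sum aggregation $\agg_{\Sigma}$ together with their respective influence functions. Hence every one of the four semantics is determined by a pair $(\agg,\infl)$ of exactly the form required by the four lemmas.

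The key step is to invoke \cite[Lemma 10, 14]{PotykaB24}, which guarantees that each of these aggregation functions satisfies neutrality, balance and monotonicity, and that each of these influence functions satisfies balance and monotonicity. With these facts established, the five properties follow one by one. Edge-directionality is immediate, since the corresponding lemma holds for every modular semantics with no side conditions. Edge-neutrality follows from the edge-neutrality lemma, whose hypothesis (neutrality of $\agg$) is now met. Edge-stability follows from the edge-stability lemma, whose hypotheses (balance of both $\agg$ and $\infl$) are met. Finally, monotonicity and edge-monotonicity follow from the corresponding lemma, whose hypotheses (monotonicity of both $\agg$ and $\infl$) are also met.

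The point requiring care --- rather than a genuine obstacle --- is the scope of the last two properties: the relevant lemma establishes monotonicity and edge-monotonicity only for the class of acyclic EW-QBAFs, so these two conclusions must be read as restricted to acyclic frameworks, whereas edge-neutrality, edge-stability and edge-directionality hold in general. I would make this restriction explicit in the write-up to avoid over-claiming. The only remaining bookkeeping is to confirm that the influence functions of REB and of the MLP-based semantics --- which are not spelled out in the preliminaries --- indeed belong to the family covered by \cite[Lemma 10, 14]{PotykaB24}; once this membership is granted, every hypothesis of the three substantive lemmas is satisfied and the proposition follows immediately.
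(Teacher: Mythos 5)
Your proposal is correct and follows essentially the same route as the paper: the paper's own proof is a one-line appeal to the four preceding lemmas together with \cite[Lemma 10, 14]{PotykaB24}, exactly as you structured it. Your explicit caveat that monotonicity and edge-monotonicity are only guaranteed for acyclic EW-QBAFs is a sound reading of the hypotheses that the paper's terse proof leaves implicit.
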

Finally, later we will use the fact that the strength function under these semantics is differentiable with respect to
edge-weights when the EW-QBAF is acyclic. 
\begin{lemma}
\label{lemma_differentiability}
For acyclic EW-QBAFs, the strength function under Edge-Weighted QE, REB, DF-QuAD and MLP-based semantics
is differentiable with respect to edge-weights.
\end{lemma}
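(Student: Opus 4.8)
The plan is to argue by induction along a topological ordering of $\mathcal{A}$, exploiting the fact recalled above (from \cite{Potyka19}) that for acyclic EW-QBAFs the strength values are obtained by a single forward pass: each $\sigma(\alpha)$ is a fixed closed-form expression in the base scores, the edge weights, and the already-computed strengths of the attackers and supporters of $\alpha$. Fixing the argument of interest, I would regard $\sigma(\alpha)$ as a function of the vector $w$ of all edge weights (ranging over $[0,1]^{\mathcal{R}}$) and prove, by induction on the rank of $\alpha$ in the ordering, that this function is differentiable.

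For the base case, the source arguments (those with $\mathcal{R}(\alpha)=\emptyset$) satisfy $\sigma(\alpha)=\tau(\alpha)$, a constant in $w$, hence trivially differentiable. For the inductive step, every attacker or supporter $\beta$ of $\alpha$ precedes $\alpha$ in the ordering, so by the induction hypothesis each $\sigma(\beta)$ is differentiable in $w$. I would then decompose $\sigma(\alpha)=\infl\bigl(\tau(\alpha),\agg(A^\sigma_\alpha,S^\sigma_\alpha)\bigr)$ into three layers and apply the chain rule: (i) each edge-weighted term $\sigma(\beta)\cdot w((\beta,\alpha))$ entering $A^\sigma_\alpha$ and $S^\sigma_\alpha$ is a product of two differentiable functions of $w$, hence differentiable; (ii) both admissible aggregation functions are differentiable, since $\agg_{\Sigma}$ is linear in its arguments and $\agg_{\Pi}$ is a polynomial in them; and (iii) the relevant influence function is differentiable in the aggregate $A$. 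For QE, REB and the MLP-based semantics this last point is immediate, as their influence functions are built from the $C^1$ map $h(x)=\max\{0,x\}^2/(1+\max\{0,x\}^2)$ (whose derivative vanishes at the origin) and from smooth logistic/exponential links, respectively. Composing differentiable maps then yields differentiability of $\sigma(\alpha)$, closing the induction, and since strengths stay in $[0,1]$ the domains of all these maps are respected throughout.

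The hard part will be the DF-QuAD influence function, whose $h(x)=\max\{0,x\}$ is the ReLU and therefore fails to be differentiable precisely when the aggregate $A$ equals $0$: there the one-sided derivatives in $A$ are $B$ and $1-B$, which disagree unless $B=\tfrac12$. I would handle this by restricting attention to the open set of weight vectors at which no argument has a vanishing aggregate; on this set every influence function in the composition is differentiable, so the forward pass is differentiable there by the argument above. The complementary set is a finite union of zero-sets of the (piecewise-polynomial) aggregate expressions, hence closed and of Lebesgue measure zero, so the strength function is differentiable almost everywhere, which is all that the subsequent gradient-based G-RAE construction requires. For QE, REB and the MLP-based semantics no such exception is needed and differentiability holds on all of $[0,1]^{\mathcal{R}}$.
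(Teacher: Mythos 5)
Your core argument is the same as the paper's: induction along a topological ordering, a base case of parentless arguments whose strength is constant in $w$, and an inductive step that writes $\sigma(\alpha)=\infl(\tau(\alpha),\agg(A^\sigma_\alpha,S^\sigma_\alpha))$ and invokes closure of differentiability under products (for the terms $\sigma(\beta)\cdot w((\beta,\alpha))$) and under composition. Where you genuinely depart from the paper is DF-QuAD. The paper's proof simply asserts that the aggregation and influence functions of all four semantics ``are composed of elementary functions that are differentiable'' and never revisits the point; you correctly observe that this assertion fails for DF-QuAD, whose influence function $\infl(B,A)=B-B\,h(-A)+(1-B)\,h(A)$ with $h(x)=\max\{0,x\}$ has one-sided derivatives $B$ and $1-B$ in $A$ at $A=0$, which disagree unless $B=\tfrac12$. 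The failure is not hypothetical: for a topic argument with one leaf supporter $\beta$ and one leaf attacker $\gamma$ under product aggregation, the aggregate is $\sigma(\beta)w_\beta-\sigma(\gamma)w_\gamma$, and on the codimension-one set where it vanishes the composed strength function is genuinely non-differentiable when $\tau(\alpha)\neq\tfrac12$. So your treatment exposes a real gap in the paper's own proof, and your almost-everywhere repair is the right kind of fix.

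Two caveats on your repair. First, your exceptional set is defined by excising weight vectors ``at which no argument has a vanishing aggregate,'' but any parentless argument has aggregate $\agg(\emptyset,\emptyset)=0$ identically (as may arguments whose parents have identically zero weighted strength), so your open set would typically be empty. The correct excision is only of points where some aggregate vanishes without vanishing identically on a neighbourhood: where the aggregate is locally constant at $0$, the ReLU kink is never exercised and the composition remains (trivially) differentiable --- this is exactly what saves the base case. Second, almost-everywhere differentiability is strictly weaker than the lemma as stated, and the paper leans on the stronger form: immediately after Definition~\ref{def_grae} it claims that $\nabla^{\sigma}_{r\mapsto\alpha}$ is \emph{always} well-defined by this lemma, which under your version no longer holds for DF-QuAD at the exceptional weight configurations. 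So your proposal does not merely take a different route; it shows that the lemma (and the downstream well-definedness claim for G-RAEs) needs either your almost-everywhere qualification or a restriction of the DF-QuAD case to generic weights.
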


In the remainder, we will use $\sigma$ for any of these edge-weighted gradual semantics.

\section{Contestability Problem}
\label{sec_contestability}

In this section, we define and study the contestability problem for EW-QBAFs.
We will assume that the EW-QBAFs are acyclic. 
While this is a restriction, many applications such as PRSs \cite{cocarascu2019extracting,battaglia2024integrating,Rago_25}
naturally result in acyclic graphs due to their hierarchical structure.

Intuitively, the contestability problem for EW-QBAFs
is to find a modification of edge weights that yields 
a desired strength for a specified topic argument.
\begin{definition}[Contestability Problem]
\label{def_contest}
Given a \emph{topic argument} $\alpha \in \mathcal{A}$ 
and a desired strength $s$ for $\alpha$ such that $\sigma(\alpha) \neq s$ in $\mathcal Q$, 
let $\mathcal{Q}'=\langle \mathcal{A}, \mathcal{R^-}, \mathcal{R^+}, \tau, w' \rangle$ and $\sigma_{\mathcal{Q}'}(\alpha)$ denote the strength of $\alpha$ in $\mathcal{Q}'$,
the \emph{contestability problem} is to identify an edge weight function $w'$ such that $\sigma_{\mathcal{Q}'}(\alpha) = s$.
\end{definition}

As an example, consider the EW-QBAF in Figure~\ref{fig_qbaf}, where a user disagrees with the current movie rating score $\sigma(\alpha)=0.827$ and instead prefers a lower strength $s=0.3$. This scenario can be viewed as a contestability problem, in which the edge weights can be adjusted to better capture users' preferences.

Our first question is whether any desired strength for a given topic argument can be attained. 
Cocarascu et al.~\shortcite{cocarascu2019extracting}
introduced an \emph{attainability property} in the standard QBAF setting, which examines whether a desired strength for a topic argument can be attained with a certain set of attackers or supporters. Here, we adapt this notion to the EW-QBAF setting, where we focus on whether a desired strength for a topic argument can be attained by modifying the edge weights.
\begin{definition}[Attainability]
\label{def_attainability}
For any $\alpha \in \mathcal{A}$ and $s \in [0,1]$, we say that 
$s$ is \emph{attainable for} $\alpha$ iff there exists an edge weight function $w'$ such that $\sigma_{w'}(\alpha) = s$. The \emph{attainable set} of $\alpha$ is defined as
$S(\alpha) = \{s \mid s \text{ is attainable for } \alpha\}$.
\end{definition}
\begin{observation}
For any $\alpha \in \mathcal{A}$ and any desired strength $s \in [0,1]\setminus\{\sigma(\alpha)\}$ for $\alpha$, the solution to the contestability problem exists iff $s \in S(\alpha)$. 
\end{observation}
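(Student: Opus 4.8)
The plan is to observe that this statement is essentially a restatement of the two definitions it relates, so the proof amounts to unfolding Definitions~\ref{def_contest} and~\ref{def_attainability} and checking that the two existential conditions coincide. Both notions quantify over the same object — an edge weight function $w'$ — while holding $\mathcal{A}$, $\mathcal{R}^-$, $\mathcal{R}^+$ and $\tau$ fixed, and both require the resulting strength of $\alpha$ to equal $s$. The only cosmetic gap to bridge is that Definition~\ref{def_contest} writes this strength as $\sigma_{\mathcal{Q}'}(\alpha)$ for $\mathcal{Q}'=\langle\mathcal{A},\mathcal{R}^-,\mathcal{R}^+,\tau,w'\rangle$, whereas Definition~\ref{def_attainability} writes it as $\sigma_{w'}(\alpha)$; I would first note these denote the same quantity, since $w'$ is the only varying component.

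First I would prove the forward direction. Assuming a solution to the contestability problem exists, Definition~\ref{def_contest} yields an edge weight function $w'$ such that, for $\mathcal{Q}'=\langle\mathcal{A},\mathcal{R}^-,\mathcal{R}^+,\tau,w'\rangle$, we have $\sigma_{\mathcal{Q}'}(\alpha)=s$. Since $\sigma_{\mathcal{Q}'}(\alpha)=\sigma_{w'}(\alpha)$, this same $w'$ witnesses $\sigma_{w'}(\alpha)=s$, so $s$ is attainable for $\alpha$ by Definition~\ref{def_attainability}, i.e.\ $s\in S(\alpha)$.

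For the converse, I would assume $s\in S(\alpha)$. By Definition~\ref{def_attainability} there exists an edge weight function $w'$ with $\sigma_{w'}(\alpha)=s$. Taking $\mathcal{Q}'=\langle\mathcal{A},\mathcal{R}^-,\mathcal{R}^+,\tau,w'\rangle$ gives $\sigma_{\mathcal{Q}'}(\alpha)=s$; combined with the standing hypothesis $s\neq\sigma(\alpha)$, which is exactly the precondition $\sigma(\alpha)\neq s$ required in Definition~\ref{def_contest} and is guaranteed by restricting to $s\in[0,1]\setminus\{\sigma(\alpha)\}$, this $w'$ solves the contestability problem.

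The key point — and the reason there is no genuine obstacle here — is that the contestability problem and attainability are two formulations of the very same existential statement, with the restriction $s\in[0,1]\setminus\{\sigma(\alpha)\}$ merely ensuring the nontriviality precondition $\sigma(\alpha)\neq s$ so that Definition~\ref{def_contest} applies. The only care needed is to keep the notational identification $\sigma_{\mathcal{Q}'}=\sigma_{w'}$ explicit and to confirm that neither definition imposes an extra constraint the other lacks (e.g.\ on the admissible range of $w'$ or on which edges may be modified); since both permit any $w':\mathcal{R}\to[0,1]$, they match exactly, and the equivalence follows.
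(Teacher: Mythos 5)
Your proof is correct and follows essentially the same route as the paper's: both directions are handled by unfolding Definitions~\ref{def_contest} and~\ref{def_attainability} and noting that the same witness $w'$ serves for both existential statements. Your additional care about the identification $\sigma_{\mathcal{Q}'}(\alpha)=\sigma_{w'}(\alpha)$ and the nontriviality precondition $s\neq\sigma(\alpha)$ is a minor elaboration of what the paper's proof does implicitly.
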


The following proposition presents a special case where the base score is always attainable. 
\begin{proposition}[Base Score Attainability]
For any $\alpha \in \mathcal{A}$, 
if $\sigma$ satisfies edge-stability,
then  $\tau(\alpha) \in S(\alpha)$.
\end{proposition}

To find the boundaries of a topic argument's attainable strength, we first define the $\max$ and $\min$ edge weight functions (illustrated in Figure~\ref{fig_max_min}). These functions represent two extreme cases: one where the topic argument receives maximal supports and no attacks, and another where it receives maximal attack and no support.
\begin{definition}[Max and Min Edge Weight Functions]
    For any \( \alpha \in \mathcal{A} \),
    the \emph{max} and \emph{min edge weight functions} 
    $w^{\alpha}_{\max}$ and $w^{\alpha}_{\min}$ (respectively) are defined as follows:
\begin{itemize}
    \item\( w^{\alpha}_{\max}(r) = 1 \) for all \( r \in \mathcal{R}^{+} \);
    \item\( w^{\alpha}_{\max}(t) = 0 \) for all \( t \in \mathcal{R}^{-} \);
    \item\( w^{\alpha}_{\min}(r) = 1 \) for all \( r \in \mathcal{R}^{-}(\alpha) \cup (\mathcal{R}^{+} \setminus \mathcal{R}^{+}(\alpha)) \);
    \item\( w^{\alpha}_{\min}(t) = 0 \) for all \( t \in \mathcal{R}^{+}(\alpha) \cup (\mathcal{R}^{-} \setminus \mathcal{R}^{-}(\alpha)) \).
\end{itemize}
\end{definition}

Intuitively, to maximise $\sigma(\alpha)$ (Figure~\ref{fig_max_min}, left), we set all supports of $\alpha$ to 1 and attacks to 0 (e.g., $w(r_{10})=w(r_{12})=1$, $w(r_{11})=0$). We similarly boost its supporters and attackers (e.g., $w(r_4)=1$, $w(r_8)=0$; $w(r_6)=w(r_7)=w(r_9)=1$, $w(r_5)=0$), as attackers may indirectly increase $\sigma(\alpha)$. Recursively applying this yields all support edges as $1$, attack edges as $0$, and the maximal $\sigma(\alpha)$.



\begin{figure}[t]
    \centering
    \includegraphics[width=1.0\linewidth]{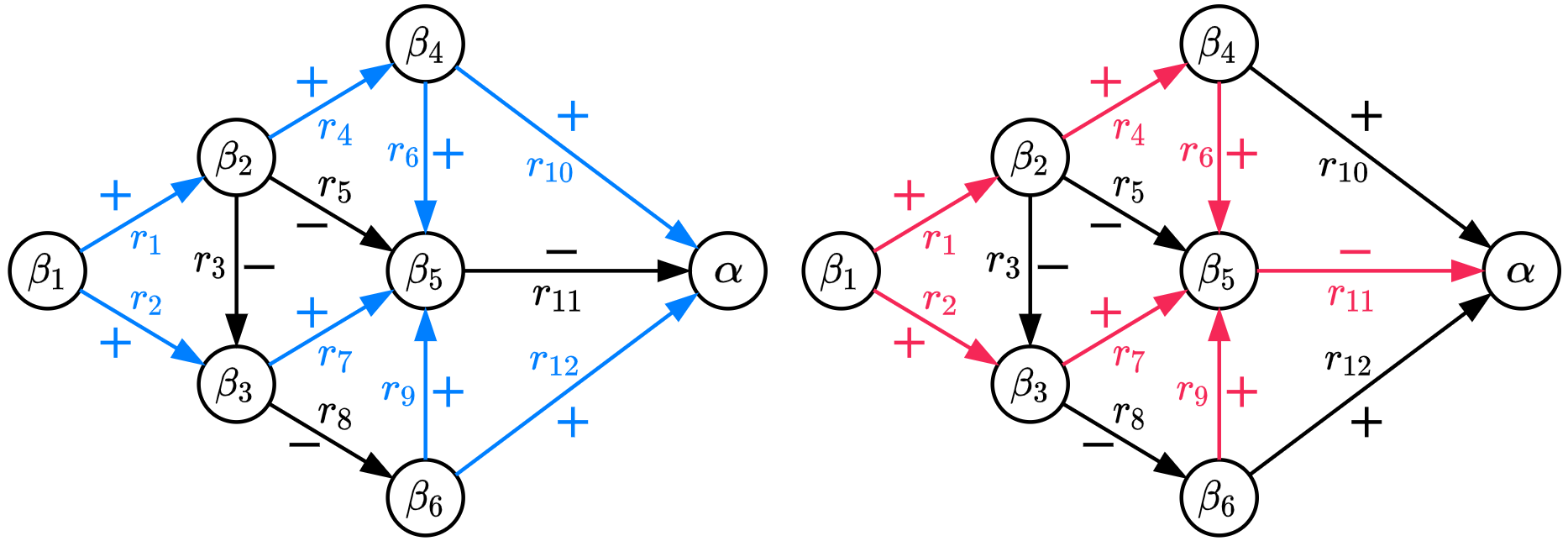}
    \caption{Illustration of the $\max$ (left) and $\min$ (right) edge weight functions. Blue or red edges are assigned a weight of 1, while black edges are assigned a weight of 0.}
    \label{fig_max_min}
\end{figure}
The following theorem states that the max and min edge weight functions determine the maximum and minimum of the attainable set.
\begin{theorem}[Maximum and Minimum of Attainability]
\label{theorem1}
For any \( \alpha \in \mathcal{A} \) and any $\sigma$ satisfying edge-monotonicity, monotonicity, and edge-neutrality, \( \sigma_{w^{\alpha}_{\max}}(\alpha) \) and \( \sigma_{w^{\alpha}_{\min}}(\alpha) \) are the maximum and minimum of \( S(\alpha) \), respectively.
\end{theorem}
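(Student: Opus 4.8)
The plan is to first note that $w^{\alpha}_{\max}$ and $w^{\alpha}_{\min}$ are themselves edge weight functions, so $\sigma_{w^{\alpha}_{\max}}(\alpha), \sigma_{w^{\alpha}_{\min}}(\alpha) \in S(\alpha)$ by Definition~\ref{def_attainability}; it therefore remains only to show that they bound $S(\alpha)$, i.e. that $\sigma_{w^{\alpha}_{\min}}(\alpha) \leq \sigma_{w'}(\alpha) \leq \sigma_{w^{\alpha}_{\max}}(\alpha)$ for every edge weight function $w'$. Since the EW-QBAF is acyclic, I would fix a topological ordering of $\mathcal{A}$ and argue by induction along it, reducing each case to a single \emph{domination} step: at the relevant argument the edge-weighted attacker and supporter multisets under the extremal function dominate (or are dominated by) those under $w'$, after which the monotonicity of the aggregation and influence functions transfers the inequality to the strength. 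Throughout I write $A^{\max}_\beta, S^{\max}_\beta$ (resp. $A'_\beta, S'_\beta$) for the edge-weighted attack and support multisets at $\beta$ under $w^{\alpha}_{\max}$ (resp. $w'$).

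For the upper bound I would prove the stronger statement that $w^{\alpha}_{\max}$ maximises \emph{every} argument simultaneously: $\sigma_{w^{\alpha}_{\max}}(\beta) \geq \sigma_{w'}(\beta)$ for all $\beta \in \mathcal{A}$ and all $w'$. The base case (source arguments) is immediate, since their strength equals their base score regardless of $w$. For the inductive step at $\beta$, observe that $w^{\alpha}_{\max}$ assigns weight $1$ to every support into $\beta$ and $0$ to every attack into $\beta$. Hence $S^{\max}_\beta = \{\sigma_{w^{\alpha}_{\max}}(\gamma)\}$ dominates $S'_\beta = \{\sigma_{w'}(\gamma)\cdot w'((\gamma,\beta))\}$, because $\sigma_{w^{\alpha}_{\max}}(\gamma) \geq \sigma_{w'}(\gamma) \geq \sigma_{w'}(\gamma)\cdot w'((\gamma,\beta))$ by the induction hypothesis and $w'(\cdot)\leq 1$; meanwhile $A^{\max}_\beta$ has $\core(A^{\max}_\beta)=\emptyset$ and is thus dominated by $A'_\beta$ (neutrality). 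Aggregation-monotonicity then yields $\agg(A^{\max}_\beta, S^{\max}_\beta) \geq \agg(A'_\beta, S'_\beta)$, and influence-monotonicity lifts this to $\sigma_{w^{\alpha}_{\max}}(\beta) \geq \sigma_{w'}(\beta)$. Instantiating $\beta = \alpha$ gives the upper bound.

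The lower bound is the delicate part, which I expect to be the main obstacle, because $w^{\alpha}_{\min}$ is \emph{asymmetric}: it does not minimise upstream strengths but rather reuses the support-$1$/attack-$0$ pattern of $w^{\alpha}_{\max}$ on every edge \emph{not} incident to $\alpha$, while flipping $\alpha$'s own edges (direct attacks to $1$, direct supports to $0$). The point I would have to make precise is that this counterintuitively \emph{maximises} the strength of each direct attacker $\beta \in \mathcal{R}^{-}(\alpha)$: by acyclicity no edge into $\alpha$ lies upstream of $\beta$, so $w^{\alpha}_{\min}$ agrees with $w^{\alpha}_{\max}$ on the entire subgraph feeding $\beta$, and edge-directionality together with the max claim gives $\sigma_{w^{\alpha}_{\min}}(\beta) = \max_{w'}\sigma_{w'}(\beta)$. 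At $\alpha$ the direct supports then carry weight $0$, so $S^{\min}_\alpha$ has empty core (edge-neutrality) and is dominated by every $S'_\alpha$, whereas each direct attacker sits at its maximal strength with weight $1$, so $A^{\min}_\alpha$ dominates every $A'_\alpha$. Aggregation-monotonicity (now driving the aggregate \emph{down}) and influence-monotonicity yield $\sigma_{w^{\alpha}_{\min}}(\alpha) \leq \sigma_{w'}(\alpha)$.

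Thus the hard part is conceptual rather than computational: recognising that minimising $\sigma(\alpha)$ requires \emph{strengthening} rather than weakening the attackers (once their weight is fixed at $1$, $\sigma(\alpha)$ is monotone in their strength), while the supporters must be neutralised by weight $0$ rather than by being weakened. Establishing $\sigma_{w^{\alpha}_{\min}}(\beta) = \max_{w'}\sigma_{w'}(\beta)$ via the coincidence of $w^{\alpha}_{\min}$ and $w^{\alpha}_{\max}$ on the subgraph feeding $\beta$, justified by edge-directionality, is the crux; the remainder is careful bookkeeping of which domination ($A^{\min}_\alpha \succeq A'_\alpha$ versus $S'_\alpha \succeq S^{\min}_\alpha$) produces which direction of the aggregate inequality.
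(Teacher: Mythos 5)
Your proposal is correct and follows the same conceptual skeleton as the paper's proof: establish the maximum case by driving all supports to $1$ and all attacks to $0$, and then observe---the key insight, which you correctly identify as the crux---that minimising $\sigma(\alpha)$ requires \emph{maximising} the strengths of $\alpha$'s direct attackers (whose incoming subgraphs see exactly the max-pattern of weights) while neutralising $\alpha$'s supports via weight $0$ rather than by weakening the supporters. Where you differ is in execution, and your version is in fact tighter: the paper argues greedily and recursively at the level of the three named semantics properties (``to maximise $\sigma(\alpha)$ we need to set direct supports to $1$ and attacks to $0$, then maximise the parents, then recurse''), without formally justifying that the local greedy choices compose into global optimality; you instead prove the stronger invariant that $w^{\alpha}_{\max}$ simultaneously maximises \emph{every} argument, by induction along a topological ordering, with each step reduced to multiset domination ($S^{\max}_\beta \succeq S'_\beta$, $\core(A^{\max}_\beta)=\emptyset$) plus monotonicity of the aggregation and influence functions. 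The trade-off is that your argument operates on the modular-semantics machinery (monotone $\agg$ and $\infl$, neutrality) rather than on the three abstract semantics-level properties named in the theorem's hypothesis, so strictly speaking you prove the statement for modular semantics with monotone aggregation/influence functions---which covers QE, REB, DF-QuAD and MLP-based semantics, i.e.\ every semantics the paper instantiates the theorem with, but is formally a different (and arguably more honest) hypothesis than ``any $\sigma$ satisfying edge-monotonicity, monotonicity, and edge-neutrality.'' This is worth flagging because the paper's own appeal to its ``monotonicity'' property is itself delicate: that property is only defined for single-path ($|P_{\beta\mapsto\alpha}|=1$) situations, so the paper's recursive step glosses over multifold edges, a gap your stronger induction invariant avoids entirely.
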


While Theorem~\ref{theorem1} characterises the boundaries of the attainable set, the next natural question is whether any value between the maximum and the minimum can be attained, which is confirmed by the following theorem.
\begin{theorem}[Completeness of Attainability]
\label{theorem2}
For any \( \alpha \in \mathcal{A} \),
let $m$ and $M$ denote the minimum and maximum of $S(\alpha)$, respectively. If $\sigma$ satisfies continuity, then $S(\alpha) =[m,M]$.
\end{theorem}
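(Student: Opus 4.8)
The plan is to prove the two inclusions $S(\alpha) \subseteq [m,M]$ and $[m,M] \subseteq S(\alpha)$ separately, obtaining the latter through an intermediate-value argument along a path in edge-weight space. The first inclusion is immediate: since $m$ and $M$ are, by definition, the minimum and maximum of $S(\alpha)$, every attainable value lies between them, so $S(\alpha) \subseteq [m,M]$ with no further work.

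For the harder inclusion, I would exploit Theorem~\ref{theorem1}, which identifies the extremes concretely as $m = \sigma_{w^{\alpha}_{\min}}(\alpha)$ and $M = \sigma_{w^{\alpha}_{\max}}(\alpha)$. The key observation is that the set of admissible edge-weight functions is the cube $[0,1]^{|\mathcal{R}|}$, which is convex. I would therefore connect the two extremal functions by the straight-line homotopy $w_t = (1-t)\,w^{\alpha}_{\min} + t\,w^{\alpha}_{\max}$ for $t \in [0,1]$; by convexity each value $w_t(r)$ remains in $[0,1]$, so every $w_t$ is a valid edge-weight function. Define $g : [0,1] \to [0,1]$ by $g(t) = \sigma_{w_t}(\alpha)$. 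The map $t \mapsto w_t$ is affine and hence continuous, and by the continuity hypothesis on $\sigma$ the map $w \mapsto \sigma_w(\alpha)$ is continuous; thus $g$ is continuous as their composition, with $g(0) = m$ and $g(1) = M$.

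The conclusion then follows from the Intermediate Value Theorem: for any target $s \in [m,M]$ there is some $t^\star \in [0,1]$ with $g(t^\star) = s$, i.e.\ $\sigma_{w_{t^\star}}(\alpha) = s$, witnessing that $s$ is attainable and hence $s \in S(\alpha)$. Combining both inclusions yields $S(\alpha) = [m,M]$. Note that no monotonicity of $g$ is required—only continuity—so the path may freely re-enter and leave intermediate values without affecting the argument.

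I expect the main obstacle to be one of bookkeeping rather than mathematical depth: making the continuity hypothesis precise enough that it delivers continuity of $w \mapsto \sigma_w(\alpha)$ in the (sup-)metric on $[0,1]^{|\mathcal{R}|}$, so that the composition $g$ is genuinely continuous and the IVT applies to it. Lemma~\ref{lemma_differentiability}, which gives differentiability—and hence continuity—of the strength with respect to edge weights for acyclic EW-QBAFs under the semantics considered, should discharge exactly this requirement, reducing the step to a routine verification.
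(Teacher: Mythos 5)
Your proposal is correct and follows essentially the same route as the paper's proof: both establish attainability of the extremes and then invoke the Intermediate Value Theorem together with continuity of the strength in the edge weights. Your version is in fact slightly more careful, since you make explicit the one-dimensional path (the straight-line homotopy $w_t = (1-t)\,w^{\alpha}_{\min} + t\,w^{\alpha}_{\max}$ in the cube $[0,1]^{|\mathcal{R}|}$) along which the IVT is applied, a detail the paper's terse proof leaves implicit.
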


In this section, we introduced the contestability problem and investigated its solvability. We will now introduce an explanation method that can guide the search for a solution.

\section{Explanations and Properties}
\label{sec_grae}
In this section, we propose a novel notion of a \emph{gradient-based relation attribution explanation (G-RAE)} and study its properties adapted from the literature.

\subsection{G-RAEs}
In order to attain a desired strength of a topic argument in acyclic EW-QBAFs, we combine the ideas of gradient-based AAEs and RAEs, and propose a novel notion of \emph{G-RAEs}, which capture the sensitivity of the strength of a topic argument with respect to the changes of individual edge weights.
\begin{definition}[Gradient-based Relation Attribution Explanations (G-RAEs)]
\label{def_grae}
Let $r \in \mathcal{R}$ and $\alpha \in \mathcal{A}$ be a \emph{topic argument}.
For a perturbation $\varepsilon \in [-w(r),0) \cup (0,1-w(r)]$, let $w'$ be an edge weight function such that $w'(r)=w(r)+\varepsilon$ and $w'(t) = w(t)$ for all $t \in \mathcal{R} \setminus \{r\}$.
The \emph{G-RAE} from $r$ to $\alpha$ under $\sigma$ is 
$$\nabla_{r \mapsto \alpha}^{\sigma}=\lim_{\varepsilon \to 0}\frac{\sigma_{w'}(\alpha)-\sigma(\alpha)}{\varepsilon}.$$
\end{definition}
Let us note that, by Lemma \ref{lemma_differentiability}, $\nabla_{r \mapsto \alpha}^{\sigma} \in \mathbb{R}$ is always well-defined in acyclic EW-QBAFs.  
We next distinguish attribution influence based on the sign of G-RAE.
\begin{definition}[Attribution Influence]
\label{def_attribution_influence}
We say that the \emph{attribution influence} from
$r$ to $\alpha$ is \emph{positive} if $\nabla_{r \mapsto \alpha}^{\sigma}>0$, \emph{negative} if $\nabla_{r \mapsto \alpha}^{\sigma}<0$, and \emph{neutral} if $\nabla_{r \mapsto \alpha}^{\sigma}=0$.
\end{definition}

As explanations scores, G-RAEs reveal both the direction and magnitude of each edge's influence on the topic argument, which could be practically useful in various domains such as PRS contestation or MLP debugging, where identifying and adjusting the most influential edges may directly enhance user satisfaction or improve model performance.

\begin{example}
\label{example_GRAE}
Consider the EW-QBAF in Figure~\ref{fig_qbaf} where $\sigma$ is given by MLP-based semantics. Table~\ref{tab_gradient_movie} shows their G-RAEs w.r.t. the topic argument \emph{Movie} in decreasing order.
\begin{table}[h!]
\centering
\small
\begin{tabular}{lr}
\toprule
\textbf{Edge} & \textbf{G-RAE} \\
\midrule
$(Acting, Movie)$ & $0.02408$ \\
$(Themes, Movie)$ & $0.01799$ \\
$(Meryl, Acting)$ & $0.00133$ \\
$(Tom, Acting)$ & $0.00095 $\\
$(Freedom, Themes)$ & $0.00088$ \\
$(Romance, Themes)$ & $-0.00066$ \\
$(Writing, Movie)$ & $-0.00287$ \\
\bottomrule
\end{tabular}
\caption{G-RAEs with respect to the topic argument \textit{Movie}.}
\label{tab_gradient_movie}
\end{table}
\end{example}

\subsection{Properties of G-RAEs}
We next adapt some commonly used properties of argumentative attribution explanations from \cite{AAE_ECAI,RAE_IJCAI}, and examine the satisfaction of these properties.

We start by analysing the influence of \textbf{direct}, \textbf{indirect} and \textbf{independent} edges (Definition~\ref{def_edge_type}) in the following three propositions.
Direct Influence~\cite{AAE_ECAI,RAE_IJCAI} shows that our G-RAEs correctly capture the qualitative effects of direct connectivity: a direct attack (support) always yields non-positive (non-negative, respectively) influence on the topic argument.
\begin{proposition}[Direct Influence]
\label{proposition_direct_influence}
Let $r \in \mathcal{R}$ be a direct edge w.r.t. $\alpha \in \mathcal{A}$. Then the following statements hold if $\sigma$ satisfies edge-monotonicity:

1. If $r \in \mathcal{R^{+}}$, then $\nabla_{r \mapsto \alpha}^{\sigma} \geq 0$;

2. If $r \in \mathcal{R^{-}}$, then $\nabla_{r \mapsto \alpha}^{\sigma} \leq 0$.

\end{proposition}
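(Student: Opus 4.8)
The plan is to recognise the G-RAE $\nabla_{r\mapsto\alpha}^{\sigma}$ as an ordinary one-variable derivative and then read off its sign from the monotonicity of the underlying function. Concretely, fix the direct edge $r$ and define $g\colon [0,1] \to [0,1]$ by $g(x) = \sigma_{w_x}(\alpha)$, where $w_x$ is the edge weight function that agrees with $w$ on every edge except $r$ and assigns $w_x(r) = x$. By Lemma~\ref{lemma_differentiability}, $g$ is differentiable at $x = w(r)$, and unwinding Definition~\ref{def_grae} shows that $\nabla_{r\mapsto\alpha}^{\sigma} = g'(w(r))$, since the perturbed weight function $w'$ there is exactly $w_{w(r)+\varepsilon}$ and the difference quotient is precisely that of $g$ at $w(r)$.

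Next I would use edge-monotonicity to establish that $g$ is monotone. Since $r$ is a direct edge w.r.t. $\alpha$, we have $r \in \mathcal{R}(\alpha)$; hence $r \in \mathcal{R}^{+}$ forces $r \in \mathcal{R}^{+}(\alpha)$, and $r \in \mathcal{R}^{-}$ forces $r \in \mathcal{R}^{-}(\alpha)$. In the support case, item~2 of edge-monotonicity applied to any two weights $x_1 \le x_2$ at $r$ (keeping all other edges fixed) yields $g(x_1) \le g(x_2)$, so $g$ is non-decreasing. In the attack case, item~1 yields $g(x_1) \ge g(x_2)$, so $g$ is non-increasing.

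Finally, the sign of the derivative follows from the sign of the difference quotients. In the support case, $g$ non-decreasing gives $\frac{g(w(r)+\varepsilon)-g(w(r))}{\varepsilon} \ge 0$ for every admissible $\varepsilon \neq 0$, because numerator and $\varepsilon$ share the same sign; passing to the limit that defines $g'(w(r))$ preserves the weak inequality, so $\nabla_{r\mapsto\alpha}^{\sigma} \ge 0$. The attack case is symmetric: the quotient is $\le 0$ throughout, giving $\nabla_{r\mapsto\alpha}^{\sigma} \le 0$.

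I do not anticipate a serious obstacle here; the only mild care needed is at the boundary of the weight domain (when $w(r) \in \{0,1\}$), where the perturbation range in Definition~\ref{def_grae} is one-sided. The difference-quotient argument is unaffected by this, since the sign inequality on the quotient holds for whichever sign of $\varepsilon$ is admissible, so the resulting (possibly one-sided) limit still carries the claimed sign.
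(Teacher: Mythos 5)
Your proof is correct and follows essentially the same route as the paper's: both use edge-monotonicity (noting that a direct edge in $\mathcal{R}^{+}$ or $\mathcal{R}^{-}$ lies in $\mathcal{R}^{+}(\alpha)$ or $\mathcal{R}^{-}(\alpha)$, respectively) to fix the sign of the difference quotient for both positive and negative perturbations, then pass to the limit in Definition~\ref{def_grae}. If anything, yours is slightly more careful, since you invoke Lemma~\ref{lemma_differentiability} to justify that the limit exists and you address the one-sided boundary cases $w(r) \in \{0,1\}$, points the paper's proof glosses over.
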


We next make several differentiations for indirect connectivity~\cite{Rago_23,RAE_IJCAI} because the polarity of an edge may invert along the path. For example, an attacker of an attacker actually serves as a supporter. 
\begin{proposition}[Indirect Influence]
\label{proposition_sign_correct_indirect}
Let $r$ be an indirect edge w.r.t. $\alpha$. Suppose the path sequence from $r$ to $\alpha$ is $\langle r, r_{1}, \cdots, r_{n} \rangle (n \geq 1)$. Let $\lambda= \left| \{r_{1}, \cdots, r_{n} \} \cap \mathcal{R}^{-} \right|$. Then the following statements hold if $\sigma$ satisfies both monotonicity and edge-monotonicity.

    1. If $r \in \mathcal{R^{+}}$ and $\lambda$ is odd, then $\nabla_{r \mapsto \alpha}^{\sigma} \leq 0$;
    
    2. If $r \in \mathcal{R^{+}}$ and $\lambda$ is even, then $\nabla_{r \mapsto \alpha}^{\sigma} \geq 0$;

    3. If $r \in \mathcal{R^{-}}$ and $\lambda$ is odd, then $\nabla_{r \mapsto \alpha}^{\sigma} \geq 0$;
    
    4. If $r \in \mathcal{R^{-}}$ and $\lambda$ is even, then $\nabla_{r \mapsto \alpha}^{\sigma} \leq 0$.
\end{proposition}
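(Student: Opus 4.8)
The plan is to express the sign of $\nabla_{r \mapsto \alpha}^{\sigma}$ as a product of a sign contributed by $r$ itself and one contributed by each edge along the unique path to $\alpha$, and to check that this product matches the four cases. Write $r=(\beta,\gamma_0)$ with $\gamma_0=\gamma$, and let the unique path from $\gamma_0$ to $\alpha$ be $\gamma_0,\gamma_1,\dots,\gamma_n=\alpha$ with $r_i=(\gamma_{i-1},\gamma_i)$. Assign to each edge the polarity $+1$ if it is a support and $-1$ if it is an attack. The target claim is that $\nabla_{r \mapsto \alpha}^{\sigma}\geq 0$ when the product of all these polarities (including that of $r$) equals $+1$, and $\nabla_{r \mapsto \alpha}^{\sigma}\leq 0$ when it equals $-1$; since the polarities of $r_1,\dots,r_n$ multiply to $(-1)^{\lambda}$, this is exactly statements 1--4.

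First I would reduce the sign of the derivative to the direction of a one-sided perturbation. By Lemma~\ref{lemma_differentiability} the strength function is differentiable with respect to edge weights on acyclic EW-QBAFs, so $\nabla_{r \mapsto \alpha}^{\sigma}$ exists and equals the corresponding one-sided derivative. Hence it suffices to show, for all sufficiently small admissible $\varepsilon>0$ (with $w'(r)=w(r)+\varepsilon$ and all other weights unchanged; if $w(r)=1$ one perturbs downward instead, which flips the difference quotient consistently), that $\sigma_{w'}(\alpha)\geq\sigma(\alpha)$ in the $+1$ case and $\sigma_{w'}(\alpha)\leq\sigma(\alpha)$ in the $-1$ case; the inequality on the difference quotient then passes to the limit.

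Next I would exploit $|P_{\gamma_0 \mapsto \alpha}|=1$ to obtain the key structural fact that the perturbation propagates along a single chain: for every $i$, the only in-neighbour of $\gamma_i$ whose strength is altered by the change of $w(r)$ is $\gamma_{i-1}$. Indeed, in an acyclic EW-QBAF a node's strength depends on $w(r)$ only if the node equals $\gamma_0$ or is reachable from $\gamma_0$; so any other in-neighbour $\mu\neq\gamma_{i-1}$ whose strength changes would be reachable from $\gamma_0$, and concatenating a path $\gamma_0\to\cdots\to\mu\to\gamma_i$ with the remainder of the path from $\gamma_i$ to $\alpha$ would produce a second path from $\gamma_0$ to $\alpha$, contradicting uniqueness. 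With this in hand I would induct along the chain. For the base step, edge-monotonicity applied at $\gamma_0$ (with $r$ the perturbed incoming edge) gives $\sigma_{w'}(\gamma_0)\geq\sigma(\gamma_0)$ if $r\in\mathcal{R}^{+}$ and $\sigma_{w'}(\gamma_0)\leq\sigma(\gamma_0)$ if $r\in\mathcal{R}^{-}$, i.e.\ a change of the polarity of $r$. For the inductive step, the structural fact says that at $\gamma_i$ only the value arriving along $r_i$ has moved, and monotonicity then transmits the sign across $r_i$, multiplying it by $+1$ if $r_i$ is a support and by $-1$ if $r_i$ is an attack. Composing the signs over $i=0,\dots,n$ yields the polarity product $(-1)^{\lambda}$ times the polarity of $r$, and the difference-quotient argument converts this into the claimed sign of $\nabla_{r \mapsto \alpha}^{\sigma}$.

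The main obstacle is the inductive step, because the monotonicity property is stated for a change of a \emph{base score} of a parent, whereas here the quantity that moves is the \emph{strength} $\sigma(\gamma_{i-1})$ entering $\gamma_i$ through $r_i$. I would bridge this using the unique-path structure together with edge-stability: deleting $\gamma_{i-1}$'s incoming edges and resetting its base score to its current strength leaves $\sigma(\gamma_{i-1})$ unchanged (edge-stability), and hence leaves $\sigma(\gamma_i)$ and $\sigma(\alpha)$ unchanged, while turning $\gamma_{i-1}$ into a source with the single outgoing edge $r_i$, so that $|P_{\gamma_{i-1}\mapsto\gamma_i}|=1$ and monotonicity applies directly to this source's base score and signs the transmission across $r_i$. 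For modular semantics the same step can be done more directly by differentiating $\sigma(\gamma_i)=\infl(\tau(\gamma_i),\agg(\cdot))$: its partial with respect to $\sigma(\gamma_{i-1})$ factors as a nonnegative influence-derivative, times an aggregation-derivative of sign $+$ for supports and $-$ for attacks, times the nonnegative weight $w(r_i)$. A minor point is that all factors carry only non-strict signs, so some may vanish; this is consistent with the non-strict inequalities in statements 1--4.
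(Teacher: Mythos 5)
Your proposal is correct and takes essentially the same route as the paper's proof: bound both one-sided difference quotients by applying edge-monotonicity at the child of $r$, propagate the resulting sign along the unique path to $\alpha$ via monotonicity (flipping it once per attack), and pass to the limit to obtain the sign $(-1)^{\lambda}$ times the polarity of $r$. You are in fact more careful than the paper, whose proof leaves implicit both the single-chain propagation fact you derive from path uniqueness and the bridge from monotonicity (a base-score property) to perturbed \emph{strengths} feeding into $\gamma_i$ (which you patch via edge-stability surgery or the chain rule), and whose write-up of case 1 even carries a sign typo (it concludes $\sigma_{w'}(\alpha)\geq\sigma(\alpha)$ and hence $\nabla_{r \mapsto \alpha}^{\sigma}\geq 0$, where the statement requires $\leq 0$).
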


\begin{example}
In Example~\ref{example_GRAE}, since $(Acting,Movie)$ and $(Writing,Movie)$ are direct support and attack w.r.t. the topic argument $Movie$, their G-RAEs are positive and negative, respectively, by Proposition~\ref{proposition_direct_influence}. In addition, since the path from $Romance$ to $Movie$ contains one attack 
(i.e., an odd number of attacks), the G-RAE of $(Romance,Movie)$ is negative by Proposition~\ref{proposition_sign_correct_indirect}.
\end{example}

Irrelevance states that any edge independent of the topic argument has no influence.
\begin{proposition}[Irrelevance]
\label{proposition_orrelevance}
Let $\alpha \in \mathcal{A}$. If $r \in \mathcal{R}$ is an independent edge w.r.t.
$\alpha$ and $\sigma$ satisfies edge-directionality, then $\nabla_{r \mapsto \alpha}^{\sigma} = 0$.
\end{proposition}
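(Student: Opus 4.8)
The plan is to prove Proposition~\ref{proposition_orrelevance} by reducing the claim about the limiting difference quotient directly to the \emph{edge-directionality} property. Recall that an independent edge $r = (\beta, \gamma)$ w.r.t.\ $\alpha$ satisfies $|P_{\gamma \mapsto \alpha}| = 0$, i.e., there is \emph{no} path from the head $\gamma$ of the edge to the topic argument $\alpha$. The key observation is that edge-directionality (Definition~\ref{def_directionality}) tells us precisely that, under these conditions, perturbing the weight of such an edge cannot change $\sigma(\alpha)$ at all. So the strategy is to show that the numerator of the G-RAE difference quotient is identically zero for every admissible perturbation, whence the limit is trivially zero.

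First I would fix an arbitrary perturbation $\varepsilon \in [-w(r),0) \cup (0, 1-w(r)]$ and let $w'$ be the perturbed edge weight function from Definition~\ref{def_grae}, so that $w'(r) = w(r) + \varepsilon$ and $w'(t) = w(t)$ for all $t \in \mathcal{R} \setminus \{r\}$. This defines an EW-QBAF $\mathcal{Q}' = \langle \mathcal{A}, \mathcal{R}^-, \mathcal{R}^+, \tau, w' \rangle$ that differs from $\mathcal{Q}$ only in the weight of the single independent edge $r = (\beta, \gamma)$. Next I would invoke edge-directionality. To apply it cleanly, I would take the roles in Definition~\ref{def_directionality} to be the edge $(\alpha_{\text{dir}}, \beta_{\text{dir}}) = (\beta, \gamma)$, so that the condition $P_{\beta_{\text{dir}} \mapsto \gamma_{\text{dir}}} = \emptyset$ becomes $P_{\gamma \mapsto \alpha} = \emptyset$, which is exactly the independence hypothesis $|P_{\gamma \mapsto \alpha}| = 0$. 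Edge-directionality then guarantees that $\sigma(\alpha) \equiv \sigma_{\mathcal{Q}'}(\alpha)$ for \emph{any} value $w'((\beta,\gamma)) \in [0,1]$, and in particular for our perturbed value $w(r) + \varepsilon$.

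A small technical point to handle carefully is that edge-directionality as stated compares $\mathcal{Q}$ with a $\mathcal{Q}'$ obtained by \emph{adding} an edge (the containment $\mathcal{R}^- \subseteq \mathcal{R}^{-\prime}$ etc.), whereas in the G-RAE setting the edge $r$ is already present in $\mathcal{Q}$ and we merely \emph{reweight} it. I would bridge this gap by noting that the directionality statement quantifies over \emph{all} weights $w'((\beta,\gamma)) \in [0,1]$: applying it once with the edge weight set to $w(r)$ and once with it set to $w(r)+\varepsilon$ shows that both $\sigma_{\mathcal{Q}}(\alpha)$ and $\sigma_{\mathcal{Q}'}(\alpha)$ equal the common strength value that is independent of the weight on $(\beta,\gamma)$, hence $\sigma_{w'}(\alpha) = \sigma(\alpha)$. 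Consequently the numerator $\sigma_{w'}(\alpha) - \sigma(\alpha) = 0$ for every admissible $\varepsilon$, and therefore
\begin{equation*}
\nabla_{r \mapsto \alpha}^{\sigma} = \lim_{\varepsilon \to 0} \frac{\sigma_{w'}(\alpha) - \sigma(\alpha)}{\varepsilon} = \lim_{\varepsilon \to 0} \frac{0}{\varepsilon} = 0.
\end{equation*}

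The main obstacle, and the only nonroutine step, is the careful alignment of the quantifier structure of Definition~\ref{def_directionality} with the reweighting-of-an-existing-edge setup of the G-RAE, as described above; once that bookkeeping is done the result is immediate. Everything else is a direct substitution into the definition of the G-RAE limit. I would also remark that this proposition is the edge-weighted analogue of the standard \emph{irrelevance} property for attribution explanations, and that its proof relies on no differentiability or monotonicity assumptions beyond edge-directionality, which by Lemma (every modular semantics satisfies edge-directionality) holds for all the semantics under consideration.
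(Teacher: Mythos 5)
Your proposal is correct and takes essentially the same route as the paper's proof: both reduce the claim to edge-directionality, concluding that $\sigma_{w'}(\alpha) = \sigma(\alpha)$ for every admissible reweighting of the independent edge, so the difference quotient---and hence its limit $\nabla_{r \mapsto \alpha}^{\sigma}$---is identically zero. The only difference is that you make explicit the bookkeeping of applying Definition~\ref{def_directionality} twice through the edge-free framework (once at weight $w(r)$, once at $w(r)+\varepsilon$), a step the paper's terser proof leaves implicit.
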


In the following, we adapt two existing properties to align with the EW-QBAF setting and examine their satisfaction in the direct and indirect cases. Counterfactuality, inspired by \cite{AAE_ECAI,RAE_IJCAI}, considers how the strength of a topic argument changes when an edge is removed, which, in our setting, corresponds to setting edge weight to 0.
\begin{property}[Counterfactuality]
\label{property_rae_counterfactuality}
Let $\alpha \in \mathcal{A}$ and $r \in \mathcal{R}$.
Let $w'$ be an edge weight function such that $w'(r)=0$ and $w'(t) = w(t)$ for all $t \in \mathcal{R} \setminus \{r\}$.

1. If $\nabla_{r \mapsto \alpha}^{\sigma} < 0$, then $\sigma(\alpha) \leq \sigma_{w'}(\alpha)$;

2. If $\nabla_{r \mapsto \alpha}^{\sigma}>0$, then $\sigma(\alpha) \geq \sigma_{w'}(\alpha)$.

\end{property}

\begin{proposition}
\label{proposition_counterfactuality_direct}
Let $r$ be a direct or indirect edge w.r.t. $\alpha$.
$\nabla_{r \mapsto \alpha}^{\sigma}$ satisfies counterfactuality if $\sigma$ satisfies both edge-monotonicity and monotonicity.
\end{proposition}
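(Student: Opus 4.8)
The plan is to reduce the counterfactual statement to a sign-consistency argument between the G-RAE (the infinitesimal derivative) and the finite change obtained by driving the weight $w(r)$ down to $0$. The key observation is that, for acyclic EW-QBAFs under the semantics considered, the strength $\sigma_{w'}(\alpha)$ is a \emph{monotone} function of the single weight $w(r)$ when all other weights are held fixed: this monotonicity is exactly what the earlier edge-monotonicity and monotonicity properties deliver (cf.\ Definitions~\ref{def_edge_monotonicity} and~\ref{def_mono}, and the sign characterisations already established in Propositions~\ref{proposition_direct_influence} and~\ref{proposition_sign_correct_indirect}). First I would fix the edge $r$ and define the scalar function $g(x) = \sigma_{w_x}(\alpha)$, where $w_x$ agrees with $w$ except that $w_x(r) = x$ for $x \in [0,1]$. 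By Lemma~\ref{lemma_differentiability}, $g$ is differentiable on $[0,1]$, and by Definition~\ref{def_grae} we have $g'(w(r)) = \nabla_{r \mapsto \alpha}^{\sigma}$.

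The main step is to argue that $g$ is monotone in $x$ on the whole interval $[0,1]$, not merely at the point $x = w(r)$, with the direction of monotonicity determined by the type (direct/indirect) and polarity of $r$. For this I would split into the direct and indirect cases. In the \emph{direct} case, edge-monotonicity applied to a single-weight increase gives directly that $g$ is non-decreasing if $r \in \mathcal{R}^{+}$ and non-increasing if $r \in \mathcal{R}^{-}$ (this is precisely Definition~\ref{def_edge_monotonicity} read as a statement about varying one weight). In the \emph{indirect} case, I would invoke the parity argument underlying Proposition~\ref{proposition_sign_correct_indirect}: composing edge-monotonicity at the relation $r$ with monotonicity propagated along the unique path $\langle r, r_1, \dots, r_n\rangle$ (uniqueness because $|P_{\gamma \mapsto \alpha}| = 1$ for an indirect edge) shows that $g$ is globally monotone, with direction fixed by the polarity of $r$ together with the parity $\lambda$ of the number of attacks on the path. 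In both cases the crucial point is that the \emph{sign of $g'$ is constant} across $[0,1]$, so the sign of the local derivative $\nabla_{r \mapsto \alpha}^{\sigma} = g'(w(r))$ agrees with the direction of the finite change from $g(w(r))$ to $g(0)$.

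Once global monotonicity with a fixed direction is in hand, the conclusion is immediate: if $\nabla_{r \mapsto \alpha}^{\sigma} < 0$ then $g$ is non-increasing, and since $0 \leq w(r)$ we get $g(0) \geq g(w(r))$, i.e.\ $\sigma_{w'}(\alpha) \geq \sigma(\alpha)$, which is item~1; the symmetric argument with $\nabla_{r \mapsto \alpha}^{\sigma} > 0$ and a non-decreasing $g$ gives $g(0) \leq g(w(r))$, i.e.\ item~2.

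The hard part will be establishing that the monotonicity direction of $g$ is \emph{constant over the entire interval} $[0,1]$ rather than only locally, since the definitions of monotonicity and edge-monotonicity are stated as comparisons between two frameworks with a single differing weight, not as a global statement about a varying parameter. I would handle this by observing that those properties can be applied to \emph{any} pair $x \leq x'$ of values for $w(r)$ (each such pair instantiates the ``$w(r) \leq w'(r)$'' hypothesis), which upgrades the pointwise property to genuine monotonicity of $g$; in the indirect case the same reasoning must be threaded through the intermediate arguments along the path, and care is needed to ensure the sign does not flip as $x$ ranges over $[0,1]$, which is guaranteed because the parity $\lambda$ and the polarity of $r$ are structural and independent of the weight value.
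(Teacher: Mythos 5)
Your proposal is correct and follows essentially the same route as the paper's proof: both split into the direct and indirect cases, both use the sign characterisations of Propositions~\ref{proposition_direct_influence} and~\ref{proposition_sign_correct_indirect} to tie the (strict) sign of $\nabla_{r \mapsto \alpha}^{\sigma}$ to the edge's polarity and path parity, and both then invoke edge-monotonicity (together with monotonicity along the unique path in the indirect case) to compare $\sigma(\alpha)$ against the strength obtained at weight $0$. Your packaging via the globally monotone scalar function $g(x)=\sigma_{w_x}(\alpha)$ and its derivative sign is a cleaner, more analytic restatement of what the paper does by contradiction when classifying the edge type, but the underlying lemmas and case decomposition are identical.
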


As an illustration, if we set $w(Acting,Movie)=0$, then $\sigma(Movie)$ decreases from $0.827$ to $0.802$.

Qualitative invariability, inspired by \cite{AAE_ECAI,RAE_IJCAI}, states that the influence of an edge on the topic argument remains qualitatively consistent (i.e., always positive or always negative), irrespective of any variations in its weight.
\begin{property}[Qualitative Invariability]
\label{property_qua_inv}
Let $\alpha \in \mathcal{A}$ and $r \in  \mathcal{R}$.
Let $\nabla_{\delta}$ denote $\nabla_{r \mapsto \alpha}^{\sigma}$ when setting $w(r)$ to some $\delta \in [0,1]$.

1. If $\nabla_{r \mapsto \alpha}^{\sigma} < 0$, then $\forall \delta \in [0,1]$, $\nabla_{\delta} \leq 0$;

2. If $\nabla_{r \mapsto \alpha}^{\sigma} > 0$, then $\forall \delta \in [0,1]$, $\nabla_{\delta} \geq 0$.

\end{property}

\begin{proposition}
\label{proposition_qua_inv_direct}
Let $r$ be a direct or indirect edge with respect to $\alpha$.
$\nabla_{r \mapsto \alpha}^{\sigma}$ satisfies qualitative invariability
if $\sigma$ satisfies both edge-monotonicity and monotonicity.
\end{proposition}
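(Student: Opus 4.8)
The plan is to reduce the claim to the sign characterisations already established in Proposition~\ref{proposition_direct_influence} (for direct edges) and Proposition~\ref{proposition_sign_correct_indirect} (for indirect edges), exploiting the fact that the data determining those signs is untouched when we rescale $w(r)$. Fix the topic argument $\alpha$ and the edge $r$, and for each $\delta \in [0,1]$ let $\mathcal{Q}_\delta$ be the EW-QBAF obtained from $\mathcal{Q}$ by setting the weight of $r$ to $\delta$ while leaving all other weights, the arguments, the relations, and the base scores unchanged; write $\nabla_\delta$ for the G-RAE from $r$ to $\alpha$ computed in $\mathcal{Q}_\delta$, so that $\nabla_\delta$ coincides with the quantity in Property~\ref{property_qua_inv}. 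By Lemma~\ref{lemma_differentiability} each $\nabla_\delta$ is well-defined (as a one-sided derivative at the endpoints $\delta \in \{0,1\}$, which does not affect the sign argument below), and for $\delta_0 = w(r)$ we have $\nabla_{\delta_0} = \nabla_{r \mapsto \alpha}^{\sigma}$.

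First I would observe that the edge-type classification of $r$ with respect to $\alpha$ is a purely structural notion: by Definition~\ref{def_edge_type} it depends only on the path counts $|P_{\gamma \mapsto \alpha}|$, hence only on $\mathcal{R}^-$ and $\mathcal{R}^+$, which are identical in $\mathcal{Q}$ and in every $\mathcal{Q}_\delta$. Consequently $r$ remains a direct (respectively indirect) edge with the same polarity ($r \in \mathcal{R}^+$ or $r \in \mathcal{R}^-$) across all $\delta$, and in the indirect case the path sequence $\langle r, r_1, \ldots, r_n \rangle$ and the count $\lambda = |\{r_1, \ldots, r_n\} \cap \mathcal{R}^-|$ are likewise invariant.

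Next, since edge-monotonicity and monotonicity are properties of the semantics $\sigma$ that are quantified over all edge weight functions, Propositions~\ref{proposition_direct_influence} and~\ref{proposition_sign_correct_indirect} apply verbatim to every $\mathcal{Q}_\delta$. For a direct edge this pins the sign of $\nabla_\delta$ to $\geq 0$ if $r \in \mathcal{R}^+$ and to $\leq 0$ if $r \in \mathcal{R}^-$; for an indirect edge it pins the sign according to the polarity of $r$ and the parity of $\lambda$. In either case the resulting bound is one of ``$\nabla_\delta \geq 0$ for all $\delta$'' or ``$\nabla_\delta \leq 0$ for all $\delta$'', and which of the two holds is fixed by the $\delta$-invariant data identified in the previous step.

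It then remains to match this fixed bound against the hypothesis. If $\nabla_{r \mapsto \alpha}^{\sigma} < 0$, then $\nabla_{\delta_0} < 0$, which excludes the ``$\geq 0$'' alternative; hence $r$ falls in the category giving $\nabla_\delta \leq 0$ for all $\delta \in [0,1]$, establishing item~1 of Property~\ref{property_qua_inv}. The case $\nabla_{r \mapsto \alpha}^{\sigma} > 0$ is symmetric and yields item~2. I expect the only delicate point to be the explicit observation that the earlier propositions hold for every weight function rather than merely the given $w$, together with the bookkeeping of the one-sided boundary derivatives; once this uniformity is recorded, the argument is a direct case split with no further computation.
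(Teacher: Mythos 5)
Your proposal is correct and follows essentially the same route as the paper's proof: both arguments use the sign characterisations of Propositions~\ref{proposition_direct_influence} and~\ref{proposition_sign_correct_indirect} to exclude, from the hypothesis $\nabla_{r \mapsto \alpha}^{\sigma} < 0$ (resp.\ $> 0$), the structural categories of $r$ that would force the opposite sign, and then observe that those propositions bound the sign of $\nabla_\delta$ uniformly over all values $\delta \in [0,1]$ of $w(r)$, since the edge's polarity and the parity of attacks along the path are weight-independent. Your version merely makes explicit two points the paper leaves implicit (that the edge-type classification is purely structural and that the monotonicity hypotheses are quantified over all weight functions), which is a presentational refinement rather than a different argument.
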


As an illustration, even when the edge weights in Figure~\ref{fig_qbaf} are altered, their G-RAEs will always satisfy $\nabla_{r \mapsto \alpha}^{\sigma} \geq 0$ or $\nabla_{r \mapsto \alpha}^{\sigma} \leq 0$.

The final property shows that our G-RAEs can be computed efficiently in linear time with respect to the number of arguments and relations.
\begin{proposition}[Tractability]
\label{proposition_tractability}
Let $|\mathcal{A}|=m$ and $|\mathcal{R}|=n$, then G-RAEs can be generated in linear time $\mathcal{O}(m+n)$ for acyclic EW-QBAFs.
\end{proposition}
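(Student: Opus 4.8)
The plan is to reduce the computation of all G-RAEs $\nabla_{r \mapsto \alpha}^{\sigma}$ to a single backward pass over the acyclic EW-QBAF, exactly mirroring the reverse-mode automatic differentiation (backpropagation) that underlies gradient computation in the MLP-based semantics. Since the EW-QBAF is acyclic, by the remark following Definition~\ref{def_semantics} the strengths $\sigma(\beta)$ for all $\beta \in \mathcal{A}$ can be obtained by a single forward pass in topological order, and this takes $\mathcal{O}(m+n)$ time: each argument's strength is computed from the aggregated, edge-weighted strengths of its incoming neighbours, so each edge and each argument is visited a constant number of times.

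First I would fix the topic argument $\alpha$ and observe that, by Definition~\ref{def_grae}, $\nabla_{r \mapsto \alpha}^{\sigma} = \frac{\partial \sigma(\alpha)}{\partial w(r)}$ for every edge $r$, which is well-defined by Lemma~\ref{lemma_differentiability}. The key observation is that $\sigma(\alpha)$ depends on $w(r)$ only through the chain of arguments lying on paths from the head of $r$ to $\alpha$, so the partial derivatives can be assembled from local per-edge and per-argument derivatives via the chain rule. I would introduce, for each argument $\beta$, the adjoint quantity $\frac{\partial \sigma(\alpha)}{\partial \sigma(\beta)}$, initialised as $1$ for $\beta = \alpha$ and $0$ otherwise. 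Processing arguments in reverse topological order, each adjoint is accumulated from the adjoints of its out-neighbours through the local derivatives of the aggregation and influence functions (which, under all the considered semantics, are elementary closed-form expressions computable in constant time per edge). Once all adjoints are available, each G-RAE $\nabla_{r \mapsto \alpha}^{\sigma}$ for $r = (\beta,\gamma)$ is recovered in constant time as the product of the adjoint $\frac{\partial \sigma(\alpha)}{\partial \sigma(\gamma)}$ and the local derivative $\frac{\partial \sigma(\gamma)}{\partial w(r)}$.

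The core of the argument is then the bookkeeping: the backward pass visits each argument once and, in doing so, traverses each edge once to propagate adjoints, so the total work is $\mathcal{O}(m+n)$; the final per-edge multiplication adds only $\mathcal{O}(n)$. Summing the forward pass, the backward pass, and the per-edge assembly gives the claimed $\mathcal{O}(m+n)$ bound.

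The main obstacle I anticipate is not the complexity count, which is routine once the backpropagation structure is in place, but making the chain-rule decomposition rigorous for \emph{multifold} edges (Definition~\ref{def_edge_type}), where $|P_{\gamma \mapsto \alpha}| > 1$: here the influence of $w(r)$ reaches $\alpha$ along several distinct paths, and a naive path-enumeration argument could count $\mathcal{O}(2^m)$ paths and destroy the linear bound. The adjoint formulation sidesteps this by aggregating all path contributions implicitly at each shared argument, so I would be careful to justify that the reverse-topological accumulation correctly sums over all paths without enumerating them, appealing to the standard correctness of reverse-mode differentiation on a directed acyclic computation graph. A secondary, minor point is to confirm that the local derivatives of the specific aggregation functions ($\agg_\Sigma$, $\agg_\Pi$) and influence functions (DF-QuAD, QE, REB, MLP) are each evaluable in constant time, which follows by inspection of their closed forms.
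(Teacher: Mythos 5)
Your proposal is correct and follows essentially the same route as the paper's proof: the paper also performs a topological sort in $\mathcal{O}(m+n)$ time and then invokes the standard backward-propagation procedure (as used for training multilayer perceptrons) along the reverse topological order to obtain all partial derivatives $\nabla_{r \mapsto \alpha}^{\sigma}$ in linear time. The only difference is one of detail: the paper delegates the correctness and cost of the backward pass to a citation of the deep-learning literature, whereas you spell out the adjoint accumulation explicitly, including the point about multifold edges being handled implicitly by reverse-mode differentiation.
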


In this section, we defined G-RAEs and examined the influence of three types of connectivity, followed by two properties, and finally showed the computational complexity of G-RAEs.
We analysed direct, indirect and independent influence, but did not consider the multifold case. This is because even under the guarantees of monotonicity and edge-monotonicity (which only apply to direct edges), a multifold edge can still have a non-monotonic effect on the topic argument, which may result in the violation of these properties. Such effects may arise in an infinite number of special cases, as discussed in \cite{AAE_ECAI,RAE_IJCAI}. For instance, an edge is involved in multiple paths that convey qualitatively different influences (both positive and negative) on the topic argument. We did not provide guarantees for the multifold case because it requires too many case differentiations about the 
number and strength of different attack and support paths. However, the analysis of the properties for the direct and indirect cases still has much practical value, as various applications are based on tree-like QBAFs (e.g., \cite{cocarascu2019extracting,Rago_23,kotonya2019gradual,chi2021optimized}).


\section{Contestability Algorithms}
\label{sec_algo}
In this section, we propose an approximation algorithm for G-RAEs
that can also be applied to cyclic EW-QBAFs 
and an iterative algorithm for solving the contestability problem.

\begin{algorithm}[h]
\caption{G-RAE Approximation}
\label{algo_GRAE}
\textbf{Input}: An EW-QBAF $\mathcal{Q}$, 
an edge-weighted gradual semantics $\sigma$, 
a topic argument $\alpha$.\\
\textbf{Parameter}: A perturbation value $\varepsilon$.\\
\textbf{Output}: Approximate G-RAEs $g\_rae$.\\
\textbf{Function}: $gRAE(\mathcal{Q},\sigma,\alpha,\varepsilon)$
\begin{algorithmic}[1] 
\STATE $\texttt{g\_rae} \leftarrow \{\}$ \hfill \% attribution scores
\STATE $\texttt{s}_0 \leftarrow \sigma(\alpha)$ \hfill \% original strength
\FOR {\texttt{r} in $\mathcal{R}$}
    \STATE $\texttt{w[r]} \leftarrow \texttt{w[r]}+\varepsilon$ \hfill \% perturb $w(r)$\\
    \STATE $s' \leftarrow \sigma(\alpha)$ \hfill \% perturbed strength\\
    \STATE $\texttt{g\_rae[r]} \leftarrow (\texttt{s}'-\texttt{s}_0)/\varepsilon$ \hfill \% compute G-RAE\\
    \STATE $\texttt{w[r]} \leftarrow \texttt{w[r]}-\varepsilon$ \hfill \% restore $w(r)$\\
\ENDFOR
\STATE \textbf{return} $\texttt{g\_rae}$ \hfill 
\end{algorithmic}
\end{algorithm}

To simplify the implementation and avoid handling the structure variations of different EW-QBAFs, we adopt a perturbation-based method to approximate G-RAEs. Perturbation-based approaches are also widely used in the literature for gradient estimation tasks when the gradient cannot be 
computed analytically (e.g., \cite{ozbulak2020perturbation,minervini2023adaptive}).
Algorithm~\ref{algo_GRAE} computes the G-RAE for all edges with respect to the topic argument in an EW-QBAF. The algorithm begins by computing the original strength of the topic argument $\alpha$ under the gradual semantics $\sigma$ (line 2). 
Then, for an edge $r$, the algorithm perturbs $w(r)$ by a value $\varepsilon$ and recomputes $\sigma(\alpha)$ based on the perturbed edge weight (line 4-5). The approximate G-RAE of $r$ is then obtained by dividing the strength change of $\alpha$ by $\varepsilon$ (line 6). After computing the G-RAE of edge $r$, the original weight $w(r)$ is restored for proceeding to the next edge (line 7). The above procedure is iteratively applied to compute the G-RAE for all edges. 

\begin{proposition}[G-RAE Approximation Complexity]
\label{proposition_approx_GRAEs}
Let $|\mathcal{A}|=m$ and $|\mathcal{R}|=n$, then all approximate G-RAEs can be generated in time $\mathcal{O}(n \cdot (m+n))$ for acyclic EW-QBAFs.
\end{proposition}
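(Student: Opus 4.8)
The plan is to bound the total running time of Algorithm~\ref{algo_GRAE} by separating the cost of the one-time initialisation from the cost of the per-edge loop. First I would observe that line 2 computes $\sigma(\alpha)$ once. Since the EW-QBAF is acyclic, the semantics can be evaluated by a single forward pass along a topological ordering of the arguments, which costs $\mathcal{O}(m+n)$: computing a topological order is linear in the number of arguments and edges, and each argument's strength is obtained by aggregating over its incoming edges, so the total aggregation work across all arguments is $\mathcal{O}(m+n)$. This establishes that one strength evaluation of the whole framework is $\mathcal{O}(m+n)$.

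Next I would analyse the loop over $\mathcal{R}$ (lines 3--8). The loop body performs three constant-time bookkeeping operations (perturbing the weight on line 4, computing the quotient on line 6, and restoring the weight on line 7) together with exactly one recomputation of $\sigma(\alpha)$ on line 5. By the same forward-pass argument, each such recomputation costs $\mathcal{O}(m+n)$. Since the loop executes once for each of the $n = |\mathcal{R}|$ edges, the total cost of the loop is $n \cdot \mathcal{O}(m+n) = \mathcal{O}(n \cdot (m+n))$. Adding the $\mathcal{O}(m+n)$ initialisation gives an overall bound of $\mathcal{O}(m+n) + \mathcal{O}(n \cdot (m+n)) = \mathcal{O}(n \cdot (m+n))$, as claimed, since the dominant term absorbs the initialisation.

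The step I expect to require the most care is justifying that each full recomputation of $\sigma(\alpha)$ on line 5 really is $\mathcal{O}(m+n)$ rather than something larger. This rests on the fact, recalled in the preliminaries, that for acyclic QBAFs the modular update procedure is equivalent to a single forward pass with respect to a topological ordering and hence runs in linear time \cite{Potyka19}; I would invoke this directly rather than re-deriving convergence. A secondary subtlety is that the algorithm as written recomputes the strength of the \emph{entire} framework for each perturbed edge, even though perturbing a single edge $r$ only affects arguments downstream of $r$. One could in principle exploit this locality to tighten the analysis, but since the worst case (e.g.\ an edge incident to a near-root argument) can still force recomputation of a linear-sized subgraph, the $\mathcal{O}(m+n)$ per-iteration bound is the right one to use, and no sharper bound is needed to obtain the stated $\mathcal{O}(n \cdot (m+n))$ result.
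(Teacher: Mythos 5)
Your proposal is correct and follows essentially the same route as the paper's proof: both rest on the fact that, for acyclic EW-QBAFs, one strength evaluation is a linear-time $\mathcal{O}(m+n)$ forward pass along a topological ordering (citing \cite[Proposition 3.1]{Potyka19}), and that this evaluation is repeated once per edge, yielding $\mathcal{O}(n \cdot (m+n))$. Your additional accounting for the one-time initialisation and the constant-time bookkeeping per iteration is fine but does not change the argument.
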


\begin{algorithm}[t]
    \caption{Contestability Algorithm}
    \label{algo_contest}
    \textbf{Input}: An EW-QBAF $\mathcal{Q}$, an edge-weighted gradual semantics $\sigma$, a topic argument $\alpha$ and a desired strength $s$ for $\alpha$.\\
    \textbf{Parameter}: A maximum iteration limit \texttt{M}, an updating step \texttt{h}, an error threshold $\delta$.\\
    \textbf{Output}: An edge weight function $\texttt{w}'$.
    
    \begin{algorithmic}[1] 
    \STATE $\texttt{w}'$ $\leftarrow$ \texttt{w}
        \STATE $\texttt{s}' \leftarrow \sigma(\alpha)$ \hfill
        \WHILE{$|\texttt{s}'- \texttt{s}|>\delta$ and $\texttt{M} > \texttt{0}$}
            \STATE $\texttt{g\_rae} \leftarrow gRAE(\mathcal{Q},\sigma,\alpha,\varepsilon)$
            \FOR {\texttt{r} in $\mathcal{R}$}
                \STATE $\texttt{update} \leftarrow \texttt{w}'\texttt{[r]}+\texttt{g\_rae[r]} \cdot \texttt{h}$
                \STATE $\texttt{w}'\texttt{[r]} \leftarrow \texttt{max}(\texttt{0},  \texttt{min}(\texttt{1}, \texttt{update}))$
            \ENDFOR
            \STATE $\texttt{s}' \leftarrow \sigma(\alpha)$
            \STATE $\texttt{M} \leftarrow \texttt{M}-\texttt{1}$
        \ENDWHILE
        \STATE \textbf{return} $\texttt{w}'$
    \end{algorithmic}
\end{algorithm}

Since G-RAEs quantify the influence of edge weights on the strength changes of the topic argument, we utilize them to guide an iterative algorithm that adjusts edge weights to solve the contestability problem. Algorithm~\ref{algo_contest} begins by computing the strength of the topic argument $\alpha$ under the gradual semantics $\sigma$ (line 2). For brevity, we assume the current strength of the topic argument is less than the desired one. 
If the difference between this strength and the desired strength exceeds a predefined error threshold $\delta$ and the iteration count has not yet reached a maximum iteration limit $M$ (introduced to prevent the algorithm from running indefinitely or getting stuck in a local minimum) (line 3), the edge weights are updated by adding their respective G-RAEs multiplied by an updating step $h$ (line 6). Since the edge weights are constrained within the range $[0,1]$, we apply the $\max$ and $\min$ functions to ensure the updated weights remain within bounds (line 7). Once all edge weights are updated, $\sigma(\alpha)$ is recomputed and the iteration count $M$ is decremented (line 9-10). The algorithm terminates when either $\sigma(\alpha)$ is sufficiently close to the desired strength or the iteration limit is reached.
The time complexity of this algorithm is shown as follows.

\begin{proposition}[Contestability Algorithm Complexity]
\label{proposition_iterative_algo}
Let $|\mathcal{A}|=m$ and $|\mathcal{R}|=n$, and suppose the maximum number of iterations is $M$. Then the time complexity of Algorithm~\ref{algo_contest} is $\mathcal{O}(M \cdot n \cdot (m+n))$ for acyclic EW-QBAFs.
\end{proposition}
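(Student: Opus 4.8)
The plan is to bound the running time of Algorithm~\ref{algo_contest} by separating the work done once at initialization from the work done inside the main loop, and then to show that the per-iteration cost is dominated by the single call to the G-RAE subroutine. Two established facts carry the argument: (i) for acyclic EW-QBAFs a single evaluation of $\sigma(\alpha)$ reduces to a forward pass along a topological ordering of the arguments and hence runs in linear time $\mathcal{O}(m+n)$, as recalled in Section~\ref{sec_preliminaries}; and (ii) the call $gRAE(\mathcal{Q},\sigma,\alpha,\varepsilon)$ on line~4 runs in time $\mathcal{O}(n \cdot (m+n))$ by Proposition~\ref{proposition_approx_GRAEs}.

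First I would account for the work outside the loop. Line~1 copies the edge weight function, touching each of the $n$ edges once, so it costs $\mathcal{O}(n)$; line~2 performs one strength evaluation at cost $\mathcal{O}(m+n)$ by~(i). Together the initialization is $\mathcal{O}(m+n)$, which will be dominated by the loop.

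Next I would analyse a single iteration of the while loop (lines~4--10). The call to $gRAE$ on line~4 costs $\mathcal{O}(n \cdot (m+n))$ by~(ii). The inner for loop (lines~5--8) performs a constant amount of arithmetic and clipping per edge, giving $\mathcal{O}(n)$ overall. The recomputation of $\sigma(\alpha)$ on line~9 costs $\mathcal{O}(m+n)$ by~(i), and the decrement on line~10 is $\mathcal{O}(1)$. Hence one iteration costs $\mathcal{O}(n \cdot (m+n)) + \mathcal{O}(n) + \mathcal{O}(m+n) = \mathcal{O}(n \cdot (m+n))$, with the G-RAE call as the dominant term. Since the iteration counter enforces at most $M$ passes through the loop, multiplying the per-iteration cost by $M$ and adding the initialization yields $\mathcal{O}(m+n) + \mathcal{O}(M \cdot n \cdot (m+n)) = \mathcal{O}(M \cdot n \cdot (m+n))$, as claimed.

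I do not expect a genuine obstacle here, as the argument is a routine line-by-line accounting. The only point requiring care is making explicit that each of the (up to) $M$ iterations re-invokes the full G-RAE approximation, so the $n \cdot (m+n)$ factor is incurred afresh on every iteration rather than once; once this is noted, every other line is subsumed by this dominant term and the bound follows immediately.
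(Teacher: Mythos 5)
Your proposal is correct and follows essentially the same route as the paper's own proof: bound each iteration by the dominant $\mathcal{O}(n \cdot (m+n))$ cost of the G-RAE approximation (Proposition~\ref{proposition_approx_GRAEs}), add the $\mathcal{O}(n)$ edge updates and the $\mathcal{O}(m+n)$ strength recomputation, and multiply by the at most $M$ iterations. Your explicit accounting of the initialization outside the loop is a minor addition the paper omits, but it changes nothing substantive.
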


\section{Experimental Evaluation}
\label{sec_eval}

\begin{figure*}[t]
    \centering
    \includegraphics[width=0.9\linewidth]{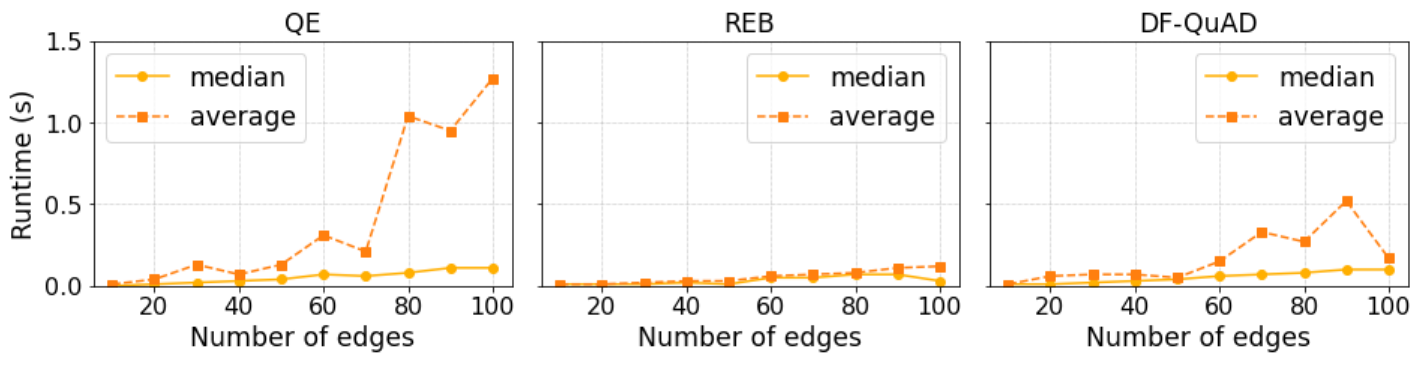}
    \caption{Average and median runtime over 100 randomly generated PRS-like QBAFs under different semantics and edge sizes.}
    \label{fig_result1}
\end{figure*}

\begin{figure*}[t]
    \centering
    \includegraphics[width=0.9\linewidth]{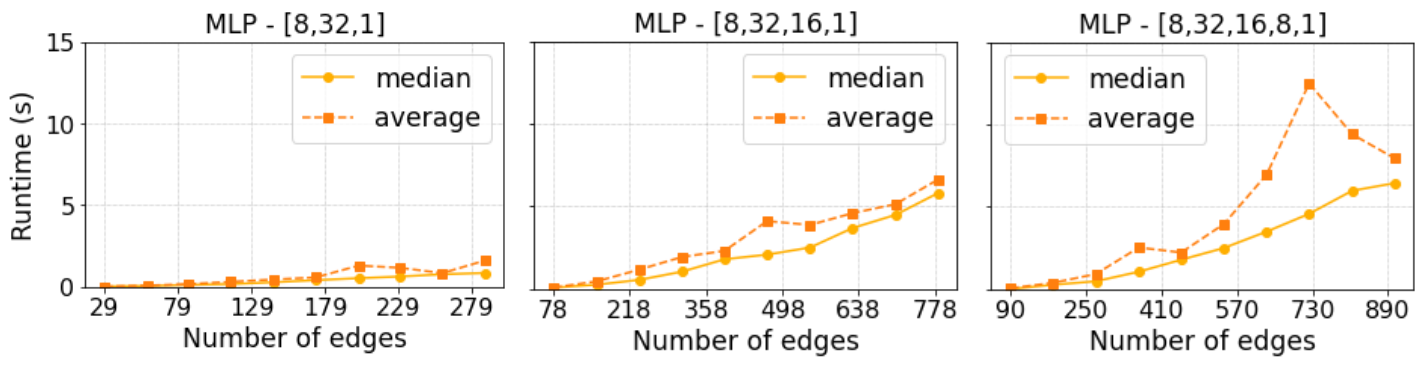}
    \caption{Average and median runtime over 100 randomly generated MLP-like QBAFs, evaluated under MLP-based semantics across various structures and edge sizes.}
    \label{fig_result2}
\end{figure*}

We evaluated the effectiveness (ability to achieve the desired strength) and scalability (performance on dense EW-QBAFs) of our contestability algorithm through two potential application scenarios: PRSs (Experiment 1) and MLP debugging (Experiment 2).\footnote{For detailed results in this section, see Table~\ref{tab111} and \ref{tab222} in the SM.}

\paragraph{Algorithm Parameter Settings}
In Algorithm~\ref{algo_GRAE}, the perturbation value $\varepsilon$ was set to $10^{-5}$ for approximating the G-RAE of edges.
In Algorithm~\ref{algo_contest}, the error threshold $\delta$ was set to $0.01$ and the maximum number of iterations $M$ was set to $1000$. To accelerate convergence, we employed a dynamic updating step schedule: a larger step was applied when the current strength was farther from the desired strength, and a smaller step was used when the current strength is closer to the desired strength.

\subsection{Experiment 1}
\label{expr1}
In this experiment, we aimed to evaluate the performance of our algorithm on acyclic EW-QBAFs that mirror the structure of PRSs to show the potential of addressing misalignments between the EW-QBAF outcomes and human expectations.
We assessed the \textbf{effectiveness} of the algorithm in this setting.

\paragraph{Setups}
In order to randomly generate acyclic EW-QBAFs, we first created $n$ arguments $(\alpha_1,\cdots,\alpha_n)$, each assigned a random base score drawn uniformly from the interval [0,1]. For every pair of arguments $(\alpha_i,\alpha_j)$ such that $i<j$, an edge was generated with probability $p$.
Since QBAFs in many applications typically contain fewer than 100 arguments and have a roughly equal numbers of arguments and edges (e.g., movie recommender systems~\cite{cocarascu2019extracting}, fake news detection~\cite{kotonya2019gradual}, fraud detection~\cite{chi2021optimized}), we generated EW-QBAFs of varying size with $n \in \{10,20,\dots,100\}$ and $p=\frac{2}{n}$.
Each edge was randomly designated as either attack or support with equal probability, and its weight was uniformly drawn from the interval [0,1]. The topic argument was designated as $\alpha_n$, and the desired strength for it was set to the mean of the maximum and minimum attainable strength. This choice was motivated by Theorem~\ref{theorem2}, which guarantees that the mean value is always attainable. 
Note that, depending on $p$, the topic argument may have no or only a few predecessors in this setting.
To mitigate the effects of randomness, we generated 100 QBAFs for each size. To compute argument strengths, we applied three commonly considered gradual semantics - QE~\cite{Potyka18},  REB~\cite{amgoud2018evaluation}, and DF-QuAD~\cite{rago2016discontinuity} - each adapted to incorporate edge weights to suit our setting.

\paragraph{Results and Analysis}
We evaluated \emph{validity} (i.e., whether the desired strength was attained), \emph{runtime}, and number of \emph{attempts} of our algorithm across different semantics and varying edge sizes.
We begin by analysing effectiveness. Across all 100 EW-QBAFs of each structure, the validity of our algorithm consistently reached $100\%$, regardless of semantics or the number of edges, indicating that it was always able to successfully identify the desired strength, in accordance with Theorem~\ref{theorem2}.

Next, we discuss the number of attempts and the algorithm running time.
Since our algorithm is based on gradient descent, there exists a potential risk of converging to a local minimum. To address this, whenever the solution was not found within the maximum number of iterations $M$, we adjusted the initial search point and re-executed the algorithm. 
Under the REB semantics, our algorithm consistently succeeded on the first attempt across all EW-QBAF sizes, followed by the QE semantics with an average of $1.01$ attempts.
Although the DF-QuAD semantics required the highest average number of attempts ($1.014$), the maximum never exceeded $4$ - matching that of the QE semantics - demonstrating that the algorithm can find solutions within reasonable numbers of attempts across all cases.

Regarding the runtime performance shown in Figure~\ref{fig_result1}, under the same EW-QBAF structure, the REB semantics generally exhibited the shortest average runtime, followed by DF-QuAD, while QE incurred the longest. Despite these differences, the average runtime across all semantics and all EW-QBAF sizes remained under 1.5 seconds.
Due to the random generation of EW-QBAFs, certain instances required significantly more time to process. Thus, we also reported the median runtime.
Across all three semantics, the median runtimes were comparable and exhibited an overall increasing trend with minor fluctuations\footnote{Minor measurement inaccuracies when execution time falls below 0.1 seconds may contribute to the observed fluctuations.} as the number of edges increased. In all cases, the median runtime remained below $0.12$ seconds.
Overall, the results on validity, runtime, and the number of attempts collectively demonstrate that our algorithm is both effective and runs in polynomial time across a wide range of acyclic QBAF configurations, highlighting its potential applicability to address misalignments between users' expectations and the EW-QBAFs' outcomes in the PRS setting.

\subsection{Experiment 2}
\label{expr2}
In this experiment, we aimed to evaluate our algorithm on denser QBAFs that mirror the structure of MLPs. As shown in \cite{potyka2021interpreting}, MLPs can be translated into EW-QBAFs, and our algorithm has the potential to assist in debugging MLPs by achieving a desired output for a given instance.
In addition to effectiveness and polynomial complexity, we further evaluate the \textbf{scalability} of our algorithm.

\paragraph{Setups}
We generated denser QBAFs compared to those in Experiment 1. 
Specifically, we designed three MLP-like QBAFs structures: $[8,32,1]$, $[8,32,16,1]$ and $[8,32,16,8,1]$, varying in the numbers of neurons and hidden layers. For example, $[8,32,1]$ consists of 8 input arguments, three hidden layers with 32, 16, and 8 arguments respectively, and 1 output argument, yielding a total of $8+32+16+8+1=65$ arguments and $8\times32+32\times16+16\times8+8\times1=904$ edges in the fully-connected case.
We chose these structures because they are suitable for many binary classification tasks such as Pima Indians diabetes classification problem~\cite{potyka2022towards}.
For each structure, the base scores of arguments, as well as the polarity and weights of edges, were assigned in the same manner as in Experiment 1. The topic argument was set as the output-layer argument.
We varied the connection density by setting the probability $p$ of an edge between any two arguments in adjacent layers, with $p \in \{0.1, 0.2, \dots, 1.0\}$. A value of $p=1.0$ amounted to a fully-connected MLP-like EW-QBAF.
We generated 100 EW-QBAFs for each structure to avoid randomness and applied MLP-based gradual semantics~\cite{potyka2021interpreting} to compute argument strengths.

\paragraph{Results and Analysis}
We begin with checking the effectiveness of our algorithm. The validity consistently reached $100\%$ across all EW-QBAF structures, indicating its ability to attain the desired strengths, even in significantly denser configurations compared to those in Experiment 1.
We next evaluate the number of attempts and running time.
For all structures, our algorithm succeeded on the first attempt, except for the case of $[8, 32, 1]$, where a single fully-connected ($p=0.7$) MLP-like EW-QBAF required one additional attempt.
Overall, the results outperformed those of Experiment 1. A possible explanation lies in the structural constraint imposed in this setting: each argument only connects to arguments in the next layer, resulting a simpler and more hierarchical structure. In contrast, Experiment 1 allowed a more flexible connectivity.
As shown in Figure~\ref{fig_result2}, the average runtime increased with the number of edges across all structures, with the maximum average reaching $12.50$ seconds. Meanwhile, the median runtime exhibited a strictly increasing trend, peaking at $6.44$ seconds. The polynomial runtime results further showed our algorithm remained scalable, even as the density of EW-QBAFs increases.
In summary, the results on MLP-like EW-QBAFs demonstrate the effectiveness and scalability of our algorithm, suggesting its potential applicability for debugging MLPs.

\section{Conclusion}
In this work, we investigated the potential of EW-QBAFs for supporting contestability. We began by introducing new properties for EW-QBAFs that explain the effect of edge weight changes.
Then, we formally defined the contestability problem for acyclic EW-QBAFs and discussed the attainability of its solution. To address this problem, we introduced a novel notion of G-RAEs, which provide interpretable guidance towards contestability. We also studied and adapted several desirable properties from existing argumentative attribution explanations for standard QBAFs to the context of G-RAEs. Based on G-RAEs, we developed an iterative algorithm that progressively adjusts the edge weights to attain a desired strength. We empirically evaluated this algorithm in terms of effectiveness, runtime complexity and scalability on synthetic EW-QBAFs that simulates PRS and MLP-debugging scenarios, demonstrating its potential applicability.

Future research could focus on the following directions. Firstly, exploring the applicability of our theory in real-world EW-QBAFs scenarios, particularly in the contexts of PRS contestation and MLP debugging. 
Secondly, conducting user studies to evaluate the effectiveness and usability of our theory through user feedback, with a focus on how well users can understand, trust, and act upon the explanations provided, thereby assessing the role of explanations in supporting contestability.
Finally, defining and investigating the \emph{multi-contestability} problem, where multiple base score functions yield both desired and undesired strengths for a given topic argument, and the goal is to revise edge weights in a way that collectively ensures all (or as many as possible) of the desired strengths are achieved.

\section*{Acknowledgments}
This research was partially funded by the  European Research Council (ERC) under the
European Union’s Horizon 2020 research and innovation programme (grant
agreement No. 101020934, ADIX) and by J.P. Morgan and by the Royal
Academy of Engineering under the Research Chairs and Senior Research
Fellowships scheme.  Any views or opinions expressed herein are solely those of the authors.

\bibliographystyle{kr}
\bibliography{kr25}

\newpage
\setcounter{page}{1}
\onecolumn
\appendix

\section*{Supplementary Material for\\``Contestability in Edge-Weighted
Quantitative Bipolar Argumentation Frameworks''}
\medskip

\section{Proofs}
\setcounter{proposition}{0}
\setcounter{theorem}{0}
\setcounter{lemma}{0}
\setcounter{observation}{0}

\begin{lemma}
If a modular semantics is based on an aggregation function $\agg$ that satisfies neutrality, then
the semantics satisfies Edge-neutrality.
\end{lemma}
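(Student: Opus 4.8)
The plan is to show that the iterative update procedure defining the modular semantics produces identical strength values in $\mathcal{Q}$ and $\mathcal{Q}'$ at every iteration, from which $\sigma(\alpha) = \sigma_{\mathcal{Q}'}(\alpha)$ follows at the fixed point (or in the limit). The only structural difference between $\mathcal{Q}$ and $\mathcal{Q}'$ is the single extra edge $(\beta,\alpha)$, which carries weight $w'((\beta,\alpha)) = 0$. I would exploit the fact that this edge contributes the value $\sigma^{(i)}(\beta)\cdot 0 = 0$ to the aggregation for $\alpha$, and that neutrality of $\agg$ makes such a zero contribution invisible, since it lies outside the $\core$.

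First I would set up an induction on the iteration index $i$, proving the stronger invariant that $\sigma^{(i)}_{\mathcal{Q}}(\gamma) = \sigma^{(i)}_{\mathcal{Q}'}(\gamma)$ for \emph{every} argument $\gamma \in \mathcal{A}$, not only for $\alpha$. The base case $i=0$ is immediate, since both frameworks share the same base score function $\tau$ and the initialization sets $\sigma^{(0)}(\gamma) = \tau(\gamma)$ in each.

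For the inductive step, I would distinguish two cases. For any $\gamma \neq \alpha$, the set of incoming edges and their weights are identical in $\mathcal{Q}$ and $\mathcal{Q}'$, since the added edge points into $\alpha$; hence the multisets $A^{\sigma^{(i)}}_\gamma$ and $S^{\sigma^{(i)}}_\gamma$ agree by the induction hypothesis, and the update yields the same $\sigma^{(i+1)}(\gamma)$. For $\gamma = \alpha$, the relevant multiset in $\mathcal{Q}'$ equals the corresponding multiset in $\mathcal{Q}$ together with the single additional element $\sigma^{(i)}_{\mathcal{Q}'}(\beta)\cdot 0 = 0$. Since this element does not belong to the core, neutrality gives $\agg(A^{\sigma^{(i)}}_\alpha, S^{\sigma^{(i)}}_\alpha)$ the same value in both frameworks; applying $\infl(\tau(\alpha), \cdot)$ then yields the same $\sigma^{(i+1)}(\alpha)$, completing the induction.

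I expect the main point requiring care to be the simultaneous treatment of all arguments in the induction rather than tracking $\alpha$ in isolation, since $\alpha$'s strength could in principle be affected through the strengths of its neighbours; the stronger inductive invariant resolves this. Convergence poses no additional obstacle: because the two sequences $(\sigma^{(i)}_{\mathcal{Q}})_i$ and $(\sigma^{(i)}_{\mathcal{Q}'})_i$ coincide term by term, they converge (or remain undefined, i.e.\ $\bot$) under exactly the same conditions, so $\sigma(\alpha) = \sigma_{\mathcal{Q}'}(\alpha)$ in all cases where the semantics is defined.
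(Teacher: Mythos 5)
Your proof is correct and follows essentially the same route as the paper's: induction on the iteration index of the modular update procedure, using neutrality of $\agg$ to discard the zero-weighted contribution $\sigma^{(i)}(\beta)\cdot 0$ from the aggregation at $\alpha$, and concluding equality in the limit. Your version is in fact slightly more explicit than the paper's, since you state the all-arguments invariant and the case split ($\gamma \neq \alpha$ versus $\gamma = \alpha$) that the paper's writeup leaves implicit.
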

\begin{proof}
To prove the claim, we show that the strength functions $\sigma^{(i)}$ and $\sigma_{\mathcal{Q}'}^{(i)}$ 
computed in the iterative computation of $\sigma$ and $\sigma_{\mathcal{Q}'}$ are equal,
which implies that their limit must be equal.
We can assume w.l.o.g. that $(\beta,\alpha) \in \mathcal{R^{-}}'(\alpha)$ (the case $(\beta,\alpha) \in \mathcal{R^{+}}'(\alpha)$ is analogous).
We prove the claim by induction.
For $i=0$, we have 
$\sigma^{(0)}(\alpha) = \tau(\alpha) = \sigma_{\mathcal{Q}'}^{(0)}(\alpha)$.
For the induction step, we have
$\sigma^{(i+1)}(\alpha) 
= \infl(\tau(\alpha), \agg(A^{\sigma^{(i)}}_\alpha,S^{\sigma^{(i)}}_\alpha))
= \infl(\tau(\alpha), \agg(A^{\sigma^{(i)}}_\alpha \cup \{\sigma^{(i)}_{\mathcal{Q}'}(\beta) \cdot w'((\beta, \alpha))\},S^{\sigma^{(i)}}_\alpha))$,
where the last equality follows from the assumptions that $w'((\beta,\alpha))=0$
and neutrality of $\agg$. Using the induction assumption, we have 
$\agg(A^{\sigma^{(i)}}_\alpha \cup \{\sigma^{(i)}_{\mathcal{Q}'}(\beta) \cdot w'((\beta, \alpha))\},S^{\sigma^{(i)}}_\alpha)) = 
\agg(A^{\sigma_{\mathcal{Q}'}^{(i)}}_\alpha, S^{\sigma_{\mathcal{Q}'}^{(i)}}_\alpha )$, and therefore 
$\sigma^{(i+1)}(\alpha) = \sigma_{\mathcal{Q}'}^{(i+1)}(\alpha) $.
\end{proof}
\begin{lemma}
If a modular semantics is based on an aggregation function $\agg$
and an influence function $\infl$ which satisfy their associated balance properties, then
the semantics satisfies Edge-stability.
\end{lemma}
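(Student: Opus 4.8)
The plan is to show that, once all incoming edge weights of $\alpha$ vanish, the aggregate fed into the influence function is identically $0$ at every step of the iterative computation, so that the influence function simply returns the base score $\tau(\alpha)$.

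First I would unfold the definitions of the attack and support multisets of $\alpha$. Since $w(r) = 0$ for every $r \in \mathcal{R}(\alpha)$, each element of $A^\sigma_\alpha$ and of $S^\sigma_\alpha$ has the form $\sigma(\beta) \cdot w((\beta,\alpha)) = \sigma(\beta) \cdot 0 = 0$, irrespective of the strengths $\sigma(\beta)$ of the attackers and supporters. Hence both multisets contain only zeros, so $\core(A^\sigma_\alpha) = \core(S^\sigma_\alpha) = \emptyset$, which means $A^\sigma_\alpha \cong S^\sigma_\alpha$. Next, by the first clause of the balance property of $\agg$, these balanced multisets aggregate to $\agg(A^\sigma_\alpha, S^\sigma_\alpha) = 0$, and by the balance property of $\infl$ we get $\infl(\tau(\alpha), 0) = \tau(\alpha)$.

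Finally, I would plug this into the update rule, arguing by induction on the iteration index $i$ (as in the preceding edge-neutrality lemma) that $\sigma^{(i)}(\alpha) = \tau(\alpha)$ for all $i$. The base case $i = 0$ is the initialization $\sigma^{(0)}(\alpha) = \tau(\alpha)$, and the inductive step follows directly, since $\sigma^{(i+1)}(\alpha) = \infl(\tau(\alpha), \agg(A^{\sigma^{(i)}}_\alpha, S^{\sigma^{(i)}}_\alpha)) = \infl(\tau(\alpha), 0) = \tau(\alpha)$. Taking the limit then yields $\sigma(\alpha) = \tau(\alpha)$, which is exactly edge-stability.

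The point worth emphasising is that, unlike in the edge-neutrality lemma, no induction hypothesis about the neighbours' strengths is required: because the zero weights annihilate every product, the cores of $A^{\sigma^{(i)}}_\alpha$ and $S^{\sigma^{(i)}}_\alpha$ are empty at every iteration regardless of how the other arguments' strengths evolve. So there is no genuine obstacle here; the only mild care needed is to confirm that clause (1) of the $\agg$ balance property indeed covers the degenerate case $\core(A) = \core(S) = \emptyset$, which it does under the convention that $A \cong S$ holds precisely when $\core(A) = \core(S)$.
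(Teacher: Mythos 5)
Your proposal is correct and follows essentially the same route as the paper's own proof: an induction over the iterates $\sigma^{(i)}$, observing that zero edge weights force every element of $A^{\sigma^{(i)}}_\alpha$ and $S^{\sigma^{(i)}}_\alpha$ to be $0$, so the multisets are balanced, $\agg$ returns $0$ by its balance property, and $\infl(\tau(\alpha),0)=\tau(\alpha)$ by balance of $\infl$. Your closing remark that the induction hypothesis is never genuinely needed is accurate and consistent with the paper's argument, which likewise uses only the vanishing weights at each step.
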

\begin{proof}
To prove the claim, we show that it holds for all strength functions $\sigma^{(i)}$
computed in the iterative computation of $\sigma$, 
which implies that it also holds in the limit.
We prove the claim by induction.
For $i=0$, we have 
$\sigma^{(0)}(\alpha) = \tau(\alpha)$.
For the induction step, we have
$\sigma^{(i+1)}(\alpha) 
= \infl(\tau(\alpha), \agg(A^{\sigma^{(i)}}_\alpha,S^{\sigma^{(i)}}_\alpha))
= \infl(\tau(\alpha), 0)
= \tau(\alpha).
$
For the second inequality, we used the fact that  
$0 = \sigma^{(i)}(\beta) \cdot w((\beta, \alpha))$ for all 
$(\beta, \alpha) \in \mathcal{R}(\alpha)$. Hence, 
$A^{\sigma^{(i)}}_\alpha \cong S^{\sigma^{(i)}}_\alpha$,
and the equality follows from balance of $\agg$.
The last equality follows from balance of $\infl$.
\end{proof}
\begin{lemma}
Every modular semantics satisfies Edge-directionality.    
\end{lemma}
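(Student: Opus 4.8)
The plan is to prove that every modular semantics satisfies edge-directionality by showing that the added edge $(\alpha,\beta)$ cannot affect the strength of $\gamma$ whenever there is no path from $\beta$ to $\gamma$. The key observation is that in the iterative computation of strengths, the update for any argument depends only on the edge-weighted strengths of its direct predecessors. So an edge can influence $\gamma$ only if its effect can propagate forward along a directed path that eventually reaches $\gamma$. Since the only structural difference between $\mathcal{Q}$ and $\mathcal{Q}'$ is the edge $(\alpha,\beta)$, and since $P_{\beta\mapsto\gamma}=\emptyset$, there is no route by which this edge can reach $\gamma$.

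To make this precise, I would define the set $D$ of arguments reachable from $\beta$ (including $\beta$ itself), i.e.\ those $\eta$ for which $P_{\beta\mapsto\eta}\neq\emptyset$ together with $\beta$. The hypothesis $P_{\beta\mapsto\gamma}=\emptyset$ means $\gamma\notin D$. I would then prove by induction on the iteration index $i$ that $\sigma^{(i)}(\eta)=\sigma_{\mathcal{Q}'}^{(i)}(\eta)$ for every argument $\eta\notin D$. The base case $i=0$ is immediate because both are initialized to $\tau(\eta)$ (noting $\tau=\tau'$ on the shared arguments, which is forced by the setup). For the inductive step, I would fix $\eta\notin D$ and compute $\sigma^{(i+1)}(\eta)=\infl(\tau(\eta),\agg(A^{\sigma^{(i)}}_\eta,S^{\sigma^{(i)}}_\eta))$. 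The crucial point is that the incoming edges of $\eta$ are the same in $\mathcal{Q}$ and $\mathcal{Q}'$: the new edge $(\alpha,\beta)$ is incoming to $\beta$, not to $\eta$, so $\mathcal{R}(\eta)$ is unchanged. Moreover, every predecessor $\delta$ of $\eta$ must itself lie outside $D$, since if $\delta\in D$ then concatenating a path $\beta\mapsto\delta$ with the edge $(\delta,\eta)$ would give $\eta\in D$, a contradiction. Hence by the induction hypothesis $\sigma^{(i)}(\delta)=\sigma_{\mathcal{Q}'}^{(i)}(\delta)$ for all predecessors $\delta$, the edge weights on $\mathcal{R}(\eta)$ agree, and therefore the multisets $A^{\sigma^{(i)}}_\eta,S^{\sigma^{(i)}}_\eta$ coincide across the two frameworks, giving $\sigma^{(i+1)}(\eta)=\sigma_{\mathcal{Q}'}^{(i+1)}(\eta)$.

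Taking $\eta=\gamma$ (which lies outside $D$) and passing to the limit yields $\sigma(\gamma)=\sigma_{\mathcal{Q}'}(\gamma)$, as required; and since $w'((\alpha,\beta))$ plays no role in this argument, the equality holds for any $w'((\alpha,\beta))\in[0,1]$, matching the $\equiv$ in the statement. I expect the main subtlety to be the closure argument establishing that the predecessors of an argument outside $D$ are themselves outside $D$ — this is what makes the induction go through and is exactly the place where the acyclicity-independent reachability structure is used. I would state this predecessor-closure fact as an explicit sub-claim so that the inductive step reads cleanly. The only other thing to be careful about is the bookkeeping of which arguments and weights are shared between $\mathcal{Q}$ and $\mathcal{Q}'$, which follows directly from the definition of edge-directionality.
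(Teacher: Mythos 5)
Your proposal is correct and follows essentially the same route as the paper's proof: the paper likewise observes that modular updates depend only on an argument's parents, so that $\gamma$ and all of its predecessors (none of which are reachable from $\beta$) are evaluated identically in $\mathcal{Q}$ and $\mathcal{Q}'$. Your version merely makes this rigorous by an explicit induction on the iteration index together with the predecessor-closure sub-claim, which the paper leaves implicit.
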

\begin{proof}
Similar to the previous proofs, the claim follows by observing that the iterative 
update of $\gamma$ is equal with respect to both $\mathcal{Q}$ and $\mathcal{Q}'$.
To see this, just note that the aggregation function takes only the parents of an argument into account. Since $P_{\beta \mapsto \gamma} = \emptyset$,
$\gamma$ and its predecessors will be evaluated in the exact same way in  $\mathcal{Q}$ and $\mathcal{Q}'$. Hence, their strength values
must be equal for both graphs.
\end{proof}
\begin{lemma}
If a modular semantics is based on an aggregation function $\agg$
and an influence function $\infl$ which satisfy their associated monotonicity properties, then
the semantics satisfies Monotonicity and Edge-Monotonicity for the class of acyclic EW-QBAFs.    
\end{lemma}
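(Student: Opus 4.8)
The plan is to exploit the fact that, for acyclic EW-QBAFs, the iterative computation coincides with a single forward pass along a topological ordering, so that $\sigma(\alpha) = \infl(\tau(\alpha), \agg(A^\sigma_\alpha, S^\sigma_\alpha))$ with the multisets $A^\sigma_\alpha, S^\sigma_\alpha$ determined solely by the strengths of the predecessors of $\alpha$. Both properties then reduce to a single local comparison at $\alpha$: I will show that the perturbation changes exactly one term of $\alpha$'s attack or support multiset, and monotonically so, and then chain the monotonicity of $\agg$ with the monotonicity of $\infl$.

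I would treat Edge-Monotonicity first, since it is the more direct case. Here the only modification is the weight of a direct edge $r=(\beta,\alpha)$ into $\alpha$. By acyclicity this edge lies on no path feeding any predecessor of $\alpha$, so every predecessor strength is unchanged and in particular $\sigma(\beta)$ is unchanged. Consequently, raising $w(r)$ to $w'(r)\ge w(r)$ only replaces the single term $\sigma(\beta)\cdot w(r)$ by $\sigma(\beta)\cdot w'(r)\ge \sigma(\beta)\cdot w(r)$ (using $\sigma(\beta)\ge 0$), leaving all other entries fixed. If $r\in\mathcal{R}^-(\alpha)$, the new attack multiset dominates the old one, so the third clause of aggregation monotonicity gives a weakly smaller aggregate, and the fourth clause of influence monotonicity then yields $\sigma_{w'}(\alpha)\le\sigma(\alpha)$; if $r\in\mathcal{R}^+(\alpha)$, the symmetric argument via the fourth aggregation clause and the fourth influence clause yields $\sigma_{w'}(\alpha)\ge\sigma(\alpha)$. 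One should check the boundary behaviour of $\succeq$ when a term passes from zero to nonzero, but the definition of dominance via $\core$ already accommodates this.

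For Monotonicity I would add one preliminary step before repeating the same local comparison. Raising $\tau(\beta)$ to $\tau'(\beta)\ge\tau(\beta)$ leaves the predecessors of $\beta$ untouched, hence leaves $\beta$'s own aggregate unchanged; the third clause of influence monotonicity then gives $\sigma_{\mathcal{Q}'}(\beta)\ge\sigma(\beta)$. The hypothesis $|P_{\beta\mapsto\alpha}|=1$ is what lets me transport this increase cleanly: any predecessor $\gamma\neq\beta$ of $\alpha$ whose strength were affected by $\tau(\beta)$ would sit on a path $\beta\to\cdots\to\gamma\to\alpha$ distinct from the direct edge, contradicting the single-path assumption. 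Thus the only term of $\alpha$'s multiset that changes is $\sigma(\beta)\cdot w((\beta,\alpha))$, which increases, and the argument closes exactly as in the edge case, giving the attack/support dichotomy.

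The main obstacle I anticipate is not the algebra but making the propagation argument airtight: precisely justifying, from acyclicity and $|P_{\beta\mapsto\alpha}|=1$, that no predecessor of $\alpha$ other than $\beta$ changes, so that the comparison really is single-term. A clean way to do this is induction along the topological order, proving that every argument not reachable from $\beta$ keeps its strength and that $\beta$'s increase reaches $\alpha$ only via the direct edge; the bipolar sign bookkeeping and the zero-to-nonzero $\core$ transitions are then the only remaining details to handle with care.
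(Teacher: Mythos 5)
Your proposal is correct and follows essentially the same route as the paper's proof: both reduce to a single forward pass along a topological ordering, use acyclicity and the single-path hypothesis $|P_{\beta\mapsto\alpha}|=1$ to show that exactly one term of $\alpha$'s attack or support multiset changes (monotonically), and then chain the monotonicity clauses of $\agg$ and $\infl$. The only differences are cosmetic — you treat Edge-Monotonicity before Monotonicity and are slightly more explicit about the zero-to-nonzero $\core$ transitions in the dominance check, a detail the paper leaves implicit.
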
\begin{proof}
To prove the claims, we first note that,
for acyclic graphs, the iterative update procedure of strength values is equivalent to a single forward-pass
with respect to an arbitrary topological ordering \cite[Proposition 3.1]{Potyka19}.  
Fix any such ordering. Since we have the  edge $(\beta,\alpha)$, $\beta$ will occur
before $\alpha$. Since  $\tau$ and $\tau'$ differ only for $\beta$,
the evaluation of all arguments up to $\beta$ will be equal in $\mathcal{Q}$ and $\mathcal{Q}'$.
Furthermore, since $\tau$ and $\tau'$ differ only for $\beta$ and there is 
only a single path from $\beta$ to $\alpha$ (via the edge $(\beta,\alpha)$),
every argument $\gamma$ that is a predecessor of $\alpha$ (there is a directed path
from $\gamma$ to $\alpha$)  does not have $\beta$ as a predecessor (otherwise, there would
be a second path from $\beta$ to $\alpha$ via $\gamma$).
Hence, after evaluating all arguments up to $\alpha$ according to the topological ordering,
the strength of $\alpha$'s parents (up to $\beta$) in $\mathcal{Q}$ and $\mathcal{Q}'$
will be equal. We prove the first item of the monotonicity definition.

We have  
$\sigma(\alpha) 
= \infl(\tau(\alpha), \agg(A^{\sigma}_\alpha, S^{\sigma}_\alpha))$
and 
$\sigma_{\tau'}(\alpha) 
= \infl(\tau(\alpha), \agg(A^{\sigma_{\tau'}}_\alpha,S^{\sigma_{\tau'}}_\alpha))$.
Our previous considerations imply that $S^{\sigma}_\alpha = S^{\sigma_{\tau'}}_\alpha$,
and that $A^{\sigma}_\alpha$ and $A^{\sigma_{\tau'}}_\alpha$ can differ only in the 
evaluation of $\beta$. However, since the evaluation of all parents of $\alpha$ except
$\beta$ is
equal in $\mathcal{Q}$ and $\mathcal{Q}'$, a difference can only occur because of the
strength of $\beta$. By assumption, $\beta$'s base score in 
$\mathcal{Q}$ is bounded from above by the base score in $\mathcal{Q}'$.
Hence, monotonicity of the aggregation and influence function implies that the 
strength of $\beta$ in $\mathcal{Q}$ is bounded by the base score in $\mathcal{Q}'$.
This implies $A^{\sigma_{\tau'}}_\alpha \succeq  A^{\sigma}_\alpha$.
Hence,
$\agg(A^{\sigma}_\alpha, S^{\sigma}_\alpha) \geq 
\agg(A^{\sigma_{\tau'}}_\alpha,S^{\sigma_{\tau'}}_\alpha)$
by monotonicity of the aggregation function and thus
$\infl(\tau(\alpha), \agg(A^{\sigma}_\alpha, S^{\sigma}_\alpha))
\geq \infl(\tau(\alpha), \agg(A^{\sigma_{\tau'}}_\alpha,S^{\sigma_{\tau'}}_\alpha))
$
by monotonicity of the influence function.
Hence, $\sigma(\alpha) \geq \sigma_{\tau'}(\alpha)$.

The proof of the second item for monotonicity is analogous.
For edge-monotonicity, we can make similar considerations. 
Similar to monotonicity, there is only one difference in the
QBAFs. In this case, it is the weight of the edge instead of the
base score of the source argument of the edge. We can therefore
make the same considerations using the edge weight instead of the
base score.
\end{proof}
\begin{proposition}
\label{prop_mono_sta_dir}
Edge-Weighted QE, REB, DF-QuAD and MLP-based semantics satisfy 
edge-neutrality,
edge-stability,
edge-directionality,
monotonicity,
edge-monotonicity.
\end{proposition}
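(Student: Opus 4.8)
The plan is to prove Proposition~\ref{prop_mono_sta_dir} by invoking the four lemmas established earlier in this section together with the general result cited as \cite[Lemma 10, 14]{PotykaB24}. The key observation is that all four listed semantics --- Edge-Weighted QE, REB, DF-QuAD and MLP-based --- are \emph{modular} semantics, in the sense that their update step decomposes into an aggregation function $\agg$ and an influence function $\infl$ of the forms recalled in Section~\ref{sec_preliminaries}. Specifically, QE, REB and MLP-based semantics use sum-aggregation $\agg_\Sigma$, DF-QuAD uses product-aggregation $\agg_\Pi$, and each uses one of the stated influence functions $\infl$. So the entire proof is a matter of checking that each of these concrete $\agg$ and $\infl$ functions satisfies the relevant abstract properties, and then reading off the conclusion from the lemmas.

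First I would record that by \cite[Lemma 10, 14]{PotykaB24} each of the aggregation functions $\agg_\Sigma$ and $\agg_\Pi$ satisfies neutrality, balance and monotonicity, and each of the associated influence functions satisfies balance and monotonicity. (This is exactly what the sentence preceding the proposition asserts: ``All commonly considered aggregation and influence functions satisfy their associated balance, monotonicity and neutrality properties.'') With these satisfactions in hand, I would then apply the four lemmas one property at a time. The first lemma gives edge-neutrality from neutrality of $\agg$; the second gives edge-stability from the balance properties of $\agg$ and $\infl$; the third gives edge-directionality unconditionally for every modular semantics; and the fourth gives both monotonicity and edge-monotonicity from the monotonicity properties of $\agg$ and $\infl$ on acyclic EW-QBAFs. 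Since each of the four semantics is modular and satisfies all the hypotheses, all five listed properties follow for each of them.

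The only genuine content beyond citing the lemmas is verifying that these four named semantics are indeed modular with aggregation and influence functions that fall within the scope of \cite[Lemma 10, 14]{PotykaB24}. I would state explicitly which $\agg$ and which $\infl$ each semantics uses, matching them against the catalogue in that reference, so that the hypotheses of the lemmas are literally met. The step I expect to be the main (though still minor) obstacle is the edge-weighted adaptation: the lemmas in \cite{PotykaB24} concern standard QBAFs where the aggregation multisets $A$, $S$ contain raw strength values, whereas here the multisets $A^\sigma_\alpha$, $S^\sigma_\alpha$ contain edge-weighted strength values $\sigma(\beta)\cdot w((\beta,\alpha))$. I would note that the relevant properties of $\agg$ and $\infl$ are stated purely in terms of the abstract multiset arguments, so they hold verbatim regardless of whether the entries are weighted or unweighted; the edge weights merely rescale the multiset entries and do not affect neutrality, balance or monotonicity of the aggregation function itself. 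This is precisely why the four lemmas of this section were phrased for modular semantics in the edge-weighted setting, and it is where the proofs of those lemmas (reproduced above) already did the necessary work.

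Thus the proposition reduces to a short chain of implications, and the proof can be presented simply as: each named semantics is modular with $\agg \in \{\agg_\Sigma, \agg_\Pi\}$ and a standard influence function; by \cite[Lemma 10, 14]{PotykaB24} these satisfy the required balance, neutrality and monotonicity properties; hence by the four lemmas of this section they satisfy edge-neutrality, edge-stability, edge-directionality, monotonicity and edge-monotonicity.
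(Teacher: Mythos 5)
Your proposal is correct and matches the paper's proof, which likewise derives the result by combining the four preceding lemmas with the fact (from \cite[Lemma 10, 14]{PotykaB24}) that the aggregation and influence functions of these modular semantics satisfy their associated balance, monotonicity and neutrality properties. Your additional remarks on identifying $\agg_\Sigma$ versus $\agg_\Pi$ for each semantics and on why the edge-weighted multisets do not affect the abstract properties are elaborations the paper leaves implicit, but the route is the same.
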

\begin{proof}
Follows from the fact that the aggregation and influence functions of the semantics satisfy their associated balance,
monotonicity and neutrality properties \cite[Lemma 10, 14]{PotykaB24}, and our previous lemmas.    
\end{proof}
\begin{lemma}
For acyclic EW-BAFS, the strength function under Edge-Weighted QE, REB, DF-QuAD and MLP-based semantics
is differentiable with respect to edge-weights.
\end{lemma}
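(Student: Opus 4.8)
The plan is to prove differentiability of the strength function with respect to the edge weights by exploiting the acyclic structure, which (by \cite[Proposition 3.1]{Potyka19}) reduces the iterative semantics to a single forward pass along a topological ordering. First I would fix a topological ordering $\gamma_1, \ldots, \gamma_m$ of the arguments, so that every edge goes from an earlier argument to a later one. The strength $\sigma(\gamma_k)$ is then computed as a finite composition $\sigma(\gamma_k) = \infl(\tau(\gamma_k), \agg(A^\sigma_{\gamma_k}, S^\sigma_{\gamma_k}))$, where the aggregates depend only on the strengths $\sigma(\gamma_j)$ with $j < k$ and on the edge weights $w((\gamma_j, \gamma_k))$. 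This sets up a clean induction on the topological position $k$: the claim is that $\sigma(\gamma_k)$ is a differentiable function of the full edge-weight vector $w \in [0,1]^{|\mathcal{R}|}$.

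The key step is the induction itself. For the base case, any argument with no incoming edges has $\sigma(\gamma) = \tau(\gamma)$, a constant, hence trivially differentiable. For the inductive step, I would assume $\sigma(\gamma_j)$ is differentiable in $w$ for all $j < k$. The aggregate inputs $A^\sigma_{\gamma_k}$ and $S^\sigma_{\gamma_k}$ consist of terms of the form $\sigma(\gamma_j) \cdot w((\gamma_j, \gamma_k))$, each a product of a differentiable function (by the induction hypothesis) and a coordinate projection, hence differentiable. It then suffices to verify that the specific aggregation function ($\agg_\Sigma$ or $\agg_\Pi$) and the specific influence function of each named semantics are themselves differentiable as real functions, so that the composition is differentiable by the chain rule.

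The main obstacle — and the only part that is genuinely semantics-specific — is handling the nonsmooth building block $h(x) = \max\{0, x\}$ that appears in the influence functions. For DF-QuAD, $h(x) = \max\{0,x\}$ is not differentiable at $x = 0$, and for QE the function $h(x) = \frac{\max\{0,x\}^2}{1 + \max\{0,x\}^2}$ uses the same truncation; similarly the sum- and product-aggregations combined with the REB and MLP-based influence functions can introduce kinks. I would address this by noting that these points form a measure-zero set and that, in the generic case, the aggregate does not land exactly on the kink; more carefully, one checks that QE's $h$ is actually $C^1$ (its left and right derivatives at $0$ both vanish), so QE is differentiable everywhere, whereas for DF-QuAD one either restricts to the generic regime where no aggregate equals $0$, or invokes the continuized versions of the semantics mentioned in the preliminaries, under which the truncation is smoothed. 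In each case I would confirm the relevant aggregation and influence functions are (piecewise) differentiable and that the finitely many composition steps preserve this, completing the induction and hence the proof.
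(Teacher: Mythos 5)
Your proposal follows essentially the same route as the paper's own proof: fix a topological ordering (justified by acyclicity via \cite[Proposition 3.1]{Potyka19}), induct along it, observe that an argument without parents has strength $\infl(\tau(\alpha), \agg(\emptyset,\emptyset))$, which is constant in the weights, and close the induction by writing each aggregate input as $\sigma(\beta)\cdot w((\beta,\alpha))$ --- a product of an (inductively) differentiable function with a coordinate projection --- and invoking closure of differentiability under products and composition. Where you genuinely differ is in the one semantics-specific step: the paper simply asserts that the aggregation and influence functions are differentiable because they are ``composed of elementary functions that are differentiable,'' whereas you correctly point out that this is not literally true for DF-QuAD, whose influence function $B - B\,h(-A) + (1-B)\,h(A)$ with $h(x)=\max\{0,x\}$ has a kink at $A=0$ (left derivative $B$, right derivative $1-B$, which agree only when $B=\tfrac12$), so the strength function fails to be differentiable at weight configurations where some aggregate is exactly zero. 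Your observation that QE's $h(x)=\max\{0,x\}^2/(1+\max\{0,x\}^2)$ is $C^1$ (both one-sided derivatives vanish at $0$) is exactly right, and your proposed remedies for DF-QuAD --- restricting to the generic regime where no aggregate lands on the kink, or passing to the continuized semantics --- are sensible, though note that they amount to qualifying the lemma's statement rather than proving it as written. In short, your argument is at least as rigorous as the paper's; the paper's proof glosses over precisely the DF-QuAD nondifferentiability point you caught.
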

\begin{proof}
The proof is based on the fact that differentiable functions are closed under function composition
and products.
We first note that the aggregation and influence function of the Edge-Weighted QE, REB, DF-QuAD and MLP-based semantics
are differentiable with respect to edge-weights because they are composed of elementary functions that are differentiable
with respect to edge-weights.

Now consider an acyclic EW-QBAFs interpreted by a modular semantics whose aggregation and influence functions
are differentiable with respect to edge-weights. The strength of every argument can be written in closed form
as a finite composition of these aggregation and influence functions. A constructive induction proof can be 
sketched as follows: since the graph is acyclic, we can compute a topological ordering of the 
arguments. We start from the arguments without parents and successively move to their successors
following the topological order. We start from arguments without parents. Then the composed function
is of the form $\infl(\tau(\alpha), \agg(\emptyset, \emptyset))$. The function does not depend on
any edge-weights and is therefore trivially differentiable with respect to edge weights (the derivative is $0$).
Hence, it's composed of edge-differentiable functions. For the induction step, we can assume that the strength function of all
predecessors of the argument can be written as a composition of edge-differentiable functions by the induction
assumption. Our function
for the new argument is now
 $\infl(\tau(\alpha), \agg(A^\sigma_\alpha, S^\sigma_\alpha))$, where the elements in 
$A^\sigma_\alpha \cup S^\sigma_\alpha$ are of the form
$\sigma(\beta) \cdot w((\beta, \alpha))$. 
$\sigma(\beta)$ is edge-differentiable by assumption and $w((\beta, \alpha))$
can be seen as the identity function mapping an edge weight to itself. It is
therefore an edge-differentiable function. Since differentiable functions are closed under
product, $\sigma(\beta) \cdot w((\beta, \alpha))$ is an edge-differentiable function.
Hence, since differentiable functions are closed under function composition,
$\infl(\tau(\alpha), \agg(A^\sigma_\alpha, S^\sigma_\alpha))$ is an edge-differentiable function.
\end{proof}

\begin{observation}[Solution Existence]
For any $\alpha \in \mathcal{A}$ and any desired strength $s \in [0,1]\setminus\{\sigma(\alpha)\}$ for $\alpha$, the solution to the contestability problem exists iff $s \in S(\alpha)$. 
\end{observation}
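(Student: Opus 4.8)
The plan is to prove the biconditional directly from the definitions, since the statement is essentially definitional once the right notions are unpacked. The key is to recognize that the contestability problem (Definition~\ref{def_contest}) asks for an edge weight function $w'$ with $\sigma_{\mathcal{Q}'}(\alpha) = s$, while attainability (Definition~\ref{def_attainability}) defines $s \in S(\alpha)$ to mean exactly that there exists an edge weight function $w'$ with $\sigma_{w'}(\alpha) = s$. These two conditions are the same, so the observation reduces to matching up notation.

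First I would handle the forward direction: assume a solution to the contestability problem exists. By Definition~\ref{def_contest}, this means there is an edge weight function $w'$ yielding $\mathcal{Q}' = \langle \mathcal{A}, \mathcal{R}^-, \mathcal{R}^+, \tau, w' \rangle$ with $\sigma_{\mathcal{Q}'}(\alpha) = s$. Since $\mathcal{Q}'$ differs from $\mathcal{Q}$ only in the edge weight function, we have $\sigma_{\mathcal{Q}'}(\alpha) = \sigma_{w'}(\alpha)$, so this $w'$ witnesses $s \in S(\alpha)$ by Definition~\ref{def_attainability}. Conversely, if $s \in S(\alpha)$, then by definition there exists $w'$ with $\sigma_{w'}(\alpha) = s$; taking $\mathcal{Q}' = \langle \mathcal{A}, \mathcal{R}^-, \mathcal{R}^+, \tau, w' \rangle$ gives $\sigma_{\mathcal{Q}'}(\alpha) = s$, which is precisely a solution to the contestability problem.

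The only subtlety worth noting, and the place I would be most careful, is the hypothesis $s \neq \sigma(\alpha)$ appearing in both the contestability problem and the observation's statement. This side condition ensures the problem is non-degenerate (we genuinely want to \emph{change} the strength rather than leave it fixed), but it does not affect the equivalence itself, since it is assumed on both sides uniformly. I would simply carry it along as a standing assumption and verify that neither direction of the argument relies on excluding or including $\sigma(\alpha)$ in any essential way.

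Overall I expect no real obstacle here: the result is a straightforward consequence of aligning the two definitions, and the proof is a short unfolding of notation rather than anything requiring the structural properties of EW-QBAFs, semantics, or acyclicity. The main value of stating it explicitly is to license later arguments (e.g., Theorems~\ref{theorem1} and~\ref{theorem2}) to reason about solvability purely in terms of the attainable set $S(\alpha)$.
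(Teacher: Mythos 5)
Your proposal is correct and matches the paper's own proof essentially verbatim: both directions are handled by unfolding Definition~\ref{def_contest} and Definition~\ref{def_attainability} and observing that the existential conditions they impose on $w'$ coincide. Your extra remark about the side condition $s \neq \sigma(\alpha)$ being carried along harmlessly is a fine clarification, but the argument is the same as the paper's.
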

\begin{proof}
By Definition~\ref{def_contest}, the existence of a solution to the contestability problem implies that $\exists w'$ such that $\sigma_{w'}(\alpha) = s$ , meaning $s$ is attainable for $\alpha$, i.e., $s \in S(\alpha)$, by Definition~\ref{def_attainability}.
Conversely, if $s \in S(\alpha)$, then by Definition~\ref{def_attainability}, $s$ is attainable for $\alpha$, which implies that $\exists w'$ such that $\sigma_{w'}(\alpha) = s$. Therefore, by Definition~\ref{def_contest}, $w'$ constitutes a solution to the contestability problem.
\end{proof}

\begin{proposition}[Base Score Attainability]
For any $\alpha \in \mathcal{A}$, 
if $\sigma$ satisfies edge-stability,
then  $\tau(\alpha) \in S(\alpha)$.
\end{proposition}
\begin{proof}
Consider an edge weight function $w'$ such that $w'(r) = 0$ for all $r \in \mathcal{R}(\alpha)$. Then, by Definition \ref{def_edge_stability}, we have $\sigma_{w'}(\alpha)=\tau(\alpha)$. Thus, by Definition~\ref{def_attainability}, $\tau(\alpha)$ is attainable, i.e., $\tau(\alpha) \in S(\alpha)$.
\end{proof}

\begin{theorem}[Maximum and Minimum of Attainability]
For any \( \alpha \in \mathcal{A} \) and any $\sigma$ satisfying edge-monotonicity, monotonicity, and edge-neutrality, \( \sigma_{w^{\alpha}_{\max}}(\alpha) \) and \( \sigma_{w^{\alpha}_{\min}}(\alpha) \) are the maximum and minimum of \( S(\alpha) \), respectively.
\end{theorem}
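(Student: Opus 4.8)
The plan is to prove the two extremal claims symmetrically, so I would focus on showing that $\sigma_{w^{\alpha}_{\max}}(\alpha)$ is the maximum of $S(\alpha)$; the minimum case follows by an entirely analogous argument with the roles of attacks and supports interchanged. The core idea is that $w^{\alpha}_{\max}$ represents the ``best possible case'' for boosting $\alpha$'s strength, and I want to argue that no other edge weight function $w'$ can yield a strictly larger strength. Since $\sigma_{w^{\alpha}_{\max}}(\alpha)$ is itself attainable (it is the strength under a valid edge weight function), it belongs to $S(\alpha)$; so it suffices to show it is an upper bound for every $\sigma_{w'}(\alpha)$.

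First I would establish the following monotone-influence intuition rigorously. Working with a fixed topological ordering of the acyclic EW-QBAF, I would argue by induction along this ordering that, for every argument $\gamma$, its strength under $w^{\alpha}_{\max}$ is at least as ``helpful'' to $\alpha$ as its strength under any competing $w'$, where ``helpful'' means larger if $\gamma$ ultimately supports $\alpha$ and smaller if $\gamma$ ultimately attacks $\alpha$. The key structural observation (already used in the intuition preceding the theorem) is that in $w^{\alpha}_{\max}$ all support edges are set to $1$ and all attack edges to $0$, which by edge-monotonicity simultaneously maximises the contribution of every supporter and minimises the (negative) contribution of every attacker along every path to $\alpha$. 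Edge-neutrality is what lets me treat the zero-weight attack edges as effectively absent, so the recursion cleanly isolates the supporting sub-structure.

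The main obstacle, and the step I would spend the most care on, is the sign-bookkeeping along indirect paths: an attacker of an attacker contributes positively to $\alpha$, so setting its incoming edges to maximise its own strength may be what is wanted, rather than minimising it. This is exactly why a naive ``set all supports to $1$, all attacks to $0$'' argument is subtle to justify. I would handle this by combining monotonicity (for base-score / strength propagation) with edge-monotonicity (for edge-weight changes) in the inductive step, tracking the parity of attacks on each path as in the later Indirect Influence analysis. The cleanest route is a hybrid argument that moves from an arbitrary $w'$ to $w^{\alpha}_{\max}$ one edge at a time, showing at each step that $\sigma(\alpha)$ does not decrease; each single-edge change is governed directly by edge-monotonicity applied at the appropriate argument, and the acyclicity guarantees the changes can be ordered so that downstream strengths are already fixed when an upstream edge is adjusted.

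Finally, for completeness I would note that once the upper bound $\sigma_{w'}(\alpha) \le \sigma_{w^{\alpha}_{\max}}(\alpha)$ holds for all admissible $w'$, and since $w^{\alpha}_{\max}$ is itself admissible, $\sigma_{w^{\alpha}_{\max}}(\alpha)$ is the maximum of $S(\alpha)$ by Definition~\ref{def_attainability}. The minimum claim is obtained by the dual construction $w^{\alpha}_{\min}$, swapping the maximising and minimising roles of the support and attack edges incident to $\alpha$, and repeating the same monotone induction to show $\sigma_{w'}(\alpha) \ge \sigma_{w^{\alpha}_{\min}}(\alpha)$ for all $w'$.
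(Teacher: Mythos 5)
Your outer structure matches the paper's (show $\sigma_{w^{\alpha}_{\max}}(\alpha)$ is attainable and an upper bound, then dualise for the minimum, with the swap applied only to edges incident to $\alpha$), but the core inductive step you propose is not the paper's argument and, as stated, it does not work. Your invariant --- that under $w^{\alpha}_{\max}$ each argument $\gamma$'s strength is \emph{larger} than under $w'$ if $\gamma$ ``ultimately supports'' $\alpha$ and \emph{smaller} if it ``ultimately attacks'' $\alpha$ --- is false for the construction being analysed: under $w^{\alpha}_{\max}$ \emph{every} argument, attackers included, has its incoming supports set to $1$ and incoming attacks set to $0$, so attackers' strengths are maximised, not minimised. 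The paper's key observation is precisely that this is safe: since every attack edge in the entire graph carries weight $0$, edge-neutrality renders all attack edges inert, every surviving influence path is a pure support path, and hence maximising the strength of \emph{every} argument (regardless of its ``role'' relative to $\alpha$) can only increase $\sigma(\alpha)$. The attacker-of-an-attacker subtlety you worry about dissolves for this reason, not via sign bookkeeping.

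Relatedly, the parity-tracking device you import from the Indirect Influence analysis cannot carry the induction, because that analysis (and the monotonicity property it rests on) applies only when there is a \emph{unique} path from the perturbed edge to $\alpha$. For multifold edges --- which the theorem must cover, and which the paper explicitly declines to give per-edge sign guarantees for --- ``the parity of the path'' is ill-defined, since different paths from the same argument to $\alpha$ may have different parities. This also undermines your one-edge-at-a-time reduction: edge-monotonicity controls only the effect of a weight change on the edge's immediate target, and propagating that change to a guaranteed non-decrease of $\sigma(\alpha)$ requires knowing that all influence downstream of the target is non-negative; that is exactly the ``all attack edges already zeroed'' property of the paper's construction, not a parity condition. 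A repaired version of your argument would replace the ultimately-supports/ultimately-attacks dichotomy with the uniform claim $\sigma_{w^{\alpha}_{\max}}(\gamma) \geq \sigma_{w'}(\gamma)$ for all $\gamma$, proved by induction along a topological ordering using edge-neutrality for the zeroed attacks and monotonicity for the weight-$1$ supports --- which is, in essence, the paper's proof.
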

\begin{proof}
We first show that $\sigma_{w^{\alpha}_{\max}}(\alpha)$ is the maximum of $S(\alpha)$, with three steps. Figure~\ref{fig_max_min} (left) serves as an illustration to support the intuition.

First, by edge-monotonicity and edge-neutrality, in order to maximise $\sigma(\alpha)$, we need to set the edge weights of all supports directly incoming to $\alpha$ to 1 and those of all direct attacks to 0.
For instance, $w(r_{10})=w(r_{12})=1$ and $w(r_{11})=0$.

Second, by monotonicity, we need to maximise the strength of the direct supporters and attackers of $\alpha$.
\begin{itemize}
    \item For each supporter of $\alpha$, we maximise its strength by setting the weights of its incoming supports to 1 and its incoming attacks to 0. For instance, $w(r_4)=1$ and $w(r_8)=0$.
    \item For each attacker of $\alpha$, we also maximise its strength similarly. This is because attackers may support the supporters of $\alpha$, thereby indirectly increasing $\sigma(\alpha)$. Meanwhile, their direct attacks on $\alpha$ or on the supporters of $\alpha$ have already been eliminated by setting their edge weights to 0. For instance, $w(r_6)=w(r_7)=w(r_9)=1$ and $w(r_5)=0$.
\end{itemize}

Finally, we recursively maximise the strength of all remaining arguments in $\mathcal{Q}$ by applying the same principle: maximising the weights of incoming supports to 1 and minimising the weights of incoming attacks to 0.

By this construction, every support edge in $\mathcal{Q}$ is assigned weight 1 and every attack edge is assigned weight 0, leading to the maximal attainable strength $\sigma_{w^{\alpha}_{\max}}(\alpha)$ for $\alpha$.

The proof for $\sigma_{w^{\alpha}_{\min}}(\alpha)$ being the minimum of $S(\alpha)$ proceeds analogously, except for the first step, where we minimise the weights of all supports of $\alpha$ to 0 and maximise the weights of all attacks of $\alpha$ to 1. The second and third step remain the same, as we still aim to maximise the strength of the attackers of $\alpha$.

Therefore, $\sigma_{w^{\alpha}_{\max}}(\alpha)$ and $\sigma_{w^{\alpha}_{\min}}(\alpha)$ are the maximum and minimum of $S(\alpha)$, respectively, which completes the proof.
\end{proof}

\begin{theorem}[Completeness of Attainability]
For any \( \alpha \in \mathcal{A} \),
let $m$ and $M$ denote the minimum and maximum of $S(\alpha)$, respectively. If $\sigma$ satisfies continuity, then $S(\alpha) =[m,M]$.
\end{theorem}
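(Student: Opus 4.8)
The plan is to establish that $S(\alpha)$ is exactly the closed interval $[m,M]$ by proving two inclusions. The inclusion $S(\alpha) \subseteq [m,M]$ is immediate from Theorem~\ref{theorem1}: since $m$ and $M$ are the minimum and maximum of $S(\alpha)$ respectively, every attainable value lies between them by definition. The substantive direction is $[m,M] \subseteq S(\alpha)$, i.e.\ showing that every intermediate value $s \in [m,M]$ is itself attainable. This is precisely where the continuity hypothesis on $\sigma$ will do the work, and I expect it to be an application of the Intermediate Value Theorem.

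First I would fix the two extremal edge weight functions $w^{\alpha}_{\max}$ and $w^{\alpha}_{\min}$ from the previous theorem, which satisfy $\sigma_{w^{\alpha}_{\max}}(\alpha) = M$ and $\sigma_{w^{\alpha}_{\min}}(\alpha) = m$. To connect these two endpoints, I would construct a continuous path in the space of edge weight functions that interpolates between them. The natural choice is the straight-line (convex) interpolation: for $t \in [0,1]$, define $w_t = (1-t)\, w^{\alpha}_{\min} + t\, w^{\alpha}_{\max}$, which maps each edge $r$ to $(1-t)\,w^{\alpha}_{\min}(r) + t\,w^{\alpha}_{\max}(r)$. Since both extremal functions take values in $[0,1]$ and the interval is convex, each $w_t$ is a legitimate edge weight function into $[0,1]$, so $w_t$ is admissible for every $t \in [0,1]$.

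Next I would define the scalar function $g : [0,1] \to [0,1]$ by $g(t) = \sigma_{w_t}(\alpha)$. The map $t \mapsto w_t$ is continuous (it is affine in $t$), and by the continuity hypothesis on $\sigma$ the composition $g$ is continuous on $[0,1]$. Its endpoints are $g(0) = \sigma_{w^{\alpha}_{\min}}(\alpha) = m$ and $g(1) = \sigma_{w^{\alpha}_{\max}}(\alpha) = M$. Now given any target $s \in [m,M]$, the Intermediate Value Theorem guarantees some $t^{*} \in [0,1]$ with $g(t^{*}) = s$, i.e.\ $\sigma_{w_{t^{*}}}(\alpha) = s$. Thus $w_{t^{*}}$ witnesses that $s$ is attainable for $\alpha$, giving $s \in S(\alpha)$ by Definition~\ref{def_attainability}. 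Combining both inclusions yields $S(\alpha) = [m,M]$.

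The main obstacle I anticipate is making precise what \emph{continuity} of $\sigma$ means and verifying that it delivers continuity of $g$. The excerpt invokes a ``continuity'' property of the semantics without having formally stated it in this passage, so the argument hinges on that property being exactly the joint continuity of the strength $\sigma_{w}(\alpha)$ as a function of the edge weights $w$ (which is plausible given Lemma~\ref{lemma_differentiability} asserting differentiability in the acyclic case, and differentiability implies continuity). Provided continuity is understood in this sense, the composition of the affine map $t \mapsto w_t$ with $\sigma$ is continuous and the IVT applies cleanly; the remaining details are routine.
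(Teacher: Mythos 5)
Your proposal is correct and follows essentially the same route as the paper's proof, which also derives the result by applying the Intermediate Value Theorem to the continuous dependence of $\sigma(\alpha)$ on the edge weights between the two attaining weight functions. The only difference is that you make the interpolation path $w_t = (1-t)\,w^{\alpha}_{\min} + t\,w^{\alpha}_{\max}$ explicit, whereas the paper leaves this parametrization implicit; this is a welcome sharpening rather than a departure.
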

\begin{proof}
Since both $\min$ and $\max$ are attainable for $\alpha$, and given that $\sigma$ is differentiable and thus continuous with respect to edge weights, 
by the Intermediate Value Theorem, any $s \in (\min,\max)$ must also be attainable for $\alpha$. Therefore, we conclude that the set of attainable strength for $\alpha$ is given by $S(\alpha) = [\min,\max]$.
\end{proof}

\begin{proposition}[Direct Influence]
Let $r \in \mathcal{R}$ be a direct edge w.r.t. $\alpha \in \mathcal{A}$. Then the following statements hold if $\sigma$ satisfies edge-monotonicity:

1. If $r \in \mathcal{R^{+}}$, then $\nabla_{r \mapsto \alpha}^{\sigma} \geq 0$;

2. If $r \in \mathcal{R^{-}}$, then $\nabla_{r \mapsto \alpha}^{\sigma} \leq 0$.

\end{proposition}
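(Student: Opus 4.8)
The plan is to exploit edge-monotonicity directly, observing that the G-RAE is a two-sided derivative whose existence is already guaranteed by Lemma~\ref{lemma_differentiability}; hence it suffices to pin down its sign rather than to compute the limit explicitly. First I would record the structural observation that, because $r$ is a direct edge w.r.t. $\alpha$, we have $r \in \mathcal{R}^{+}(\alpha)$ whenever $r \in \mathcal{R}^{+}$ (and symmetrically $r \in \mathcal{R}^{-}(\alpha)$ whenever $r \in \mathcal{R}^{-}$). This is exactly the hypothesis under which the two clauses of edge-monotonicity (Definition~\ref{def_edge_monotonicity}) become applicable to the edge being perturbed.

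For the support case ($r \in \mathcal{R}^{+}$), I would analyse the difference quotient $\frac{\sigma_{w'}(\alpha) - \sigma(\alpha)}{\varepsilon}$ separately for the two signs of the perturbation. When $\varepsilon > 0$, the weight $w(r)$ is increased, so clause 2 of edge-monotonicity gives $\sigma_{w'}(\alpha) \geq \sigma(\alpha)$; the numerator is non-negative and, dividing by the positive $\varepsilon$, the quotient is non-negative. When $\varepsilon < 0$, the weight is decreased, so applying the same clause with the unperturbed framework $\mathcal{Q}$ playing the role of the heavier-weighted framework yields $\sigma_{w'}(\alpha) \leq \sigma(\alpha)$; now the numerator is non-positive, and dividing by the negative $\varepsilon$ again produces a non-negative quotient. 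Since the difference quotient is non-negative from both sides, its limit $\nabla_{r \mapsto \alpha}^{\sigma}$ must satisfy $\nabla_{r \mapsto \alpha}^{\sigma} \geq 0$, establishing item~1. The attack case ($r \in \mathcal{R}^{-}$) is fully symmetric: clause 1 of edge-monotonicity reverses each of the above inequalities, so the quotient is non-positive from both sides and the limit satisfies $\nabla_{r \mapsto \alpha}^{\sigma} \leq 0$, giving item~2.

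The only point that deserves genuine care — and hence the step I expect to be the main obstacle, though it is a conceptual rather than a technical one — is the sign bookkeeping in the $\varepsilon < 0$ branch: one must invoke edge-monotonicity in the correct comparison direction (treating the \emph{unperturbed} framework as the one carrying the larger weight on $r$) so that the sign flip induced in the numerator exactly cancels the negative sign of $\varepsilon$, leaving the quotient with the same sign as in the $\varepsilon > 0$ branch. This is legitimate because edge-monotonicity is stated for an arbitrary weight increase on a single edge, which covers both the forward and the backward comparison. With differentiability securing that the two one-sided limits coincide, concluding the sign of the derivative from the common sign of both one-sided difference quotients is then immediate, and no case analysis beyond the support/attack split is required.
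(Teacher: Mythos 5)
Your proposal is correct and follows essentially the same argument as the paper's own proof: split on the sign of the perturbation $\varepsilon$, apply the relevant clause of edge-monotonicity to bound the numerator of the difference quotient in each branch, observe that both one-sided quotients share the same sign, and conclude the sign of the limit (with the attack case handled symmetrically). Your additional care in citing Lemma~\ref{lemma_differentiability} for the existence of the limit and in spelling out the backward comparison in the $\varepsilon<0$ branch only makes explicit what the paper leaves implicit.
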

\begin{proof}
We first consider the case when $r \in \mathcal{R^{+}}$.
Let $w'$ be an edge weight function such that $w'(r) = w(r)+\varepsilon$ and $w(t) = w'(t)$ for all $t \in \mathcal{R} \setminus \{r\}$.\\
\begin{itemize}
\item Let $\varepsilon \rightarrow 0^+$. Then, we have $w'(r) > w(r)$.
Since $\sigma$ satisfies edge-monotonicity and $r \in \mathcal{R}^{+}(\alpha)$, we have:
\[
\sigma_{w'}(\alpha) \geq \sigma(\alpha) \Rightarrow \frac{\sigma_{w'}(\alpha) - \sigma(\alpha)}{\varepsilon} \geq 0.
\]

\item Let $\varepsilon \to 0^-$. Then, we have $w'(r) < w(r)$. 
Similarly, we have:
\[
\sigma_{w'}(\alpha) \leq \sigma(\alpha) \Rightarrow \frac{\sigma_{w'}(\alpha) - \sigma(\alpha)}{\varepsilon} \geq 0.
\]
\end{itemize}

Since $\frac{\sigma_{w'}(\alpha) - \sigma(\alpha)}{\varepsilon} \geq 0$ in both directions, the limit exists and satisfies:
\[
\nabla_{r \mapsto \alpha}^{\sigma} = \lim_{\varepsilon \to 0} \frac{\sigma_{w'}(\alpha) - \sigma(\alpha)}{\varepsilon} \geq 0.
\]

The case where $r \in \mathcal{R}^{-}$ follows analogously and we have:
\[
\nabla_{r \mapsto \alpha}^{\sigma} = \lim_{\varepsilon \to 0} \frac{\sigma_{w'}(\alpha) - \sigma(\alpha)}{\varepsilon} \leq 0.
\]
\end{proof}

\begin{proposition}[Indirect Influence]
Let $r$ be an indirect edge w.r.t. $\alpha$. Suppose the path sequence from $r$ to $\alpha$ is $\langle r, r_{1}, \cdots, r_{n} \rangle (n \geq 1)$. Let $\lambda= \left| \{r_{1}, \cdots, r_{n} \} \cap \mathcal{R}^{-} \right|$. Then the following statements hold if $\sigma$ satisfies both monotonicity and edge-monotonicity.

    1. If $r \in \mathcal{R^{+}}$ and $\lambda$ is odd, then $\nabla_{r \mapsto \alpha}^{\sigma} \leq 0$;
    
    2. If $r \in \mathcal{R^{+}}$ and $\lambda$ is even, then $\nabla_{r \mapsto \alpha}^{\sigma} \geq 0$;

    3. If $r \in \mathcal{R^{-}}$ and $\lambda$ is odd, then $\nabla_{r \mapsto \alpha}^{\sigma} \geq 0$;
    
    4. If $r \in \mathcal{R^{-}}$ and $\lambda$ is even, then $\nabla_{r \mapsto \alpha}^{\sigma} \leq 0$.
\end{proposition}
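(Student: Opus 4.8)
The plan is to exploit differentiability together with the uniqueness of the propagation path. Write $r=(\beta,\gamma_0)$, so that $\gamma_0$ is the target of $r$, and let the unique path from $\gamma_0$ to $\alpha$ traverse the arguments $\gamma_0,\gamma_1,\dots,\gamma_n=\alpha$ via the edges $r_i=(\gamma_{i-1},\gamma_i)$ for $i=1,\dots,n$. By Lemma~\ref{lemma_differentiability}, every strength is differentiable in the edge weights, so $\nabla_{r\mapsto\alpha}^{\sigma}$ is a genuine derivative and I may reason with the chain rule. The key structural observation is that $w(r)$ enters the computation only through $\sigma(\gamma_0)$, and that, because $r$ is indirect w.r.t.\ $\alpha$ (i.e.\ $|P_{\gamma_0\mapsto\alpha}|=1$), the influence of $\sigma(\gamma_0)$ on $\sigma(\alpha)$ travels solely along $r_1,\dots,r_n$. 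I would first record that path-uniqueness is inherited by sub-paths: any second route from some $\gamma_{i-1}$ to $\alpha$, or any ``bypass'' reaching a $\gamma_i$ otherwise than through $\gamma_{i-1}$, could be composed into a second $\gamma_0$-to-$\alpha$ path, contradicting $|P_{\gamma_0\mapsto\alpha}|=1$. Hence $|P_{\gamma_{i-1}\mapsto\gamma_i}|=1$ for every link, and the total derivative collapses to a single product rather than a sum over paths.

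Next I would determine the sign of each factor. The ``entry'' factor describes how $w(r)$ moves $\sigma(\gamma_0)$: since $r$ is a direct edge w.r.t.\ $\gamma_0$, edge-monotonicity (equivalently Proposition~\ref{proposition_direct_influence}) gives that this factor is $\ge 0$ when $r\in\mathcal{R}^{+}$ and $\le 0$ when $r\in\mathcal{R}^{-}$. Each remaining factor describes how $\sigma(\gamma_{i-1})$ moves $\sigma(\gamma_i)$ across the unique direct edge $r_i$; by monotonicity this factor is $\ge 0$ if $r_i\in\mathcal{R}^{+}$ and $\le 0$ if $r_i\in\mathcal{R}^{-}$. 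Consequently the product is nonnegative or nonpositive according to the sign of the entry factor multiplied by $(-1)^{\lambda}$, where $\lambda=|\{r_1,\dots,r_n\}\cap\mathcal{R}^{-}|$ counts the attacks along the path. A short parity check then yields all four cases: for $r\in\mathcal{R}^{+}$ the overall sign is $(-1)^{\lambda}$ (nonnegative for even $\lambda$, nonpositive for odd $\lambda$), and for $r\in\mathcal{R}^{-}$ it is $(-1)^{\lambda+1}$ (flipping both conclusions), exactly matching statements 1--4.

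The cleanest way to make the per-link sign argument rigorous is induction on the path length $n$. The base case $n=1$ is handled by combining edge-monotonicity (for the move $w(r)\to\sigma(\gamma_0)$) with monotonicity (for the move $\sigma(\gamma_0)\to\sigma(\alpha)$ across the unique direct edge $r_1$). For the inductive step I would treat $r$ as an indirect edge w.r.t.\ $\gamma_{n-1}$, which is legitimate since $|P_{\gamma_0\mapsto\gamma_{n-1}}|=1$, obtain the sign of $\nabla_{r\mapsto\gamma_{n-1}}^{\sigma}$ from the hypothesis, and then push the resulting monotone change of $\sigma(\gamma_{n-1})$ across $r_n$ to $\alpha$, updating $\lambda$ by $0$ or $1$ according to whether $r_n$ supports or attacks.

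I expect the main obstacle to be the second paragraph's per-link claim: the tool available, monotonicity (Definition~\ref{def_mono}), is phrased for changes in a \emph{base score} $\tau(\gamma_{i-1})$ along a unique path, whereas the propagation I need concerns a change in the \emph{strength} $\sigma(\gamma_{i-1})$ induced from upstream. Bridging this requires observing that, downstream of $\gamma_{i-1}$, only the value $\sigma(\gamma_{i-1})$ is relevant and that $\gamma_{i-1}$ feeds $\alpha$ exclusively through the single edge $r_i$, all other parents of $\alpha$ being unaffected by $w(r)$, again by path-uniqueness; the monotone dependence of $\sigma(\gamma_i)$ on $\sigma(\gamma_{i-1})$ then follows from the monotonicity of the underlying aggregation and influence functions. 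Care is also needed to keep all inequalities weak, since derivatives may vanish (e.g.\ when an intermediate strength is $0$), which is consistent with the $\le$/$\ge$ conclusions of the proposition.
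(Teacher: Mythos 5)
Your proof is correct and rests on the same two pillars as the paper's own proof---edge-monotonicity for the entry link, monotonicity for the downstream propagation, and attack-parity fixing the final sign---but it is organized differently. The paper argues by finite perturbation and two-sided limits: it sets $w'(r)=w(r)+\varepsilon$, uses edge-monotonicity to fix the direction of change of the strength of $r$'s target $\gamma$, applies monotonicity \emph{in one shot} along the unique path from $\gamma$ to $\alpha$ to conclude that the difference quotient has a constant sign both as $\varepsilon \to 0^+$ and as $\varepsilon \to 0^-$, and lets differentiability (Lemma~\ref{lemma_differentiability}) deliver the signed limit. Your chain-rule factorization into per-link partial derivatives (with the correct observation that path uniqueness is inherited by sub-paths, so the sum over paths collapses to a single product), and your induction on path length, are finer-grained renderings of the same sign-propagation idea; the induction variant is essentially the paper's argument made link-by-link. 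What your decomposition buys is precision: it forces the issue you rightly flag as the main obstacle, namely that monotonicity (Definition~\ref{def_mono}) is stated for \emph{base-score} changes of a direct parent, whereas what must be propagated is an induced change in an upstream \emph{strength}. The paper's proof steps over exactly this point when it asserts that ``by monotonicity, the strength change of $\gamma$ will not increase $\sigma(\alpha)$'' (and, incidentally, the inequalities displayed in its worked case carry sign typos that the link-by-link bookkeeping you propose would have caught). Your bridge---downstream of $\gamma_{i-1}$ only the value $\sigma(\gamma_{i-1})$ matters, and the per-link monotone dependence follows from monotonicity of the aggregation and influence functions---is the right repair, though note that it quietly strengthens the hypotheses from ``$\sigma$ satisfies monotonicity and edge-monotonicity'' to ``$\sigma$ is a modular semantics with monotone aggregation and influence functions,'' which is also what the paper tacitly relies on and what holds for all the semantics it targets.
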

\begin{proof}
We first consider the case when $r \in \mathcal{R^{+}}$ and $\lambda$ is odd. To facilitate proof, let us assume $\beta,\gamma\in\mathcal{A}$ and $r=(\beta,\gamma)$.
Let $w'$ be an edge weight function such that $w'(r) = w(r)+\varepsilon$ and $w(t) = w'(t)$ for all $t \in \mathcal{R} \setminus \{r\}$.\\
\begin{itemize}
\item Let $\varepsilon \rightarrow 0^+$. Then, we have $w'(r) > w(r)$.
Since $\sigma$ satisfies edge-monotonicity and $r \in \mathcal{R}^{+}(\alpha)$, we know $\sigma(\gamma)$ will not be decreased.
By monotonicity, the strength change of $\gamma$ will not increase $\sigma(\alpha)$ if the path from $\gamma$ to $\alpha$ has odd number of attacks. Therefore, we have \[
\sigma_{w'}(\alpha) \geq \sigma(\alpha) \Rightarrow \frac{\sigma_{w'}(\alpha) - \sigma(\alpha)}{\varepsilon} \geq 0.
\]
\item Let $\varepsilon \rightarrow 0^-$. Then, we have $w'(r) > w(r)$. Similarly, we have 
\[
\sigma_{w'}(\alpha) \leq \sigma(\alpha) \Rightarrow \frac{\sigma_{w'}(\alpha) - \sigma(\alpha)}{\varepsilon} \geq 0.
\]
\end{itemize}

Since $\frac{\sigma_{w'}(\alpha) - \sigma(\alpha)}{\varepsilon} \geq 0$ in both directions, the limit exists and satisfies:
\[
\nabla_{r \mapsto \alpha}^{\sigma} = \lim_{\varepsilon \to 0} \frac{\sigma_{w'}(\alpha) - \sigma(\alpha)}{\varepsilon} \geq 0.
\]

The other three cases follow analogously and, therefore, we have:

    1. If $r \in \mathcal{R^{+}}$ and $\lambda$ is odd, then $\nabla_{r \mapsto \alpha}^{\sigma} \leq 0$;
    
    2. If $r \in \mathcal{R^{+}}$ and $\lambda$ is even, then $\nabla_{r \mapsto \alpha}^{\sigma} \geq 0$;

    3. If $r \in \mathcal{R^{-}}$ and $\lambda$ is odd, then $\nabla_{r \mapsto \alpha}^{\sigma} \geq 0$;
    
    4. If $r \in \mathcal{R^{-}}$ and $\lambda$ is even, then $\nabla_{r \mapsto \alpha}^{\sigma} \leq 0$.
    
\end{proof}

\begin{proposition}[Irrelevance]
Let $\alpha \in \mathcal{A}$. If $r \in \mathcal{R}$ is an independent edge w.r.t.
$\alpha$ and $\sigma$ satisfies edge-directionality, then $\nabla_{r \mapsto \alpha}^{\sigma} = 0$.
\end{proposition}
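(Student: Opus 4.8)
The plan is to reduce the statement to the edge-directionality property (Definition~\ref{def_directionality}) by observing that perturbing the weight of an independent edge amounts to moving among a family of frameworks that all share the same strength at $\alpha$. The guiding intuition is that edge-directionality, although phrased in terms of \emph{adding} an edge, says precisely that the strength of a target argument is insensitive to the weight of any edge whose endpoint cannot reach that target; an independent edge is exactly of this kind.

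Concretely, I would write $r=(\beta,\gamma)$ and recall that independence w.r.t.\ $\alpha$ means $|P_{\gamma \mapsto \alpha}|=0$, i.e.\ $P_{\gamma \mapsto \alpha}=\emptyset$. I would then let $\mathcal{Q}_0$ be the framework obtained from $\mathcal{Q}$ by deleting $r$ from its (attack or support) relation, which remains a valid EW-QBAF. Every weight-perturbed framework $\mathcal{Q}'$ with $w'(r)=w(r)+\varepsilon$, as well as $\mathcal{Q}$ itself, is then recovered from $\mathcal{Q}_0$ by adding back the single edge $r$ with some weight in $[0,1]$, so $\mathcal{Q}_0$ and $\mathcal{Q}'$ satisfy the inclusion hypotheses of edge-directionality.

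The central step is to instantiate Definition~\ref{def_directionality} on the pair $(\mathcal{Q}_0,\mathcal{Q}')$ under the correspondence in which the source $\beta$ of $r$ plays the role of the definition's $\alpha$, the target $\gamma$ of $r$ plays the role of its $\beta$, and our topic argument $\alpha$ plays the role of its $\gamma$. Under this relabelling the no-path hypothesis $P_{\beta_{\mathrm{def}} \mapsto \gamma_{\mathrm{def}}}=\emptyset$ becomes exactly $P_{\gamma \mapsto \alpha}=\emptyset$, which holds by independence. Edge-directionality then yields $\sigma_{\mathcal{Q}_0}(\alpha)=\sigma_{\mathcal{Q}'}(\alpha)$ for \emph{every} weight assigned to $r$, so $\sigma_{w'}(\alpha)$ is constant in $\varepsilon$ and equals $\sigma(\alpha)$.

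It remains only to feed this into Definition~\ref{def_grae}: since $\sigma_{w'}(\alpha)-\sigma(\alpha)=0$ for all admissible $\varepsilon$, the difference quotient vanishes identically and the limit yields $\nabla_{r \mapsto \alpha}^{\sigma}=0$. The only delicate point is the bookkeeping of the variable names, since the $\alpha,\beta,\gamma$ appearing in the edge-directionality definition denote different objects than in the proposition; once the relabelling is fixed there is no analytic content, as the strength at $\alpha$ is literally independent of the weight of $r$.
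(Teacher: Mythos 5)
Your proposal is correct and follows essentially the same route as the paper's proof: invoke edge-directionality to conclude that $\sigma_{w'}(\alpha)=\sigma(\alpha)$ for every admissible weight of the independent edge $r$, so the difference quotient in Definition~\ref{def_grae} vanishes identically and the limit is $0$. Your version is in fact slightly more careful than the paper's, since you make explicit the intermediate edge-deleted framework $\mathcal{Q}_0$ and the relabelling needed to match the add-one-edge phrasing of Definition~\ref{def_directionality}, a step the paper leaves implicit.
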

\begin{proof}
Let $w'$ be an edge weight function such that $w(t) = w'(t)$ for all $t \in \mathcal{R} \setminus \{r\}$. 
If $\sigma$ satisfies edge-directionality and $r$ is an independent edge w.r.t. $\alpha$, then $\sigma(\alpha)=\sigma_{w'}(\alpha)$ for any $w'(r) \in [0,1]$. 
Thus, by Definition~\ref{def_grae}, we have $\nabla_{r \mapsto \alpha}^{\sigma} \equiv 0$ for any $w'(r) \in [0,1]$, which completes the proof.
\end{proof}

\begin{proposition}
Let $r$ be a direct or indirect edge w.r.t. $\alpha$.
$\nabla_{r \mapsto \alpha}^{\sigma}$ satisfies counterfactuality if $\sigma$ satisfies both edge-monotonicity and monotonicity.
\end{proposition}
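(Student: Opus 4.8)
The plan is to reduce the counterfactuality property to a single global monotonicity statement about how $\sigma(\alpha)$ depends on the weight of $r$. First I would fix all edge weights except $w(r)$ and define the single-variable function $f:[0,1]\to[0,1]$ by letting $f(x)=\sigma_{w[r\mapsto x]}(\alpha)$ be the strength of $\alpha$ obtained by setting the weight of $r$ to $x$. By Lemma~\ref{lemma_differentiability}, $f$ is differentiable, and by Definition~\ref{def_grae} its derivative at the current weight is exactly the G-RAE, i.e.\ $f'(w(r))=\nabla_{r\mapsto\alpha}^{\sigma}$. The two target inequalities of Property~\ref{property_rae_counterfactuality} merely compare $f(0)=\sigma_{w'}(\alpha)$ with $f(w(r))=\sigma(\alpha)$, and since $0\le w(r)$, both items follow at once as soon as I know that $f$ is \emph{monotone} on all of $[0,1]$ and in which direction.

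The core step is therefore to establish global (weak) monotonicity of $f$. For a \emph{direct} edge this is immediate: edge-monotonicity (Definition~\ref{def_edge_monotonicity}) holds for arbitrary weight values, not merely infinitesimal ones, so raising $w(r)$ weakly increases $\sigma(\alpha)$ when $r\in\mathcal{R}^{+}$ and weakly decreases it when $r\in\mathcal{R}^{-}$; hence $f$ is non-decreasing or non-increasing respectively. For an \emph{indirect} edge $r=(\beta,\gamma)$ I would compose two monotone effects, mirroring the proof of Indirect Influence (Proposition~\ref{proposition_sign_correct_indirect}): edge-monotonicity applied with $\gamma$ as the topic argument makes $\sigma(\gamma)$ a monotone function of $x$, and then, because $|P_{\gamma\mapsto\alpha}|=1$, iterating Monotonicity (Definition~\ref{def_mono}) along that unique path propagates the change in $\sigma(\gamma)$ monotonically to $\sigma(\alpha)$, with each attack on the path flipping the direction and each support preserving it, so the overall direction is governed by the parity $\lambda$. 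A composition of monotone maps is monotone, so $f$ is monotone in $x$.

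With global monotonicity of $f$ in hand, I would read off the direction from the sign of the derivative. If $\nabla_{r\mapsto\alpha}^{\sigma}<0$, then $f'(w(r))<0$, which is incompatible with $f$ being non-decreasing; since $f$ is monotone it must be non-increasing, so $f(0)\ge f(w(r))$, i.e.\ $\sigma(\alpha)\le\sigma_{w'}(\alpha)$, establishing item~1. Symmetrically, $\nabla_{r\mapsto\alpha}^{\sigma}>0$ forces $f$ non-decreasing, giving $f(0)\le f(w(r))$ and hence $\sigma(\alpha)\ge\sigma_{w'}(\alpha)$, which is item~2. These directions are consistent with Propositions~\ref{proposition_direct_influence} and~\ref{proposition_sign_correct_indirect}, so one could alternatively fix the direction directly from the edge type and the parity $\lambda$ rather than from the sign of the gradient.

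I expect the main obstacle to be the indirect case of the global-monotonicity step, specifically propagating a finite (non-infinitesimal) change along the single path from $\gamma$ to $\alpha$. Monotonicity is stated for the base score of a direct parent, so carrying a change in $\sigma(\gamma)$ through several intermediate arguments requires an induction along the path that exploits the acyclic forward-pass evaluation, treating each intermediate strength as the input to its successor. The delicate part is the parity bookkeeping: I must verify that each attack edge reverses the monotonicity direction while each support edge preserves it, and that the assumption $|P_{\gamma\mapsto\alpha}|=1$ is genuinely needed to rule out conflicting contributions, which is exactly why the multifold case is excluded.
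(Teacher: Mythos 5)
Your proposal is correct and is essentially the paper's own argument: both hinge on the fact that edge-monotonicity (for direct edges) and edge-monotonicity combined with monotonicity along the unique path with parity bookkeeping (for indirect edges) make $\sigma(\alpha)$ globally monotone in $w(r)$, so that the sign of the G-RAE fixes the direction of this monotonicity and the comparison of the strength at weight $w(r)$ against weight $0$ follows. The only difference is cosmetic routing of the final step: the paper infers the edge's type (or parity class) from the gradient's sign by contradiction with Propositions~\ref{proposition_direct_influence} and~\ref{proposition_sign_correct_indirect} and then applies (edge-)monotonicity, whereas you read the direction off the derivative of the single-variable function $f$; both versions share the same key lemmas, the same direct/indirect decomposition, and the same (lightly sketched) induction for propagating a finite change along the unique path.
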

\begin{proof}
Let us first consider the \textbf{direct} case.
\begin{itemize}
    \item If $\nabla_{r \mapsto \alpha}^{\sigma} < 0$, then $r$ must be an attack edge. We can prove this by contradiction. Suppose, for contradiction, that $r\in\mathcal{R}^{+}$. Then, by Proposition~\ref{proposition_direct_influence}, we must have $\nabla_{r \mapsto \alpha}^{\sigma} \geq 0$, which contradicts the assumption that $\nabla_{r \mapsto \alpha}^{\sigma} < 0$. Hence, we know $r$ is a direct attack. Moreover, by edge-monotonicity, setting $w(r)$ to its minimum weight $0$ will weaken the attack, resulting in $\sigma(\alpha) \leq \sigma_{w'}(\alpha)$.
    \item Similarly, if $\nabla_{r \mapsto \alpha}^{\sigma} > 0$, $r$ must be a support edge; otherwise, it would contradict Proposition~\ref{proposition_direct_influence}. By edge-monotonicity, setting $w(r)=0$ weakens the support, so $\sigma(\alpha) \geq \sigma_{w'}(\alpha)$.
\end{itemize}
Let us now consider the \textbf{indirect} case. 
\begin{itemize}
    \item If $\nabla_{r \mapsto \alpha}^{\sigma} < 0$, then one of the following must hold: 
    
    1. $r\in\mathcal{R}^{+}$ and $r$ passes through odd number of attacks; or
    
    2. $r\in\mathcal{R}^{-}$ and $r$ passes through even number of attacks.
    
    Suppose, for contradiction, that either (i) $r\in\mathcal{R}^{+}$ and $r$ passes through even number of attacks; or (ii) $r\in\mathcal{R}^{-}$ and $r$ passes through odd number of attacks. In both cases, by Proposition~\ref{proposition_sign_correct_indirect}, we have $\nabla_{r \mapsto \alpha}^{\sigma} \geq 0$, which contradicts the assumption. Thus, only the two listed cases are possible.
    
    In either case, decreasing of $w(r)$ will not increase the strength of its direct child argument (by edge-monotonicity), and due to the sign of influence propagation through the path, this results in a non-decreasing effect on $\sigma(\alpha)$ (by monotonicity). Hence, $\sigma(\alpha) \leq \sigma_{w'}(\alpha)$.

    \item Similarly, if $\nabla_{r \mapsto \alpha}^{\sigma} > 0$, we can prove analogously that $\sigma(\alpha) \geq \sigma_{w'}(\alpha)$.
\end{itemize}
\end{proof}

\begin{proposition}
Let $r$ be a direct or indirect edge with respect to $\alpha$.
$\nabla_{r \mapsto \alpha}^{\sigma}$ satisfies qualitative invariability
if $\sigma$ satisfies both edge-monotonicity and monotonicity.
\end{proposition}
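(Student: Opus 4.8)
The plan is to leverage the sign characterisations already established in Propositions~\ref{proposition_direct_influence} and~\ref{proposition_sign_correct_indirect}, observing that the sign of $\nabla_{r \mapsto \alpha}^{\sigma}$ is governed entirely by \emph{structural} features of $r$ that are invariant under changes to $w(r)$ itself. Specifically, for a direct edge the sign depends only on whether $r \in \mathcal{R}^{+}$ or $r \in \mathcal{R}^{-}$; for an indirect edge it depends only on the polarity of $r$ together with the parity $\lambda$ of the number of attacks on the unique path from $r$ to $\alpha$. Since the edge type (direct/indirect, as per Definition~\ref{def_edge_type}) and the attack-parity along a path are determined by the relation sets $\mathcal{R}^{-}, \mathcal{R}^{+}$ and not by the weight function $w$, none of these quantities are altered when we replace $w(r)$ by an arbitrary $\delta \in [0,1]$.

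I would argue the direct case first. Suppose $\nabla_{r \mapsto \alpha}^{\sigma} < 0$. By the contrapositive of item~1 of Proposition~\ref{proposition_direct_influence}, $r$ cannot be a support edge, hence $r \in \mathcal{R}^{-}$. Now fix any $\delta \in [0,1]$ and consider the weight function that agrees with $w$ except that it sets $w(r) = \delta$. Because $r$ remains a direct attack edge w.r.t.\ $\alpha$ under this new weight function, item~2 of Proposition~\ref{proposition_direct_influence} applies and yields $\nabla_{\delta} \leq 0$. The case $\nabla_{r \mapsto \alpha}^{\sigma} > 0$ is symmetric, forcing $r \in \mathcal{R}^{+}$ and giving $\nabla_{\delta} \geq 0$ for all $\delta$ via item~1.

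The indirect case is analogous but invokes Proposition~\ref{proposition_sign_correct_indirect}. From the sign of $\nabla_{r \mapsto \alpha}^{\sigma}$ I would first read off which of its four cases holds; e.g.\ $\nabla_{r \mapsto \alpha}^{\sigma} < 0$ forces either ($r \in \mathcal{R}^{+}$, $\lambda$ odd) or ($r \in \mathcal{R}^{-}$, $\lambda$ even), by the contrapositives of the two remaining cases. Since both the polarity of $r$ and the parity $\lambda$ are unchanged when $w(r)$ is set to $\delta$, the same case of Proposition~\ref{proposition_sign_correct_indirect} applies to the perturbed framework, delivering $\nabla_{\delta} \leq 0$ for every $\delta \in [0,1]$; the positive case follows symmetrically.

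The one subtlety to handle carefully is the behaviour at the boundary values $\delta = 0$ and $\delta = 1$, where the admissible perturbations in Definition~\ref{def_grae} are one-sided (only $\varepsilon > 0$ when $\delta = 0$, only $\varepsilon < 0$ when $\delta = 1$). I expect this to be the main point to verify rather than a genuine obstacle: the proofs of Propositions~\ref{proposition_direct_influence} and~\ref{proposition_sign_correct_indirect} establish the required sign separately from each side, so the one-sided derivative at a boundary still inherits the correct sign. Consequently $\nabla_{\delta}$ is well-defined and sign-consistent for all $\delta \in [0,1]$, which is precisely what qualitative invariability demands.
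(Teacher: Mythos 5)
Your proof is correct and follows essentially the same route as the paper's: both determine the polarity (and, in the indirect case, attack-parity) of $r$ from the sign of $\nabla_{r \mapsto \alpha}^{\sigma}$ via the contrapositives of Propositions~\ref{proposition_direct_influence} and~\ref{proposition_sign_correct_indirect}, and then observe that those propositions apply for any value of $w(r)$ since the relevant structural features are weight-independent (the paper simply imports the contrapositive step from its proof of Proposition~\ref{proposition_counterfactuality_direct} rather than re-deriving it). Your explicit treatment of the one-sided perturbations at $\delta = 0$ and $\delta = 1$ is a careful addition the paper leaves implicit, and it is handled correctly.
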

\begin{proof}
Let us first consider the \textbf{direct} case. From the proof of Proposition~\ref{proposition_counterfactuality_direct}, we know that:
\begin{itemize}
    \item $\nabla_{r \mapsto \alpha}^{\sigma} < 0 \Rightarrow r \in \mathcal{R}^{-} \text{ is a direct attack}$;
    \item $\nabla_{r \mapsto \alpha}^{\sigma} > 0 \Rightarrow r \in \mathcal{R}^{+} \text{ is a direct support}$.
\end{itemize}
Moreover, by Proposition~\ref{proposition_direct_influence}, we have:

1. If $r \in \mathcal{R^{+}}$, then $\nabla_{r \mapsto \alpha}^{\sigma} \geq 0$;

2. If $r \in \mathcal{R^{-}}$, then $\nabla_{r \mapsto \alpha}^{\sigma} \leq 0$,\\
where these inequalities hold independently of the specific value of $w(r)$.

Let us now consider the \textbf{indirect} case. 
From the proof of Proposition~\ref{proposition_counterfactuality_direct}, we know that if $\nabla_{r \mapsto \alpha}^{\sigma} < 0$, then one of the following must hold: 
    
    1. $r\in\mathcal{R}^{+}$ and $r$ passes through odd number of attacks; or
    
    2. $r\in\mathcal{R}^{-}$ and $r$ passes through even number of attacks.

    Moreover, by Proposition~\ref{proposition_sign_correct_indirect}, we know $\nabla_{r \mapsto \alpha}^{\sigma} \leq 0$ in either case, where the inequality hold independently of the specific value of $w(r)$.

    Similarly, if $\nabla_{r \mapsto \alpha}^{\sigma} > 0$, we have $\nabla_{r \mapsto \alpha}^{\sigma} \geq 0$ regardless of the specific value of $w(r)$.
    
\end{proof}

\begin{proposition}[Tractability]
Let $|\mathcal{A}|=m$ and $|\mathcal{R}|=n$, then G-RAEs can be generated in linear time $\mathcal{O}(m+n)$ for acyclic EW-QBAF.
\end{proposition}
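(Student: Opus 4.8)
The plan is to recognise that computing all G-RAEs with respect to a fixed topic argument $\alpha$ is exactly a reverse-mode automatic differentiation (backpropagation) problem, and that backpropagation shares the asymptotic cost of the forward evaluation. Since the forward evaluation of strengths in an acyclic EW-QBAF is a single pass in topological order running in $\mathcal{O}(m+n)$ time (by the equivalence with a forward pass established via \cite{Potyka19} and exploited in Lemma~\ref{lemma_differentiability}), the gradient computation will also be $\mathcal{O}(m+n)$, which is the bound to be shown.

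First I would fix a topological ordering of $\mathcal{A}$, which is computable in $\mathcal{O}(m+n)$, and perform the forward pass, caching for each argument $\gamma$ its strength $\sigma(\gamma)$, the aggregate $\agg(A^\sigma_\gamma, S^\sigma_\gamma)$, and the edge-weighted parent contributions $\sigma(\beta)\cdot w((\beta,\gamma))$. By Lemma~\ref{lemma_differentiability}, each local map (the weighting $\sigma(\beta)\cdot w$, the aggregation $\agg$, and the influence $\infl$) is differentiable, so all local partial derivatives are well-defined and can be read off from the cached quantities.

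Next I would run the backward pass. I maintain an adjoint $\delta_\gamma = \partial \sigma(\alpha)/\partial \sigma(\gamma)$ for each argument, initialised by $\delta_\alpha = 1$ and $\delta_\gamma = 0$ otherwise, and I process arguments in reverse topological order. When processing an argument $c$, for each incoming edge $r=(\gamma,c)$ I propagate the adjoint by the chain rule through $\infl$ and $\agg$, updating $\delta_\gamma \leftarrow \delta_\gamma + \delta_c \cdot \partial\sigma(c)/\partial\sigma(\gamma)$, and I simultaneously read off the G-RAE as $\nabla_{r\mapsto\alpha}^{\sigma} = \partial\sigma(\alpha)/\partial w(r) = \delta_c \cdot \partial\sigma(c)/\partial w(r)$. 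Because the graph is a DAG and edges are processed in reverse topological order, every adjoint $\delta_\gamma$ is fully accumulated before $\gamma$ is itself processed, so a single sweep suffices and each edge is touched exactly once.

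The main obstacle is the accounting that guarantees the bound is genuinely linear rather than merely polynomial. The only place where per-edge work is not obviously constant is the derivative of product-aggregation $\agg_\Pi$, where the partial derivative with respect to one factor depends on all the other factors; here I would note that the full product at each argument is already computed once in the forward pass, after which each factor's partial derivative is obtained in constant amortised time (for instance via prefix and suffix products, which also sidesteps division by a zero factor). Summing the resulting work over all arguments and their incoming edges gives $\sum_{\gamma \in \mathcal{A}} \bigl(1 + |\mathcal{R}(\gamma)|\bigr) = m + n = \mathcal{O}(m+n)$, which establishes the claim.
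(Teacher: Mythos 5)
Your proof is correct and takes essentially the same route as the paper's: a topological sort in $\mathcal{O}(m+n)$ followed by a backward (reverse-mode) propagation of derivatives along the reversed topological ordering, which the paper justifies simply by citing the standard linear-time backpropagation result for multilayer perceptrons. Your write-up is in fact more self-contained, since you make the adjoint recursion explicit and address the one step where constant per-edge work is not immediate (the partial derivatives of product aggregation, handled via prefix/suffix products), a detail the paper's citation absorbs implicitly.
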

\begin{proof}
To compute the G-RAE for a given topic argument $\alpha$, we first perform a topological sort of the EW-QBAF. Since the graph is acyclic, this can be done in linear time $\mathcal{O}(m+n)$.
The G-RAE corresponds to the partial derivative of the strength of the topic argument with respect to the weight of the influence edge and can be computed by a backward propagation procedure used in training multilayer perceptrons in linear time~\cite{dlbook2016}. (Instead of inverting the direction from the input neurons to the output neurons, we invert the direction given by the topological ordering).
\end{proof}

\begin{proposition}[G-RAE Approximation Complexity]
Let $|\mathcal{A}|=m$ and $|\mathcal{R}|=n$, then all approximate G-RAEs can be generated in time $\mathcal{O}(n \cdot (m+n))$ for acyclic EW-QBAFs.
\end{proposition}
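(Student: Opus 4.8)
The plan is to analyze Algorithm~\ref{algo_GRAE} directly, bounding the cost of its single loop over all edges. The key observation is that the algorithm computes approximate G-RAEs by finite differences: for each edge $r \in \mathcal{R}$, it perturbs $w(r)$, recomputes the strength $\sigma(\alpha)$ on the perturbed EW-QBAF, and then restores the weight. The perturbation (line 4), the subtraction and division (line 6), and the restoration (line 7) are all constant-time operations, so the dominant cost per iteration is the single strength recomputation in line 5.

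The first step is to establish the cost of one strength evaluation $\sigma(\alpha)$ on an acyclic EW-QBAF. Here I would invoke the fact already recalled in the preliminaries: for acyclic QBAFs the iterative update procedure is equivalent to a single forward pass following a topological ordering of the arguments, and strength values can be computed in linear time \cite{Potyka19}. Concretely, computing a topological ordering takes $\mathcal{O}(m+n)$, and the forward pass visits each argument once and processes each incoming edge once when aggregating attacker and supporter contributions, giving total work $\mathcal{O}(m+n)$ for one evaluation of all strengths (and in particular $\sigma(\alpha)$).

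The second step is simply to multiply by the number of loop iterations. Since the \textbf{for} loop in Algorithm~\ref{algo_GRAE} ranges over all edges $r \in \mathcal{R}$, it executes exactly $n = |\mathcal{R}|$ times, and each iteration performs one $\mathcal{O}(m+n)$ strength recomputation plus constant-time bookkeeping. Hence the total running time is $\mathcal{O}(n \cdot (m+n))$, which is the claimed bound.

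I do not anticipate a genuine obstacle here, since the argument is essentially a routine per-iteration cost analysis riding on the already-established linear-time forward pass. The only point requiring mild care is justifying that recomputing $\sigma(\alpha)$ after a single-edge perturbation still costs $\mathcal{O}(m+n)$ rather than attempting any incremental shortcut: the algorithm as written recomputes the full forward pass each time, so the naive linear bound applies directly and no amortization or localized-recomputation argument is needed.
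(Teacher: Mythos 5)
Your proof is correct and matches the paper's own argument: both bound each of the $n$ per-edge iterations by one linear-time $\mathcal{O}(m+n)$ strength recomputation (via the single forward pass over a topological ordering from \cite[Proposition 3.1]{Potyka19}) and multiply. The extra detail you give about constant-time perturbation/restoration steps is harmless elaboration of the same reasoning.
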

\begin{proof}
Approximating G-RAEs requires computing the strength values of arguments. In acyclic QBAFs, these values can be computed in linear time $\mathcal{O}(m+n)$~\cite[Proposition 3.1]{Potyka19}. Since strength values must be recomputed for each edge weight perturbation and there are $n$ edges, the overall time complexity is $\mathcal{O}(n \cdot (m+n))$.
\end{proof}

\begin{proposition}[Contestability Algorithm Complexity]
Let $|\mathcal{A}|=m$ and $|\mathcal{R}|=n$, and suppose the maximum number of iterations is $M$. Then the time complexity of Algorithm~\ref{algo_contest} is $\mathcal{O}(M \cdot n \cdot (m+n))$ for acyclic EW-QBAFs.
\end{proposition}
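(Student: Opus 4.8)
The plan is to bound the running time of Algorithm~\ref{algo_contest} by a straightforward line-by-line accounting, then multiply the per-iteration cost by the number of loop iterations. Because the algorithm enforces an explicit iteration cap $M$ (the guard $\texttt{M} > \texttt{0}$ on line 3 together with the decrement on line 10), no convergence analysis is required: the worst case is simply that the \textbf{while} loop executes $M$ times, and we reason about that worst case directly.

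First I would dispose of the one-off costs incurred outside the loop. Line 1 copies the edge weight function in $\mathcal{O}(n)$, and line 2 computes $\sigma(\alpha)$ once; for acyclic EW-QBAFs a single strength evaluation is a forward pass along a topological ordering and so costs $\mathcal{O}(m+n)$ \cite[Proposition 3.1]{Potyka19}. Both are dominated by the body of the loop and therefore contribute nothing to the asymptotic bound.

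Next I would bound the cost of a single iteration of the \textbf{while} loop. The dominant step is the call to $gRAE$ on line 4, which by Proposition~\ref{proposition_approx_GRAEs} costs $\mathcal{O}(n \cdot (m+n))$, since it recomputes the strength of $\alpha$ once for each of the $n$ perturbed edges. The inner \textbf{for} loop on lines 5--8 performs a constant amount of work per edge (a multiply--add followed by a clamp via $\max$ and $\min$), giving $\mathcal{O}(n)$; the strength recomputation on line 9 is again $\mathcal{O}(m+n)$; and the decrement on line 10 is $\mathcal{O}(1)$. Each of these is dominated by the $gRAE$ call, so a single iteration costs $\mathcal{O}(n \cdot (m+n))$.

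Finally, multiplying the per-iteration cost by the at-most-$M$ iterations yields $\mathcal{O}(M \cdot n \cdot (m+n))$, and re-incorporating the dominated initialization leaves the bound unchanged. I do not expect a genuine obstacle here, as the argument is pure bookkeeping; the only point requiring care is to verify that every intra-iteration operation — in particular the per-edge weight updates and the lone strength recomputation — is indeed subsumed by the cost of the G-RAE approximation, so that the gradient estimation step alone governs the asymptotic complexity.
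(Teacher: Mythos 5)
Your proposal is correct and follows essentially the same argument as the paper's proof: bound the loop by the explicit cap $M$, charge each iteration $\mathcal{O}(n \cdot (m+n))$ for the $gRAE$ call via Proposition~\ref{proposition_approx_GRAEs}, and note that the $\mathcal{O}(n)$ edge updates and the $\mathcal{O}(m+n)$ strength recomputation are dominated by it. Your additional accounting for the one-off initialization costs is a harmless refinement the paper omits.
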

\begin{proof}
The algorithm requires at most $M$ iterations. In each iteration, approximating G-RAEs takes $\mathcal{O}(n \cdot (m+n))$ time as stated in Proposition~\ref{proposition_approx_GRAEs}. After that, the algorithm updates all $n$ edges in $\mathcal{O}(n)$ time and recomputes the strength in $\mathcal{O}(m+n)$ time as shown in the proof of Proposition~\ref{proposition_approx_GRAEs}. Therefore, the overall time complexity of Algorithm~\ref{algo_contest} is $\mathcal{O}(M \cdot n \cdot (m+n))$.
\end{proof}

\section{Evaluations}

\paragraph{Hardware Specifications}
We ran all experiments on a Windows PC (OS: Windows 10 Enterprise, Version: 22H2, 64-bit operating system; Processor: AMD Ryzen 7 PRO 3700 8-Core Processor, 3.59 GHz; Memory: 16.0 GB).

\setcounter{table}{0}

\begin{table*}[th]
\small
\centering
\caption{Validity, median and average runtime (in seconds), and number of attempts (average and maximum) under different semantics and edge sizes.}
\label{tab111}
\begin{tabular}{clrrrrrrrrrr}
\toprule
\textbf{Semantics} & \textbf{\# Edges} & \textbf{10} & \textbf{20} & \textbf{30} & \textbf{40} & \textbf{50} & \textbf{60} & \textbf{70} & \textbf{80} & \textbf{90} & \textbf{100} \\
\midrule

&validity         & 100\% & 100\% & 100\% & 100\% & 100\% & 100\% & 100\% & 100\% & 100\% & 100\% \\
&attempts (avg)   & 1.00 & 1.00 & 1.03 & 1.00 & 1.00 & 1.00 & 1.00 & 1.02 & 1.02 & 1.03 \\
QE&attempts (max)   & 1    & 1    & 3    & 1    & 1    & 1    & 1    & 2    & 3    & 4    \\
&runtime (median) & $<0.01$ & 0.01 & 0.02 & 0.03 & 0.04 & 0.07 & 0.06 & 0.08 & 0.11 & 0.11 \\
&runtime (avg)    & 0.01 & 0.04 & 0.13 & 0.07 & 0.13 & 0.31 & 0.21 & 1.04 & 0.95 & 1.27 \\

\midrule

&validity         & 100\% & 100\% & 100\% & 100\% & 100\% & 100\% & 100\% & 100\% & 100\% & 100\% \\
&attempts (avg)   & 1.00 & 1.00 & 1.00 & 1.00 & 1.00 & 1.00 & 1.00 & 1.00 & 1.00 & 1.00 \\
REB&attempts (max)   & 1    & 1    & 1    & 1    & 1    & 1    & 1    & 1    & 1    & 1    \\
&runtime (median) & 0.01 & 0.01 & 0.01 & 0.02 & 0.01 & 0.05 & 0.05 & 0.07 & 0.07 & 0.03 \\
&runtime (avg)    & 0.01 & 0.01 & 0.02 & 0.03 & 0.03 & 0.06 & 0.07 & 0.08 & 0.11 & 0.12 \\

\midrule

&validity         & 100\% & 100\% & 100\% & 100\% & 100\% & 100\% & 100\% & 100\% & 100\% & 100\% \\
&attempts (avg)   & 1.00 & 1.06 & 1.02 & 1.01 & 1.00 & 1.01 & 1.01 & 1.01 & 1.02 & 1.00 \\
DF-QuAD&attempts (max)   & 1    & 4    & 3    & 2    & 1    & 2    & 2    & 2    & 3    & 1    \\
&runtime (median) & 0.01 & 0.01 & 0.02 & 0.03 & 0.04 & 0.06 & 0.07 & 0.08 & 0.10 & 0.10 \\
&runtime (avg)    & 0.01 & 0.06 & 0.07 & 0.07 & 0.05 & 0.15 & 0.33 & 0.27 & 0.52 & 0.17 \\

\bottomrule
\end{tabular}
\end{table*}

\begin{table*}[!thb]
\centering
\small
\caption{Validity, median and average runtime (in seconds), and number of attempts (average and maximum) under different connection probabilities for varying MLP-like structures.}
\label{tab222}
\begin{tabular}{clrrrrrrrrrr}
\toprule
\textbf{Structure} & \textbf{Connect\_prob} & \textbf{0.1} & \textbf{0.2} & \textbf{0.3} & \textbf{0.4} & \textbf{0.5} & \textbf{0.6} & \textbf{0.7} & \textbf{0.8} & \textbf{0.9} & \textbf{1.0} \\

\midrule

&validity         & 100\% & 100\% & 100\% & 100\% & 100\% & 100\% & 100\% & 100\% & 100\% & 100\% \\
&attempts (avg)   & 1.00 & 1.00 & 1.00 & 1.00 & 1.00 & 1.00 & 1.01 & 1.00 & 1.00 & 1.00 \\
$[8,32,1]$&attempts (max)   & 1    & 1    & 1    & 1    & 1    & 1    & 2    & 1    & 1    & 1    \\
&runtime (median) & 0.02 & 0.06 & 0.12 & 0.17 & 0.28 & 0.41 & 0.53 & 0.63 & 0.77 & 0.85 \\
&runtime (avg)    & 0.02 & 0.08 & 0.17 & 0.31 & 0.45 & 0.58 & 1.30 & 1.17 & 0.84 & 1.61 \\

\midrule

&validity         & 100\% & 100\% & 100\% & 100\% & 100\% & 100\% & 100\% & 100\% & 100\% & 100\% \\
&attempts (avg)   & 1.00 & 1.00 & 1.00 & 1.00 & 1.00 & 1.00 & 1.00 & 1.00 & 1.00 & 1.00 \\
$[8,32,16,1]$&attempts (max)   & 1    & 1    & 1    & 1    & 1    & 1    & 1    & 1    & 1    & 1    \\
&runtime (median) & 0.08 & 0.26 & 0.55 & 1.05 & 1.80 & 2.08 & 2.51 & 3.69 & 4.49 & 5.81 \\
&runtime (avg)    & 0.09 & 0.46 & 1.17 & 1.94 & 2.31 & 4.11 & 3.88 & 4.59 & 5.14 & 6.63 \\

\midrule

&validity         & 100\% & 100\% & 100\% & 100\% & 100\% & 100\% & 100\% & 100\% & 100\% & 100\% \\
&attempts (avg)   & 1.00 & 1.00 & 1.00 & 1.00 & 1.00 & 1.00 & 1.00 & 1.00 & 1.00 & 1.00 \\
$[8,32,16,8,1]$&attempts (max)   & 1    & 1    & 1    & 1    & 1    & 1    & 1    & 1    & 1    & 1    \\
&runtime (median) & 0.01 & 0.27 & 0.48 & 1.06 & 1.79 & 2.50 & 3.50 & 4.58 & 5.98 & 6.44 \\
&runtime (avg)    & 0.06 & 0.39 & 0.92 & 2.53 & 2.23 & 3.95 & 6.97 & 12.50 & 9.41 & 7.95 \\

\bottomrule
\end{tabular}
\end{table*}

\end{document}